\documentclass{article} % For LaTeX2e
\usepackage{iclr2027_conference,times}

\PassOptionsToPackage{table,xcdraw,dvipsnames}{xcolor}
\usepackage{iclr2027_conference,times}

\usepackage[table]{xcolor}
\usepackage{xcolor}
\usepackage{graphicx}
\usepackage{amsmath}
\usepackage{amssymb}
\usepackage{amsthm}
\usepackage{wrapfig}
\usepackage{caption}
\usepackage{subcaption}
\usepackage{wasysym}
\usepackage{mathrsfs}
\usepackage{cancel}
\usepackage[mathcal]{euscript}
\usepackage{bbm}
\usepackage{lipsum}
\usepackage{algorithm}
\usepackage{algpseudocode}
\usepackage{xspace}
\usepackage[section]{placeins}
\usepackage{float}
\usepackage{makecell}
\usepackage[normalem]{ulem}
\usepackage{tabularx}
\usepackage{booktabs}
\usepackage{array}
\usepackage{etoolbox}
\usepackage{multirow}
\usepackage{xfrac}
\usepackage{outlines}
\usepackage{enumitem}
\usepackage{fontawesome5}
\usepackage[most]{tcolorbox}
\usepackage{url}
\usepackage[hidelinks]{hyperref}
\usepackage[capitalise]{cleveref}

\definecolor{citecolor}{HTML}{2E8B6E}

\usepackage{hyperref}
\hypersetup{
    colorlinks=true,
    citecolor=citecolor,
    linkcolor=black,
    urlcolor=black
}

\newtcolorbox{conditionbox}{
    colback=black!4,
    colframe=black!4,
    boxrule=0pt,
    arc=0pt,
    left=4pt,
    right=4pt,
    top=3pt,
    bottom=3pt,
    boxsep=0pt,
    before skip=4pt,
    after skip=4pt,
    width=\columnwidth,
    breakable,
}

\algrenewcommand\algorithmicrequire{\textbf{Require:}}
\algrenewcommand\algorithmicensure{\textbf{Ensure:}}

\newcolumntype{Y}{>{\centering\arraybackslash}X}

\setenumerate[2]{label=\alph*.}
\setenumerate[3]{label=\roman*.}

\crefname{equation}{Eq.}{Eqs.}
\crefname{figure}{Fig.}{Figs.}
\crefname{table}{Tab.}{Tabs.}
\crefname{section}{Sec.}{Secs.}
\crefname{algorithm}{Alg.}{Algs.}
\crefname{appendix}{Appx.}{Appxs.}
\crefname{thm}{Thm.}{Thms.}
\crefname{cor}{Cor.}{Cors.}
\crefname{lemma}{Lem.}{Lems.}
\crefname{problem}{Prob.}{Probs.}

\usepackage{amsmath,amsfonts,bm}

\def\eqref#1{equation~\ref{#1}}
\def\1{\bm{1}}

\def\va{{\bm{a}}}

\def\vg{{\bm{g}}}

\def\vo{{\bm{o}}}
\def\vp{{\bm{p}}}

\def\vx{{\bm{x}}}
\def\vy{{\bm{y}}}
\def\vz{{\bm{z}}}

\DeclareMathAlphabet{\mathsfit}{\encodingdefault}{\sfdefault}{m}{sl}
\SetMathAlphabet{\mathsfit}{bold}{\encodingdefault}{\sfdefault}{bx}{n}

\def\sA{{\mathbb{A}}}

\def\sX{{\mathbb{X}}}

\def\sZ{{\mathbb{Z}}}

\newcommand{\E}{\mathbb{E}}

\newcommand{\R}{\mathbb{R}}

\DeclareMathOperator*{\argmin}{arg\,min}

\newtheorem{defn}{Definition}

\newtheorem{prop}[defn]{Proposition}
\newtheorem{assum}[defn]{Assumption}

\newtheorem{thm}[defn]{Theorem}
\newtheorem{cor}[defn]{Corollary}

\providecommand{\R}{\ensuremath \mathbb{R}}

\newcommand{\rank}{r}

\newcommand{\policy}{\bm{\pi}_\theta}

\newcommand{\sg}{\mathrm{sg}} % conceptual stop-gradient operator

\newcommand{\traindist}{P_{\Train}}
\newcommand{\testdist}{P_{\Test}}
\newcommand{\traindiststate}{\traindist^{X}}
\newcommand{\testdiststate}{\testdist^{X}}
\newcommand{\Train}{\mathrm{tr}}
\newcommand{\Test}{\mathrm{te}}

\newcommand{\statespace}{\sX}
\newcommand{\actspace}{\sA}
\newcommand{\chunkspace}{\sA_H}
\newcommand{\latspace}{\sZ}

\newcommand{\DemoData}{\mathcal D}
\newcommand{\expertdist}{q^\star}
\newcommand{\distrisk}{\mathcal R^{\mathrm{dist}}}

\newcommand{\obsenc}{\phi}              % Change only this line to use \psi.
\newcommand{\flowfield}{v_\theta}

\newcommand{\rhotrain}{\rho_{\Train}}
\newcommand{\rhotest}{\rho_{\Test}}
\newcommand{\anchorparam}{\psi}       % Distinct from the encoder symbol.
\newcommand{\anchorpolicy}{\rho_{\anchorparam}}

\newcommand{\ancmark}{\mathsf \dagger}
\newcommand{\anc}[1]{#1^{\ancmark}}

\newcommand{\QryEmb}{\Xi}

\newcommand{\RelDelta}{\Delta}

\newcommand{\reldelta}{\bm\delta}
\newcommand{\querylatent}{\vz}
\newcommand{\anchorlatent}{\vz^{\ancmark}}
\newcommand{\AnchorLatent}{Z^{\ancmark}}

\newcommand{\state}{\vx}
\newcommand{\act}{\va}
\newcommand{\achunk}{\bm{A}}
\newcommand{\obs}{\vo}
\newcommand{\goal}{\vg}

\newcommand{\actdim}{d_a}
\newcommand{\chunkdim}{d_A}
\newcommand{\flowstate}{\bm{\achunk}}
\newcommand{\flowtime}{\tau}

\newcommand{\trainpair}{\overline D_{\Train}}
\newcommand{\testpair}{\overline D_{\Test}}

\definecolor{reqgreen}{HTML}{2E8B6E}

\newcommand{\reqtag}[1]{%
  \tikz[baseline=(tag.base)]{
    \node[
      fill=reqgreen,
      fill opacity=0.20,
      text opacity=1,
      inner xsep=2pt,
      inner ysep=0.5pt
    ] (tag) {#1};
  }%
}

\newcommand{\structurereq}{\reqtag{\textbf{(S) Bilinear Structure}}}
\newcommand{\coveragereq}{\reqtag{\textbf{(C) Residual Coverage}}}

\newcommand{\flowrisk}[1]{\mathcal E_{#1}^{\mathrm{flow}}}

\newcommand{\Law}{\operatorname{Law}}

\newcommand{\ours}{\textsc{BFP}\xspace}
\newcommand{\brp}{\textsc{BT}}
\newcommand{\flow}{\textsc{Flow}}
\newcommand{\cta}{\textsc{CTA}}
\newcommand{\darp}{\textsc{DARP}}
\newcommand{\edp}{\textsc{EquiDP}}

\title{Bilinear Flow Policy: \\
Distributional Extrapolation for \\
Goal-Conditioned Visuomotor Imitation
\vspace{0.6em}
}

\author{
\vspace{0.4em}
\begin{tabular}{c}
Wonsuhk Jung$^{1}$ \qquad
Sundhar Vinodh Sangeetha$^{1}$ \qquad
Chen Xu$^{2}$ \\[3pt]
Abhishek Gupta$^{3}$ \qquad
Masha Itkina$^{2}$ \qquad
Shreyas Kousik$^{1}$ \qquad
Haruki Nishimura$^{2}$ \\[6pt]
\normalfont\mdseries\small
$^{1}$Georgia Institute of Technology
\qquad
$^{2}$Toyota Research Institute
\qquad
$^{3}$University of Washington \\[2pt]
\footnotesize
\texttt{https://bilinearflow.netlify.app}
\end{tabular}
\vspace{-1.6em}
}

\newcommand{\sr}[2]{#1{\scriptstyle\,\pm\,#2}}
\definecolor{avgblue}{RGB}{232,243,255}

\iclrfinalcopy % Uncomment for camera-ready version, but NOT for submission.
\begin{document}

\maketitle

\begin{abstract}

% motivating sentence of goal-condi
Goal-conditioned imitation learning (GCIL) with flow matching is a promising framework that can represent multimodal behaviors while adapting to diverse, user-specified goals, %through multiple valid behaviors,
% Yet the imitation objective directly supervises only demonstrated situations; pursuing a goal outside the demonstrated range can take the robot into new situations for which no action behavior was shown.
yet often fails when goals lie outside the demonstration support.
% The central challenge is \textit{distributional extrapolation}, where policies must generalize to unseen visual observation-goal pairs without collapsing multiple valid behaviors present in the training data.
To extrapolate to such unseen goals without collapsing multimodality---a problem we call \emph{distributional extrapolation}---we introduce \emph{Bilinear Flow Policy} (\ours{}), 
%We study this problem in visuomotor GCIL, where a policy must generalize to unseen goals from visual demonstrations while preserving behavioral diversity.
%While extrapolative generalization remains challenging in general, we can often achieve decouple a novel observation-goal pair into a known training example and an in-distribution additive relation between that training example and the given pair.
% A natural route to extrapolation is to learn, from comparable example pairs in the demonstrations, how differences in observation and goal translate into changes in behavior.
% For an unseen goal, the policy can then compare its current input with a related training example and reuse that learned relation to infer the required behavior.
%While prior work demonstrated the effectiveness of this transductive, retrieval-based learning in low-dimensional, priviledged state space, doing this in the visuomotor setting is difficult: it is unclear which training examples to compare, image observations are high-dimensional unlike compact states, and the same goal may admit multiple valid behaviors.
a generative visuomotor policy that combines transductive retrieval with a bilinear conditional flow.
Given an unseen observation-goal pair, \ours{} retrieves an ``anchor" training example and transductively reformulates the unseen pair as this familiar anchor plus a residual term. 
For this decomposition to guide action prediction, the residual must compactly encode how the current observation-goal pair differs from the anchor, and the anchor must be chosen so that this difference is predictive of the corresponding action distribution.
% For this decomposition to guide action prediction, anchor must provide a relevant reference and the residual must compactly capture how the current observation and goal differ from it. 
BFP achieves this with pretrained visual features and a novel learned anchor-selection algorithm.
% Our novel, learnable anchor-selection algorithm retrieves an anchor such that the corresponding residual is still well supported by the training data. 
The novel bilinear flow then models how the anchor and the residual jointly determine the multimodal action distribution. %allowing relationships learned from familiar combinations to be extrapolated to novel ones at unseen goals.
%which selects useful reference examples, encodes their difference in a compact visuomotor space, and transfers the learned relation through a bilinear conditional flow over multimodal actions.
We prove that, for bilinear flow under suitable assumptions, action distribution error at unseen goals is bounded by the in-distribution flow-matching error up to problem-dependent factors.
%connecting in-distribution flow error to error at unseen goals.
Across five manipulation tasks in simulation, \ours{} achieves $2.63\times$ the out-of-distribution success rate of a 
% matched generative 
GCIL policy and $1.36\times$ that of the strongest extrapolation-targeted baseline. 
On two real-world tasks, \ours{} improves over GCIL by $32\%$.
Finally, our theory yields practical, pre-deployment diagnostics for predicting which trained policies will extrapolate well and to which unseen goals.

% Goal-conditioned imitation learning (GCIL) policies enable a single policy to reuse demonstrations across many desired outcomes, but their behavior for goals beyond demonstration support receives no direct supervision, making extrapolation unreliable.
% We introduce \emph{Bilinear Flow Policy} (\ours{}), a generative goal-conditioned visuomotor policy that extrapolates to unseen goals by relating new observation-goal conditions to demonstrated ones.
% Unlike existing transductive learning methods that always output deterministic action predictions, \ours{} natively support conditional flow matching for action generation, enabling structured extrapolation of multimodal action distributions.
% For visuomotor inputs, we further use a compact representation and learned anchor policy to construct the relations used for extrapolation.
% We provide a theorem relating in-distribution flow error to extrapolation error.
% Across goal-conditioned manipulation benchmarks, \ours{} achieves $2.35\times$ the out-of-distribution (OOD) success of a matched generative GCIL policy and $1.95\times$ that of the strongest extrapolation-targeted baseline.
% Our analysis further provides practical signals for when unseen-goal extrapolation is probable.
% Across two real-world manipulation tasks, \ours{} improves success rate over the GCIL baseline by an average of $30\%$ percentage points.
\end{abstract}
\section{Introduction}

% should we be confined to GCIL? Or GCP makes sense?
Goal-conditioned imitation learning (GCIL) is a promising framework to learn versatile visuomotor behavior: a single policy can adapt to diverse, user-specified goals at deployment \citep{ding2019goal, park2025ogbench}.
Conditional flow matching further makes this framework expressive by modeling multimodal distributions of actions, allowing the policy to represent multiple behavior modes to achieve the same goal \citep{reuss2023goal, reuss2024multimodal}.
Nevertheless, deployment may require the policy to act toward unseen goals, where it encounters observation--goal pairs not seen during training.
Standard GCIL policies, including generative variants, can become unreliable on such unseen inputs \citep{yang2023essential, urain2024deep}.
% Nevertheless, practical deployment may require goals beyond demonstration support, inducing unseen observations along the resulting trajectory.
% Flow-matching objective provides no direct supervision for these out-of-distribution (OOD) observation-goal combinations, so a policy that fits the demonstrations well may still extrapolate unreliably.
This motivates \emph{distributional extrapolation}: extending goal-conditioned behavior to unseen goals while preserving multimodality.
% Thus, unseen-goal extrapolation requires a generalizable structure for extending demonstrated behavior beyond the data support.

% Prior work obtains unseen-goal generalization by strengthening state-goal supervision or structuring goal-conditioned value functions \citep{yang2023essential,hong2022bilinear}.
One natural route to extrapolation is to identify how behavior should change as the policy input (i.e., current observation and goal) changes, then reuse the same rule to infer behavior for a new input by comparing it with a training example
\citep{ousherovitch2026learning}.
When this relation is known, it can be encoded directly into the policy; for instance, the behavior should remain invariant to task-irrelevant perceptual perturbations \citep{zhu2023viola}, transform equivariantly under known geometric changes \citep{wang2024equivariant}, or be composed of reusable policy components \citep{mishra2023generative}.
These methods generalize by encoding a particular, known deployment shift as an inductive bias. 
They are effective when this prescribed relation matches the nature of the shift, but require the relevant task-specific semantics or transformation laws to be specified a priori.

When the underlying observation-goal-action relation is unknown, an attractive alternative is to \emph{learn} it from comparisons among demonstrations.
Bilinear transduction \citep{netanyahu2023learning} achieves this by decomposing an observation-goal condition $x$ into an ``anchor" training example $a$ and a residual $\delta = x - a$, then constrains the predictor to a low-rank interaction between features of the two factors.
Thus, rather than specifying a task-specific transformation explicitly, the model learns from the data how the anchor and the residual jointly determine behaviors within a generic low-rank relationship.
% the anchor-delta representation and their bilinear interaction are prescribed, while the task-specific effect of a change on behavior is learned from paired examples.
This reparameterization converts an out-of-support query into a potentially unseen combination of individually familiar factors; under low-rank, coverage, and conditioning assumptions, prediction error at unsupported queries is provably bounded by the training error \citep{shah2020sample, simchowitz2023tackling}.
Existing robotic instantiations, however, are state-based and deterministic, leaving open how to construct this decomposition from visuomotor demonstrations and how to extrapolate a multimodal action distribution.

\begin{figure}[t]
    \centering
    \includegraphics[width=0.9\linewidth]{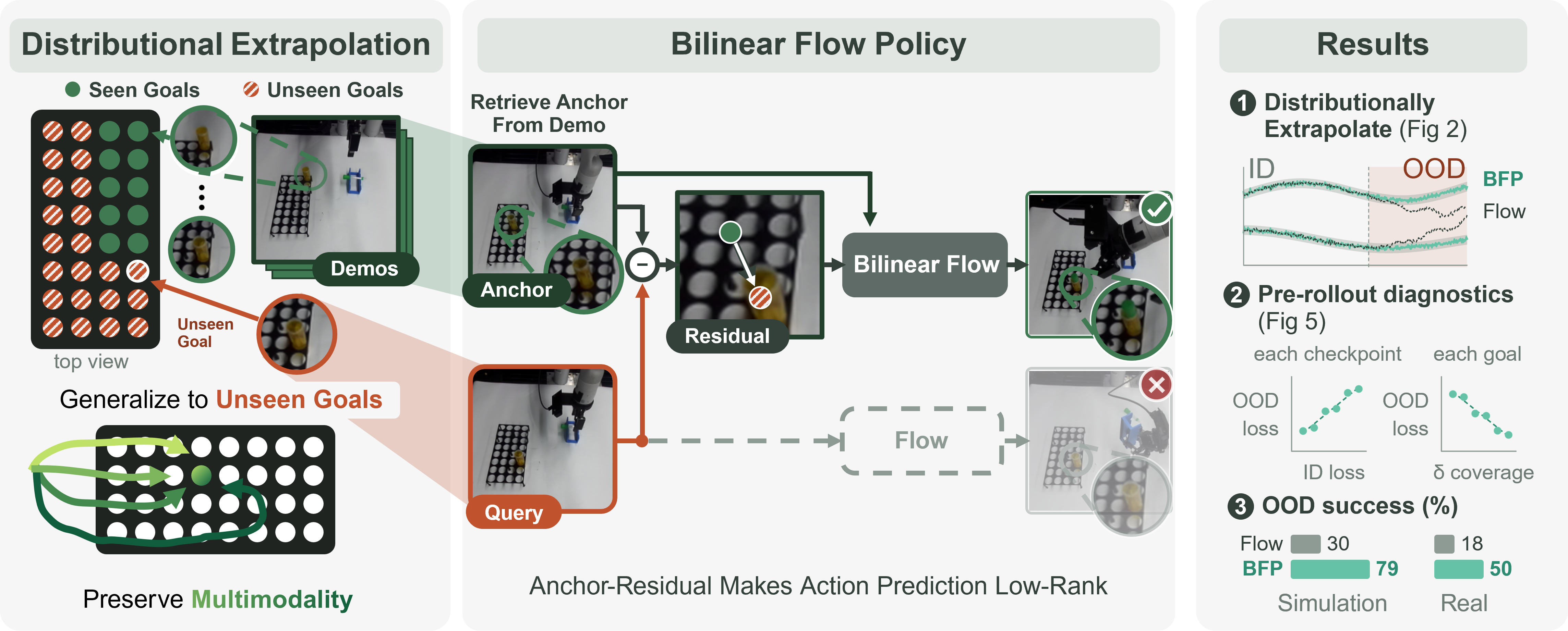}
    \caption{
    \textbf{\ours{} Overview.}
    \ours{} learns how changes across observation-goal conditions affect behavior and reuses this relation to extrapolate from a demonstrated anchor to an unseen goal.
    A bilinear conditional flow generates the resulting multimodal action distribution, while the explicit extrapolation rule enables theoretical guarantees and practical diagnostics.
    }
    \label{fig:teaser}
    \vspace{-1.5em}
\end{figure}

In this work, we ask whether this relational formulation can support generative goal-conditioned visuomotor policies.
We introduce \textit{Bilinear Flow Policy} (BFP), which extrapolates to unseen goals by transductively relating novel observation-goal conditions to demonstrated examples in the training data.
\ours{} applies bilinear transduction to a conditional flow field, extending relational extrapolation from deterministic actions to multimodal action distributions. It also combines a compact visual representation with a learned anchor policy to find generalizable relations in visuomotor settings.

We also derive an end-to-end distributional extrapolation bound that relates in-distribution flow-matching error to the Wasserstein error of the generated action distribution at unseen goals.
The bound's training-error and coverage terms further motivate two pre-deployment diagnostics: held-out flow loss for selecting among trained policies, and anchor--residual coverage for identifying unseen-goal queries on which a selected policy is likely to remain reliable.

We evaluate \ours{} on five goal-conditioned visuomotor manipulation tasks across three simulation benchmarks.
On the simulation tasks, \ours{} achieves $79\%$ average out-of-distribution (OOD) success ($2.63\times$ that of matched GCIL and $1.36\times$ that of the strongest extrapolation-targeted baseline) while retaining comparable in-distribution performance, across 4,200 rollouts.
On the two real-world robotics tasks, \ours{} improves over the matched GCIL baseline by 32 percentage points on average.
% Across hundreds of trained policies, theory-motivated diagnostics further indicate which checkpoints are most generalizable, and for a given checkpoint, which unseen goal conditions the policy can generalize to.

In summary, our contributions are threefold.
\begin{itemize}
    \item We introduce \ours{}, a generative goal-conditioned visuomotor imitation policy learning framework that extrapolates to unseen goals preserving behavior multimodality.
    % bilinear conditional-flow architecture that extrapolates multimodal action-chunk distributions to unseen observation-goal conditions.
    % We make bilinear transduction operational in visuomotor policies through compact pre-trained visual representations and learned anchor selection.

    \item 
    We derive an end-to-end bound relating in-distribution flow error to unseen-goal action-distribution error.
    The theorem motivates practical pre-deployment diagnostics for selecting trained policies and assessing their reliability on specific unseen goals.
    % together with pre-deployment diagnostics for selecting among trained policies and assessing unseen-goal reliability.

    \item Across controlled multimodal regression, five simulated manipulation tasks, and two real-world tasks, \ours{} substantially improves OOD performance over matched generative GCIL and extrapolation-targeted baselines.

\end{itemize}

\section{Related Work}
We first situate our work within the generalization regimes studied in goal-conditioned policies.
We then categorize extrapolation methods by how they map changes in policy inputs into behavior changes and reuse that rule beyond training support.
Prior work either hand-crafts this rule or use demonstration data to learn it.
\ours{} follows the latter approach.

\textbf{Generalization Regimes of Goal-Conditioned Policies.}
Goal-conditioned policies use a single policy  across many desired outcomes \citep{gupta2019relay, park2025ogbench}, while generative variants model multimodal action distributions for a given goal
\citep{reuss2023goal, reuss2024multimodal}.
Their generalization beyond training state--goal combinations has been studied under two distinct regimes.
In \emph{stitching}, where both the state and goal have been seen during training, but not together, the policy must recombine familiar transitions or trajectory segments to connect them
\citep{lawson2026self, luo2026generative}.
% where familiar states, transitions, or trajectory segments must be recombined to reach state--goal pairs not observed together during training
% \citep{ghugare2024closing, lawson2026self}.
Another studies \emph{unseen-goal generalization}, where the desired goal itself lies outside the training support and pursuing it requires the policy to act at out-of-distribution state--goal inputs
\citep{hong2022bilinear, yang2023essential}.
We study this regime in generative visuomotor imitation, where policies must generalize from visual inputs while preserving multimodal behavior.
% We focus on the latter.
% extrapolation beyond demonstrated goals, where pursuing a new goal also induces unseen observation--goal combinations along the rollout.
% \ours{} combines a generative goal-conditioned policy with a relational mechanism for this form of extrapolation.

\textbf{Hand-crafted Extrapolation Rules.}
% We use \emph{behavioral relation} to denote how a change in the policy input---its current observation and goal---should alter the required behavior.
Some approaches extrapolate through hand-specified rules for how behavior changes with the policy input
\citep{urain2024deep}.
Representation-based methods seek invariance to task-irrelevant variation by reducing observations to task-relevant objects, regions, keypoints, or geometry 
\citep{zhu2023viola, manuelli2019kpam, zhu2023learning}.
Equivariant policies prescribe how known scene transformations should transform actions or action distributions
\citep{wang2024equivariant,funk2024actionflow}.
Compositional methods define reusable factors (e.g., skills, objectives, or sensor modalities) and prescribe how they may be recombined \citep{mishra2023generative, urain2023se, wang2024poco}.
Their representations and components may be learned, but the rule is fixed a priori; this is effective when relevant structure is known, but restrictive otherwise.
We instead study settings where this rule is not obvious and must be inferred from demonstrations.

\textbf{Learning Extrapolation Rules from Demonstrations.}
% mention transduction?
When no rule is known \textit{a priori}, some prior works learn it by predicting a desired action from training examples and residuals that quantify the input change from each example to the query.
\darp{} \citep{pfeifer2026difference} instantiates this locally by retrieving the $k$ nearest examples, predicting from each example and its residual, and aggregating the results.
% too scarce information, how do they predict in a minimally technical details
Bilinear transduction (BT) \citep{netanyahu2023learning} instead assumes that a suitable anchor and compact residual induce a low-rank action-prediction problem, and uses a bilinear model to predict actions for unseen anchor--residual combinations.
% we should mention why this is good
Subsequent work extends this principle to offline RL and task-level extrapolation
\citep{kim2026compositional, ousherovitch2026learning}.
Robotic applications, however, have considered state inputs, where residuals are directly defined in compact coordinates and anchors are chosen using task-specific heuristics.
We study whether the same principle extends to generative visuomotor imitation, where defining a compact residual, selecting an anchor that exposes the low-rank relation, and preserving multiple valid actions are all nontrivial.
\section{Problem Formulation and Preliminaries}
\label{sec:problem_prelim}

\textbf{Problem Formulation: Distributional Extrapolation.}
\label{sec:problem}
Let $\actspace \subset \R^{\actdim}$ be the primitive action space and $\chunkspace := \actspace^{H} \subset \R^{\chunkdim}$, $\chunkdim=H\actdim$ be the action-chunk space.
We are given expert demonstrations:
$
    \DemoData=\{\tau^{(i)}\}_{i=1}^{N},
    \tau^{(i)}
    =\{(\state^{(i)}_t,\act^{(i)}_t)\}_{t=1}^{T_i},
    % \label{eq:dataset}
$
where 
% $i$ indexes a trajectory, $t$ is environment time,
$\act^{(i)}_t\in\actspace$ is a primitive action, and $\state_t^{(i)}=(\obs^{(i)}_t,\goal^{(i)})\in\statespace$ is the policy condition.
Here $\obs_t^{(i)}$ is a state or visuomotor observation, and $\goal^{(i)}$ is a compact, explicitly specified episode goal (e.g., target object pose; See \Cref{fig:teaser}).
Each timestep defines an $H$-step action-chunk target
$
    \achunk_t^{(i)}
    :=(\act^{(i)}_t,\ldots,\act^{(i)}_{t+H-1})\in\chunkspace.
    \label{eq:action_chunk}
$
% The resulting $(\state, \achunk)$ pairs define an empirical training distribution $\hat{\traindiststate}$.

Let $\traindiststate$ and $\testdiststate$ denote the marginal distribution of policy conditions under training and evaluation, respectively; we refer to conditions within the demonstrated support as in-distribution (ID) and those outside it as out-of-distribution (OOD).
% The finite dataset $\DemoData$ contains trajectories collected under $\traindiststate$.
We assume the same expert conditional $\expertdist(\cdot \mid \state)$ in both regimes, while $\testdiststate$ may include conditions induced by unseen goals outside $\operatorname{supp}(\traindiststate)$.
% that specifies a distribution over valid action chunks for each policy condition $\state \in \statespace$.
% Training and evaluation share this conditional expert behavior,
% but may encounter different conditions.
% Let $\traindist$ and $\testdist$ denote the marginal distributions over policy conditions at training and at test, respectively.
% Demonstrations are collected under $\state\sim P_{\Train}^X$, whereas evaluation uses $\state\sim P_{\Test}^X$, which may include held-out goals or initial configurations outside $\operatorname{supp}(P_{\Train}^X)$.
Given only $\DemoData$, we seek a policy $\policy$ with small test distributional risk
\begin{equation}
    \distrisk_{\Test}(\policy)
    :=
    \E_{\state \sim \testdiststate}
    \left[
        W_2^2\!\left(
            \policy(\cdot \mid \state),
            \expertdist(\cdot \mid \state)
        \right)
    \right].
    \label{eq:distribution_goal}
\end{equation}
where $W_2^2$ is the squared $2$-Wasserstein distance \citep{peyre2019computational}.

\begin{wrapfigure}{r}{0.5\columnwidth}
    \centering
    \includegraphics[width=1.0\linewidth]{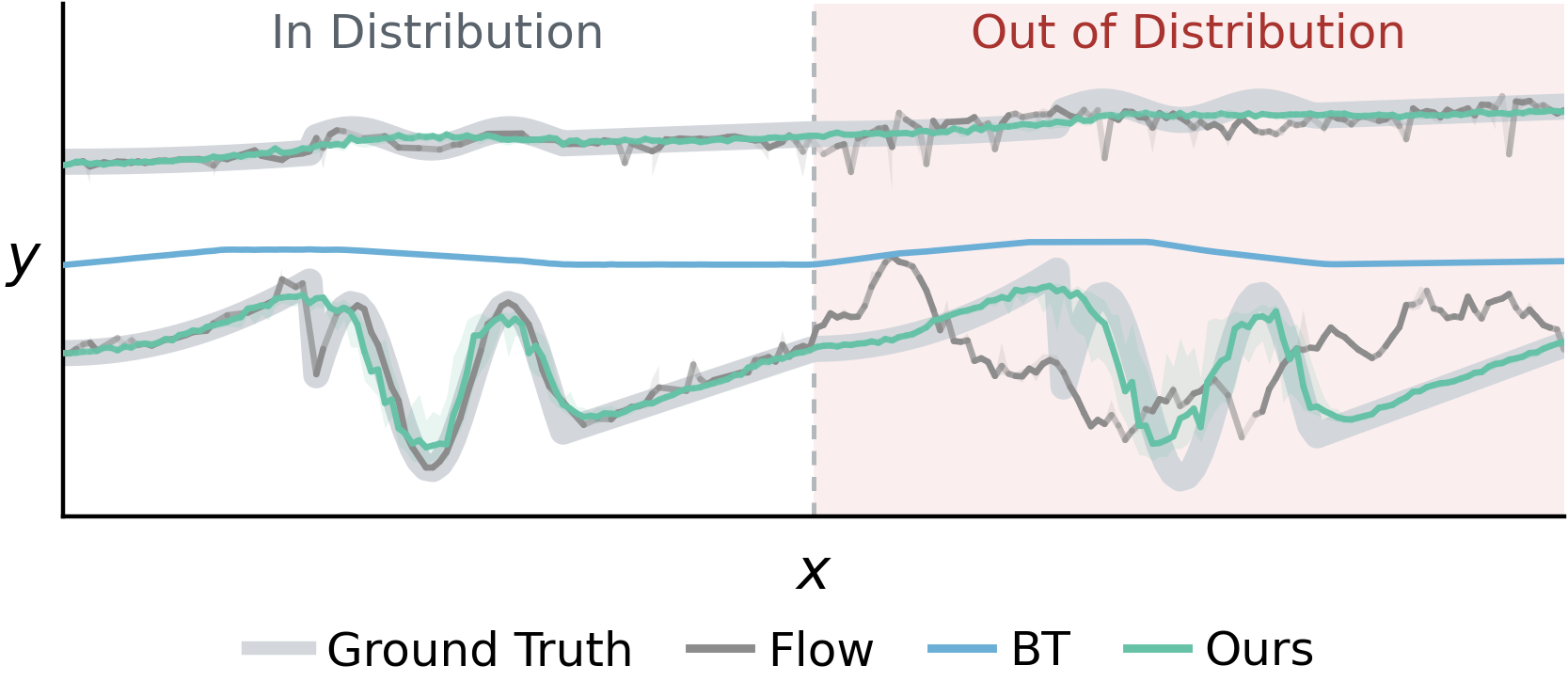}
    \caption{
        \textbf{Distributional extrapolation.}
        The expert conditional distribution $\expertdist (y \mid x)$ is a mixture of two modes (grey), evaluated outside $\operatorname{supp}(\traindiststate)$ (shaded red).
        \brp{} averages the modes (orange), while CFM captures multimodality but not extrapolation (purple).
        \ours{} achieves both (red).
    }
    \label{fig:bfp_regression}
    \vspace{-2mm}
\end{wrapfigure}
We call this problem \emph{distributional extrapolation}; a policy learned from demonstrated conditions must match the full expert action distribution at unseen conditions.
This matters when a condition admits multiple action chunks: as illustrated in \Cref{fig:bfp_regression}, mean regression is vulnerable to mode-averaging, while distributional modeling alone does not specify how the expert modes should extend beyond training support.

\textbf{Conditional Flow Matching.}
Conditional flow matching (CFM) \citep{lipman_flow_2024} transports a simple prior $\flowstate_0 \sim p_0$  (e.g., Gaussian) to the expert action chunk $\flowstate_1 = \achunk \sim \expertdist(\cdot \mid \state)$.  
% Let
% \begin{equation}   \flowstate_1=\achunk\sim\expertdist(\cdot\mid \state),
%     \qquad
%     \flowstate_0\sim p_0,
% \end{equation}
For a conditional probability path $p_{\flowtime}(\flowstate_{\flowtime}\mid\achunk)$ with corresponding velocity field $u_{\flowtime}(\flowstate_{\flowtime}\mid\achunk)$, CFM minimizes 
% where
% $\flowtime\in[0,1]$ is flow time.  
% A conditional velocity model is trained by minimizing
\begin{equation}
\begin{split}
    \mathcal L_{\rm CFM}(\theta)
    :={}&
    \E_{\substack{
        (\state,\achunk)\sim\DemoData,
        \flowtime\sim\mathcal U[0,1],
        \flowstate_{\flowtime}\sim
        p_{\flowtime}(\cdot\mid\achunk)
    }}
    \left[
        \left\|
        \flowfield(
            \flowstate_{\flowtime},
            \flowtime
            \mid \state
        )
        -
        u_{\flowtime}(
            \flowstate_{\flowtime}
            \mid\achunk
        )
        \right\|_2^2
    \right].
    \label{eq:cfm_objective_main}
\end{split}
\end{equation}
Under squared loss, the regression target is the marginal velocity field
\begin{equation}
    u^\star_{\flowtime}(\vy \mid \state)
    :=
    \E\!\left[
        u_{\flowtime}(\flowstate_{\flowtime}\mid\achunk)
        \,\middle|\,
        \flowstate_{\flowtime} = \vy, \state
    \right].
    \label{eq:cfm_population_field}
\end{equation}

\noindent At test-time, integrating $\dot{\flowstate}_\flowtime = \flowfield(\flowstate_\flowtime, \flowtime \mid \state)$ from $\flowstate_0\sim p_0$ generates an action chunk.

\textbf{Bilinear Transduction (BT).}
\label{sec:prelim_blt}
To extrapolate, BT \citep{netanyahu2023learning} selects a training example $\anc{\state} \in \DemoData$ as an anchor for a query condition $\state$, defines the residual $\reldelta:=\state-\anc{\state}$, and predicts the target through a low-rank bilinear interaction between the anchor and residual.
Specifically, it reparameterizes a regression target $h^\star(\state)\in\R^{\chunkdim}$ as $\widetilde h^\star(\reldelta,\anc{\state}):=h^\star(\anc{\state}+\reldelta)$ and models
\begin{equation}
    \widetilde h_\theta(\reldelta,\anc{\state})
    =
    \left[
        \left\langle
            f_{\theta,k}(\reldelta),
            g_{\theta,k}(\anc{\state})
        \right\rangle_{\rank}
    \right]_{k=1}^{\chunkdim}
    \qquad
    f_{\theta,k}(\reldelta),\,
    g_{\theta,k}(\anc{\state})
    \in \mathbb{R}^{\rank},
\end{equation}
where $\langle\cdot,\cdot\rangle_{\rank}$ denotes an inner product over a low-dimensional rank-$\rank$ latent space.
Together with the anchor-selection rule, this predictor induces training and test risks $\mathcal R_{\Train}$ and $\mathcal R_{\Test}$, measured by
the squared prediction error $\|\widetilde h_\theta-\widetilde h^\star\|_2^2$.
Under the assumptions in Appx. C.1.2 and sufficiently small training error, \brp{} admits the following extrapolation bound:
\begin{equation}
    \mathcal R_{\Test}(\widetilde h_\theta)
    \le
    C_{\mathrm{BT}}\,
    \mathcal R_{\Train}(\widetilde h_\theta),
    \label{eq:bt_risk_thm}
\end{equation}
where $C_{\mathrm{BT}}$ is a problem-dependent factor.
We highlight two of these assumptions for practical relevance:
\structurereq{} the anchor must be selected, and the residual defined, so that the resulting anchor--residual-to-action prediction problem has a well-conditioned low-rank bilinear structure.
\coveragereq{} a test-time anchor should yield a residual covered by the training residual distribution.

\emph{Limitations.} 
BT addresses \coveragereq{} through test-time anchor filtering, but largely
takes \structurereq{} for granted.
Consequently, prior demonstrations are limited to specific tasks in low-dimensional state spaces, where residuals are meaningful and anchor selection rules can be hand-designed.
Real-world visuomotor imitation introduces two hurdles.
First, inducing \structurereq{} requires finding both a representation with meaningful residuals and a retrieval policy that yields a simple prediction problem.
Second, expert actions may be multimodal, whereas BT predicts deterministically.
\ours{} addresses both challenges in the next section.

\section{Method}
\label{sec:method}

\begin{figure*}[t]
    \centering
    \includegraphics[width=0.96\textwidth]{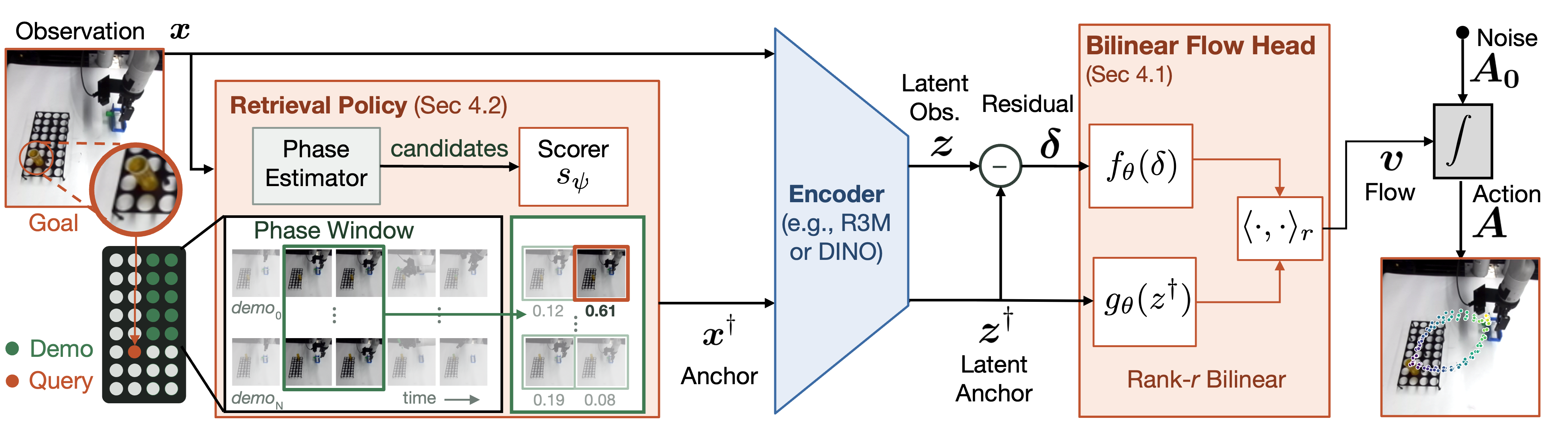}
    \caption{
        \textbf{Overview of \ours.}
        Given an unseen observation--goal condition $\state$, \ours retrieves an anchor $\anc{\state}$ and predicts from the resulting anchor--residual pair.
        The novel retrieval policy (\Cref{sec:learning_bfp_from_demo}) uses a phase estimator to form a phase-compatible candidate set and a learned scorer $s_\psi$, trained from held-out BFP loss, to select $\anc{\state}$.
        A frozen visual encoder computes $\querylatent=\obsenc(\state)$ and $\anchorlatent=\obsenc(\anc{\state})$, with residual $\reldelta = \querylatent-\anchorlatent$.
        The novel bilinear conditional-flow predictor (Sec.~4.1) predicts the CFM velocity through the low rank-$r$ interaction; flow integration then generates the action chunk.
    }
    \label{fig:bfp_overview}
    \vspace{-2mm}
\end{figure*}

\textbf{Overview.} \Cref{fig:bfp_overview} illustrates our proposed \textit{Bilinear Flow Policy} (\ours), a generative goal-conditioned visuomotor policy that extrapolates to unseen goals while preserving behavior multimodality.
\ours addresses the two challenges through a pretrained representation, learned anchor retrieval, and a bilinear conditional flow model.
Formally, given a query $\state = (\obs, \goal)$, \ours{} retrieves an anchor $\anc{\state}$ from the demonstrations, encodes both conditions with a latent representation $\obsenc$, then forms
\begin{equation}
    \querylatent=\phi(\state),
    \qquad
    \anchorlatent=\phi(\anc{\state}),
    \qquad
    \reldelta=\querylatent-\anchorlatent.
    \label{eq:bfp_coordinates}
\end{equation}
% somehow this sentence below does not connect well to the previous "its intuition ---" and "formally,..."
The goal is to choose a representation and anchor policy so that the retrieved anchor and its residual make the required action at query \emph{straightforward to predict}; concretely, we seek an induced anchor--residual--action prediction problem with low-rank bilinear structure (i.e., \structurereq{}).
% Intuition is to retrieve a training example whose residual to the current query makes the required action .
\Cref{sec:bfp_predictor} first establishes our main distributional extrapolation theorem, assuming this structure holds in the CFM marginal velocity field.
\Cref{sec:learning_bfp_from_demo} then constructs $\obsenc$ and $\anchorpolicy$ to make this structure plausible in visuomotor demonstrations.

\subsection{Bilinear Conditional Flow Head}
\label{sec:bfp_predictor}
We first isolate the predictor question.
Given a representation $\obsenc$ and retrieval policy $\anchorpolicy$,  how can we use bilinear structure to achieve \emph{distributional extrapolation}?

The representation $\obsenc$ and anchor policy $\anchorpolicy$ induce a conditional action distribution $\expertdist(\cdot\mid\reldelta,\anchorlatent)$ from the expert conditional $q^\star(\cdot\mid\state)$ using the anchor--residual coordinates in \Cref{eq:bfp_coordinates}.
Together with the conditional probability path $p_\flowtime(\flowstate_\flowtime\mid\achunk)$, this defines the corresponding marginal
velocity field $u^\star_\flowtime(\flowstate_\flowtime\mid\reldelta,\anchorlatent)$, whose flow generates this induced action distribution.

\ours{} approximates this field with the coordinate-wise bilinear model
\begin{equation}
    v_{\theta,k}
    (A_\tau,\tau\mid\reldelta,\anchorlatent)
    =
    \left\langle
        f_{\theta,k}(\reldelta;A_\tau,\tau),
        g_{\theta,k}(\anchorlatent;A_\tau,\tau)
    \right\rangle_r,
    \qquad k\in[d_A].
    \label{eq:bfp_bilinear_flow}
\end{equation}
For each fixed flow context $c=(A_\tau,\tau)$ and action coordinate $k$, this assumes that the marginal velocity field is approximated by a rank-$r$ bilinear function of the residual and anchor.
Thus, \structurereq{} is imposed on the velocity field rather than directly on the action.

We train this predictor with the CFM objective.
For a demonstration $(\state,A)$ and retrieved anchor $\anc{\state}$,
\begin{equation}
    \ell_{\mathrm{BFP}}
    (\theta;\state,\anc{\state},A)
    :=
    \mathbb E_{\substack{
        \tau\sim\mathcal U[0,1]\\
        A_\tau\sim p_\tau(\cdot\mid A)}}
    \left[
        \left\|
        v_\theta
        (A_\tau,\tau\mid\reldelta,\anchorlatent)
        -
        u_\tau(A_\tau\mid A)
        \right\|_2^2
    \right].
    \label{eq:bfp_cfm_loss}
\end{equation}
We also define $\flowrisk{s}(\theta)
    :=
    \mathbb E_s
    \left[
        \left\|
        v_\theta-u^\star
        \right\|_2^2
    \right]$ for $s\in[\Train, \Test]$.
We now present our main result:
\begin{thm}[Distributional extrapolation of \ours, informal]
\label{thm:bfp_distributional_extrapolation}

Suppose that the representation $\obsenc$ satisfies \Cref{ass:app_policy_sufficiency} in \Cref{app:induced_pair_distribution}, the marginal velocity field satisfies \structurereq{}, and test-time anchor selection satisfies \coveragereq{}.
Under the remaining technical conditions detailed in \Cref{app:main_distributional_theorem}, sufficiently small context-wise training flow error implies
\begin{equation}
    \mathcal R_{\Test}^{\mathrm{dist}}(\pi_\theta)
    \le
    C_{\mathrm{ODE}}\,
    \flowrisk{\Test}(\theta)
    \le
    C_{\mathrm{ODE}} C_{\mathrm{BFP}}\,
    \flowrisk{\Train}(\theta).
    \label{eq:bfp_distributional_extrapolation}
\end{equation}
\end{thm}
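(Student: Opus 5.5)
The proof splits into two independent inequalities that are then chained: a flow-stability step (first inequality) and a bilinear-transduction step applied context-wise to the velocity field (second inequality).

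\textbf{First inequality.} Fix a test condition $\state$ and its retrieved anchor. By \Cref{ass:app_policy_sufficiency}, the induced conditional $\expertdist(\cdot\mid\reldelta,\anchorlatent)$ coincides with $\expertdist(\cdot\mid\state)$. Hence the flow of $u^\star_\flowtime(\cdot\mid\reldelta,\anchorlatent)$ pushes $p_0$ exactly to the expert conditional. The policy $\policy(\cdot\mid\state)$ is the pushforward of $p_0$ under the flow of $\flowfield$. I would couple the two ODEs synchronously from the same initial point $\flowstate_0\sim p_0$, which gives
\[
W_2^2\le \E\|\flowstate_1^{\theta}-\flowstate_1^{\star}\|^2 .
\]
I then apply Grönwall to $e_\flowtime:=\flowstate^\theta_\flowtime-\flowstate^\star_\flowtime$. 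Splitting $\dot e_\flowtime = (v_\theta(\flowstate^\theta_\flowtime)-v_\theta(\flowstate^\star_\flowtime)) + (v_\theta-u^\star)(\flowstate^\star_\flowtime)$, a Lipschitz constant $L$ for $v_\theta$ in $\flowstate$ (uniform in $\flowtime$, a technical condition of \Cref{app:main_distributional_theorem}) and Cauchy--Schwarz in time yield
\[
\E\|e_1\|^2\le e^{2L+1}\int_0^1 \E\|v_\theta-u^\star\|^2(\flowstate^\star_\flowtime)\,d\flowtime .
\]
The key point is that $\flowstate^\star_\flowtime$ has law $p_\flowtime$, the same law under which $\flowrisk{\Test}$ is defined. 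Taking expectation over $\state\sim\testdiststate$ gives the first inequality with $C_{\mathrm{ODE}}=e^{2L+1}$.

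\textbf{Second inequality.} For each flow context $c=(A_\tau,\tau)$ and coordinate $k$, the map $(\reldelta,\anchorlatent)\mapsto u^\star_k(c\mid\reldelta,\anchorlatent)$ is, by \structurereq{}, a well-conditioned rank-$r$ bilinear function, and $v_{\theta,k}(c\mid\cdot)$ is rank-$r$ bilinear by construction. Their difference is therefore a rank-$\le 2r$ bilinear form. By \coveragereq{}, the test distribution of $(\reldelta,\anchorlatent)$ is covered by products of the training residual and anchor marginals, with bounded density ratio. This is exactly the setting of the BT bound \eqref{eq:bt_risk_thm}. Applied for each $(c,k)$, provided the context-wise training error is below the BT threshold, it gives
\[
\E_{\Test}\|v_{\theta,k}-u^\star_k\|^2(c)\le C_{\mathrm{BT}}(c)\,\E_{\Train}\|v_{\theta,k}-u^\star_k\|^2(c).
\]
Summing over $k$ and integrating over $c$ then yields $C_{\mathrm{BFP}}=\sup_c C_{\mathrm{BT}}(c)$.

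\textbf{Main obstacle.} The hard part is making the context integration legitimate. The measure over $A_\tau$ depends on the condition through $p_\tau$, i.e. on the induced marginal at $(\reldelta,\anchorlatent)$. As a result, train and test contexts are weighted differently, so one cannot simply integrate the per-context inequality against a common measure. My first attempt would be to assume (as a technical condition) a bounded density ratio of test to train context marginals, which can be absorbed into $C_{\mathrm{BFP}}$. Failing that, I would use a uniform-in-$c$ conditioning and coverage bound and integrate against a reference measure dominating both.
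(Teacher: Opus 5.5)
Your proposal is correct and follows the paper's proof essentially step for step: a synchronous coupling with Gr\"onwall gives $C_{\mathrm{ODE}}=e^{2L+1}$, and the prior bilinear bound is then applied per flow context and output coordinate, with the context-integration obstacle resolved exactly as the paper does, by assuming flow-context coverage $\nu_{\Test}\preceq_{\kappa_C}\nu_{\Train}$, which yields $C_{\mathrm{BFP}}=\kappa_C\Gamma_{\mathrm{bil}}$. The one step you leave implicit is that the test-time policy is a mixture over stochastically sampled anchors; the paper handles this by including the anchor draw in the coupling (equivalently, convexity of $W_2^2$ over the anchor mixture, \Cref{cor:app_multi_anchor}) before averaging over $\testdiststate$.
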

\begin{proof}
Deferred to \Cref{app:main_distributional_theorem}.
\end{proof}
% \textit{Proof sketch.}
% At each fixed flow context and output coordinate,
% \Cref{eq:bfp_bilinear_flow} is the bilinear regression problem covered
% by \Cref{eq:bt_risk_thm}.
% Under \structurereq{} and \coveragereq{}, applying that result context-wise transfers training velocity-field error to covered unseen anchor--residual combinations.
% ODE stability then converts the resulting test-time field error into the Wasserstein error of the generated action distribution.
% The complete proof is given in
% \Cref{app:bfp_distributional_extrapolation}.

\textbf{Practical interpretation.}
Under the theorem's assumptions, in-distribution flow error controls the error in the full generated action distribution at unseen goals.
The theorem motivates the practical diagnostics for assessing extrapolation performance based on \structurereq{} and \coveragereq{}, which we evaluate in \Cref{sec:exp_diagnostics}.

\subsection{Learning \ours{} from Demonstrations}

We now instantiate the representation and anchor policies left abstract in
\Cref{sec:bfp_predictor}: training-time retrieval aims to induce a simple low-rank relation (\structurereq{}), while test-time retrieval aims to keep the required residual covered by training (\coveragereq{}).

\label{sec:learning_bfp_from_demo}

\textbf{Representation.}
Let $\state=(I^{1:V},\vp,\goal)$ contain images, proprioception, and the goal.
\ours uses
\begin{equation}
    \obsenc(\state)
    =\left[E_{\mathrm{vis}}(I^1);\ldots;E_{\mathrm{vis}}(I^V);\vp;\goal\right],
    \label{eq:bfp_encoder}
\end{equation}
where $E_{\mathrm{vis}}$ is a frozen pretrained visual representation such as R3M \citep{nair2022r3m} or DINOv3 \citep{simeoni2025dinov3}.
Applying the same encoder to query and anchor provides a shared compact space for defining
$\reldelta=\obsenc(\state) -\obsenc(\anc{\state})$.
As it is pretrained independent to demonstrations, they provide a common train--test coordinate system.
% Applying the same encoder to query and anchor provides a compact shared space in which the residual $\reldelta=\obsenc(\state)-\obsenc(\anc{\state})$ can be meaningfully defined.
% Because the visual coordinates are pretrained independently of our demonstration set and then frozen, they provide a common train--test coordinate system, crucial to our generalization.

\textbf{Learning train-time anchor retrieval.}
Anchors at similar task phases are more likely to induce simple residuals, but phase compatibility alone does not guarantee bilinear predictability.
For each query $\state$, a coarse phase estimator therefore forms a candidate window $\mathcal C(\state) = \{\anc{\state}_m\}_{m=1}^{K}$ from demonstrations at similar task progress.
We use a simple hand-designed estimator, although learned temporal-alignment methods could be used instead \citep{dwibedi2019temporal,memmel2025strap}.

For each candidate, let
$\anchorlatent_m=\obsenc(\anc{\state}_m)$ and
$\reldelta_m=\obsenc(\state)-\anchorlatent_m$.
A learned scorer defines
\begin{equation}
    p_\psi(m\mid\state,\mathcal C(\state))
    =
    \frac{
        \exp\!\left(s_\psi(\reldelta_m,\anchorlatent_m)/\tau_\rho\right)
    }{
        \sum_{n=1}^{K}
        \exp\!\left(s_\psi(\reldelta_n,\anchorlatent_n)/\tau_\rho\right)
    }.
    \label{eq:anchor_policy}
\end{equation}

We alternate predictor and scorer updates so that each provides the learning signal for the other.
With the scorer fixed, the predictor learns primarily from anchors assigned high probability:
\begin{equation}
    \mathcal L_{\mathrm{pred}}(\theta)
    =
    \E_{(\state,\achunk)\sim\DemoData}
    \left[
        \sum_{m=1}^{K}
        \sg[p_\psi(m\mid\state,\mathcal C(\state))]
        \ell_{\mathrm{BFP}}
        (\theta;\state,\anc{\state}_m,\achunk)
    \right].
    \label{eq:weighted_predictor_loss}
\end{equation}
Conversely, with the predictor fixed, the scorer favors anchors with low held-out BFP loss:
\begin{equation}
    \mathcal L_{\mathrm{ret}}(\psi)
    =
    \E_{(\state,\achunk)\sim\DemoData_{\mathrm{val}}}
    \left[
        \sum_{m=1}^{K}
        p_\psi(m\mid\state,\mathcal C(\state))
        \sg[\widetilde\ell_m]
    \right],
    \label{eq:retrieval_objective}
\end{equation}
where $\widetilde\ell_m$ is the normalized held-out BFP loss.
Thus, retrieval favors anchor--residual pairs that the shared rank-$r$ predictor can explain well.
Algorithm and details are given in  \Cref{app:anchor_policy}.

\textbf{Test-time anchor selection.}
At test time, motivated by \coveragereq{}, we prioritize goal-residual coverage when selecting anchors.
Following BT, we first select demonstrations whose goal residual is closest to the empirical training goal-delta set $\widehat{\mathcal G}_{\Delta}$,
\begin{equation}
    j^\star
    \in
    \argmin_j
    \min_{\Delta g\in\widehat{\mathcal G}_{\Delta}}
    \left\|(g_q-g^{(j)})-\Delta g\right\|_2^2 .
    \label{eq:test_anchor_goal_delta}
\end{equation}
We apply the learned scorer from \Cref{eq:anchor_policy} over
$\mathcal C_{\Test}(\state)$ and sample
$
M\sim\mathrm{Cat}\!\left(p_\psi(\cdot\mid\state,\mathcal C_{\Test}(\state))\right).
$
With the sampled anchor fixed, we form
$\anchorlatent_M=\obsenc(\anc{\state}_M)$ and
$\reldelta_M=\obsenc(\state)-\anchorlatent_M$, sample
$\flowstate_0\sim p_0$, and integrate
\begin{equation}
    \dot{\flowstate}_{\flowtime}
    =
    v_\theta(
        \flowstate_\flowtime,\flowtime
        \mid\reldelta_M,\anchorlatent_M
    ),
    \qquad \flowtime\in[0,1].
    \label{eq:test_flow_generation}
\end{equation}
The terminal state $\flowstate_1$ is the generated action chunk.
Algorithm and details are given in  \Cref{app:anchor_policy}.

\section{Experiments}
\label{sec:experiments}

We organize our experiments around three questions that follow the claims of the method.
First (\Cref{sec:exp_bfp_solves_dist_extrapolation}), can \ours{} solve distributional extrapolation given oracle representation and retrieval? 
Second (\Cref{sec:exp_main_results}), does \ours{} generalize to unseen goals and transfer to real-world deployment?
Third  (\Cref{sec:exp_diagnostics}), can theory-motivated signals identify which trained policies and unseen queries are likely to extrapolate reliably without performing OOD rollouts?
We additionally ablate core components of \ours{} (\Cref{sec:ablations}).
Full results are reported in \Cref{app:hardware,app:experiment_details}.

\subsection{Can BFP solve distributional extrapolation?}
\label{sec:exp_bfp_solves_dist_extrapolation}
\begin{wraptable}{r}{0.50\textwidth}
    \vspace{-0.5em}
    \centering
    \scriptsize
    \setlength{\tabcolsep}{4.2pt}
    \renewcommand{\arraystretch}{1.05}
    \begin{tabular}{@{}lccc@{}}
        \toprule
        Method
        & $\operatorname{NW}_2$ ID $\downarrow$
        & $\operatorname{NW}_2$ OOD $\downarrow$
        & Balance OOD $\downarrow$ \\
        \midrule
        \emph{Oracle} & $0.078$ & $0.086$ & $0.019$ \\
        \flow{}       & $\mathbf{0.221}$ & $0.506$ & $0.160$ \\
        \brp{}        & $0.726$ & $0.763$ & $0.500$ \\
        \ours{}       & $0.224$ & $\mathbf{0.321}$ & $\mathbf{0.103}$ \\
        \bottomrule
    \end{tabular}
    \caption{\textbf{\ours{} distributionally extrapolates.} Results averaged over three problems and three seeds. Lower is better for all metrics.  \emph{Oracle} samples directly from the expert conditional and gives the Monte Carlo floor.}
    \label{tab:toy_main}
    \vspace{-1.5em}
\end{wraptable}
\textbf{Setting.} We first isolate the predictor from the representation and anchor-retrieval challenges studied later.
We consider three one-dimensional analytical problems that instantiate the bimodal setting in \Cref{fig:bfp_regression}, where inputs are already compact and random anchor retrieval suffices.
In each problem, the expert conditional is an equal mixture of two analytical modes.
This setting lets us quantify directly whether the predictor can extrapolate
both modes beyond the training support.
Each method is trained on $20{,}000$ input--output pairs with three seeds and evaluated from $500$ samples per input. 
$\operatorname{NW}_{2}$ measures overall mismatch to the expert conditional, normalized by the distance between its modes, while balance error, measures deviation from the prescribed $50/50$ mode mass;
formal definitions are given in \Cref{app:toy}.

\textbf{Hypothesis.}
We expect \flow{} to preserve multimodality but extrapolate it unreliably, and deterministic \brp{} to extrapolate structurally but fail to represent both modes.

\textbf{Result.}
\ours{} and \flow{} show similar ID $\operatorname{NW}_{2}$, but \ours{} reduces OOD $\operatorname{NW}_{2}$ from $0.506$ to $0.321$ ($36.6\%$) and improves mode balance from $0.160$ to $0.103$.
\brp{} has the highest OOD distribution error and the maximal balance error of $0.500$, reflecting its inability to represent both modes with deterministic regression.
Thus, when representation and retrieval are controlled, only \ours{} achieves the lowest average OOD distributional error.
Per-problem results are provided in \Cref{app:toy}.

\subsection{Does BFP Improve Unseen Goal Generalization?}
\label{sec:exp_main_results}

\textbf{Setting.}
We evaluate seven simulation settings spanning the five manipulation tasks shown in \Cref{fig:task_suite}, together with two real-world tasks in \Cref{fig:hw_task_suite}.
% The \texttt{D0} settings evaluate held-out goals while keeping the object initial condition fixed.
% The \texttt{D1} settings tests goal shift with a larger initial condition set.
% When no suffix is provided, we refer to \texttt{D1}.
% otherwise unspecified, we assume D1 setting
Exact demonstration, ID, and OOD supports are provided in \Cref{app:experiment_details,app:hardware}.
We report results under state-domain and visuomotor observations.
State uses simulator states.
Visuomotor uses frozen R3M features \citep{nair2022r3m}, proprioception, and the explicit goal.
We study other pretrained visual features (e.g., DINO-v3) in \Cref{sec:ablations}.

We compare the following:
\flow{} \citep{lipman_flow_2024}, a standard generative GCIL policy, and four extrapolation-targeted baselines.
\edp{} \citep{wang2024equivariant} represents prespecified extrapolation by encoding a known geometric equivariance into a generative policy.
\darp{} \citep{pfeifer2026difference} retrieves the $k$ nearest anchors and predicts from each anchor and its residual using a local, non-bilinear model.
\brp{} \citep{netanyahu2023learning} uses a hand-crafted time-synchronized anchor policy and deterministic low-rank bilinear regression; for visuomotor tasks, we equip \brp{} with the same frozen pretrained visual representation as \ours{}.
\cta{} \citep{kim2026compositional} learns a task-endogenous analogy and uses it as the residual in hierarchical bilinear transduction; a high-level policy predicts an analogy, and a low-level policy maps it to primitive actions.
See \Cref{app:sec:baseline_study} for details.

For each pair of task and observation space in simulation, every training seed is evaluated on fixed 100 ID and 100 OOD initial conditions.
We train 3 training seeds and report the mean and standard error of the mean (SEM) across the seeds.
We use statistical hypothesis testing for policy comparison with 5\% family-wise error rate (FWER) control within each task \citep{snyder2025your, snyder2026beyond}.

\textbf{Simulation Result.} 
% We now evaluate whether the full-stack \ours{} generalizes to unseen goals in closed-loop visuomotor manipulation, following the evaluation protocol in \Cref{sec:exp_setup}.
As shown in \Cref{fig:ood_reward_violin_panel_sim}, \ours{} achieves the highest OOD success on all seven settings, averaging $79\%$, compared
with $58\%$ for \brp{} and $30\%$ for \flow{}.
The normalized dense-reward analysis shows the same ordering, with \ours{} forming the top FWER-controlled statistical group as indicated by the Compact Letter Display (CLD)~\citep{piepho2004algorithm} shown on top of each sub-plot. Columns not sharing the same alphabet are significantly different with 5\% FWER, verifying that \ours{} outperforms all the baselines.
ID success remains high at $97\%$, indicating that the OOD gains do not come from sacrificing performance within demonstration support.
Same trend holds for state-space result.
Detailed analysis and per-task results are provided in \Cref{app:experiment_details}.

The baselines clarify where the gain comes from.
\ours{} outperforms \edp{} despite its carefully tuned (see \Cref{app:sec:baseline_study}), task-specific geometric rule, showing the benefit of learning the extrapolation relation from demonstrations rather than prescribing it \textit{a priori}.
Its gains over \darp{} and \cta{} show that local retrieval or learned analogy alone does not match the proposed structured predictor, while the $21\%$ gain over \brp{} indicates that learned anchor retrieval and generative bilinear-flow prediction add to the fixed-anchor deterministic bilinear recipe.

\textbf{Real-world Result.}
We evaluate generalization to unseen goals on a real-world robot in a pick-and-place task (\texttt{PnPMug}) and a pipette insertion task (\texttt{InsertPipette}) the latter requiring within $2 \mathrm{cm}$ precision.
As shown in \Cref{fig:ood_reward_violin_panel_real}, empirical results in this setting show the same trends observed in simulation.
\ours{} achieves $50\%$ average OOD success rate across the two tasks, compared with $18\%$ for \flow{} and $39\%$ for \brp{}.
These results suggest that BFP's generalization to unseen goals through extrapolation holds under real-world conditions including perception and goal estimation error.
We report experimental setup and discuss results further in \Cref{app:hardware}.

\setcounter{figure}{3}
\begin{figure}[t]
    \centering
    \begin{subfigure}[t]{0.49\linewidth}
        \centering
        \vspace{0pt}
        \includegraphics[width=\linewidth]
            {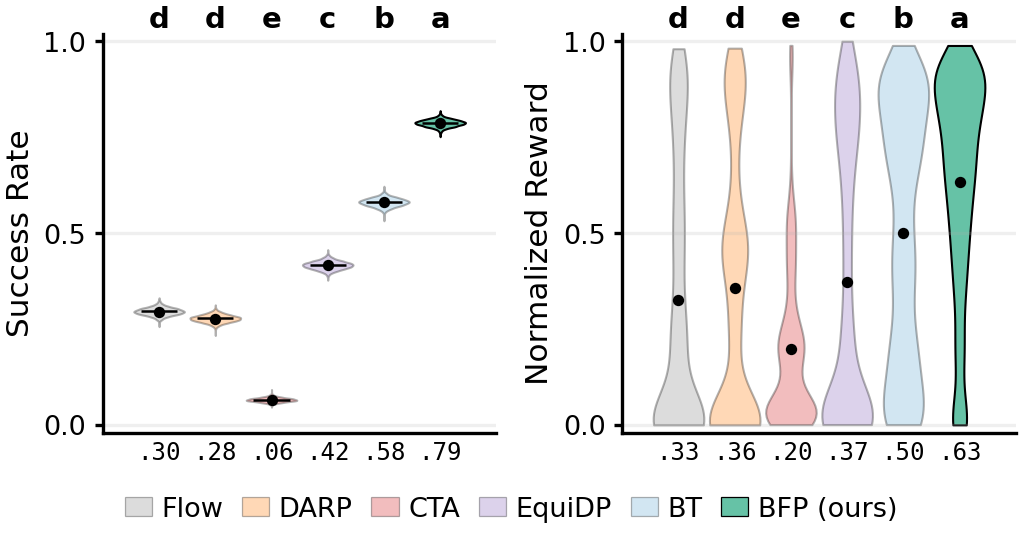}
        \caption{OOD performance in simulation.}
        \label{fig:ood_reward_violin_panel_sim}
    \end{subfigure}\hfill%
    \begin{subfigure}[t]{0.49\linewidth}
        \centering
        \vspace{0pt}
        \includegraphics[width=\linewidth]
            {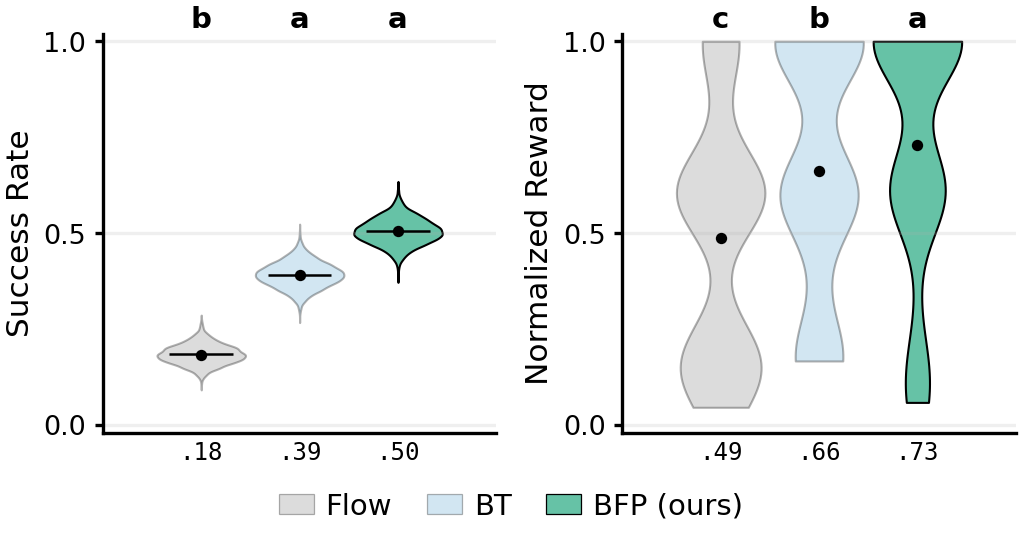}
        \caption{OOD performance in real-world.}
        \label{fig:ood_reward_violin_panel_real}
    \end{subfigure}\hfill%

    \caption{
        \textbf{\ours{} generalizes in visuomotor manipulation.}
        \textbf{(a)} Success rate / Normalized reward averaged across seven simulation settings
        \textbf{(b)} Same across hardware tasks.
        Success rate violins show Bayesian posterior estimates under a uniform Beta prior. Normalized reward violins illustrate empirical distributions of the results. Dots represent sample means and horizontal lines show posterior means (success-rate only).
        Letters on top of the plots show statistical comparison, where columns not sharing the same alphabet are significantly different under 5\% FWER control.
    }
    \label{fig:visuomotor_ood_summary}
    \vspace{-1em}
\end{figure}

\newlength{\hwpanelheight}
\setlength{\hwpanelheight}{3.4cm}

\subsection{In-Distribution Diagnostics of Extrapolation}
\label{sec:exp_diagnostics}

\textbf{Setting.} 
We ask whether theory-motivated diagnostics can help assess extrapolation before closed-loop OOD rollout.
Our bound motivates two empirical proxies: held-out ID CFM loss to represent \structurereq{} and a residual-coverage score for \coveragereq{}.
We measure coverage using the realism metric \citep{kynkaanniemi2019improved} against a bank of training residuals, with higher scores indicating better coverage.
We evaluate whether these proxies predict open-loop OOD CFM loss on expert observation--action pairs.
For checkpoint selection, we rank checkpoints within each task by held-out ID CFM loss and compare their OOD-error percentile ranks.
For query-level diagnosis, we group OOD queries within each checkpoint into residual-coverage percentile and compare their OOD-error percentile ranks.
Full evaluation details are provided in \Cref{app:extrapolation_diagnostics}.

% These yield two practical questions: can held-out ID flow error identify which trained checkpoint will extrapolate better, and, for a fixed checkpoint, can residual coverage identify which unseen goal-conditioned queries are better supported?
% For the first probe, we rank checkpoints within matched task--encoder--retrieval pools by held-out ID flow error and compare their normalized OOD flow-error ranks, where lower is better.
% For the second, we group OOD queries within each checkpoint by residual-coverage decile and compare their normalized flow error.
% Exact probe construction is deferred to \Cref{app:extrapolation_diagnostics}.

% Required package:
% \usepackage{subcaption}

\begin{figure}[t]
    \centering

    % Restore readable subfigure captions.
    % Remove: \captionsetup{font=footnotesize}
    \captionsetup[subfigure]{font=normalsize,skip=2pt}

    \begin{subfigure}[t]{0.48\linewidth}
        \centering
        \includegraphics[
            width=\linewidth
        ]{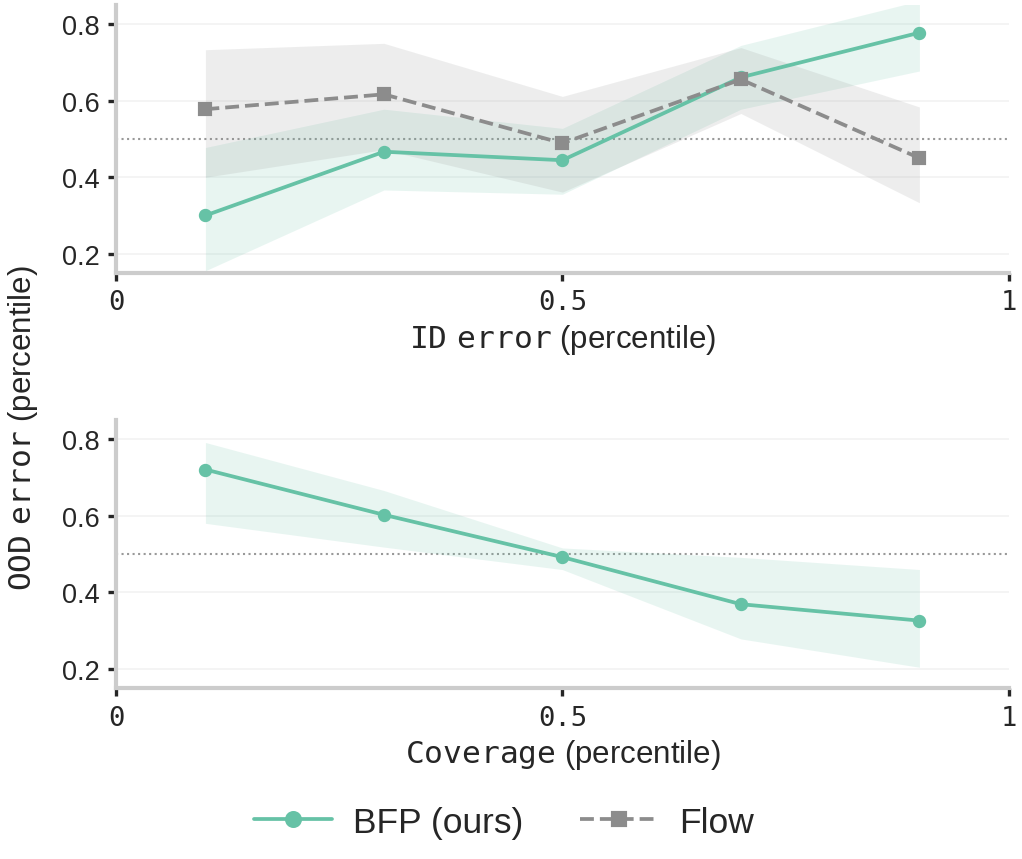}
        \caption{Diagnostics.}
        \label{fig:extrapolation_diagnostics}
    \end{subfigure}
    \hfill
    \begin{subfigure}[t]{0.48\linewidth}
        \centering
        \includegraphics[
            width=\linewidth
        ]{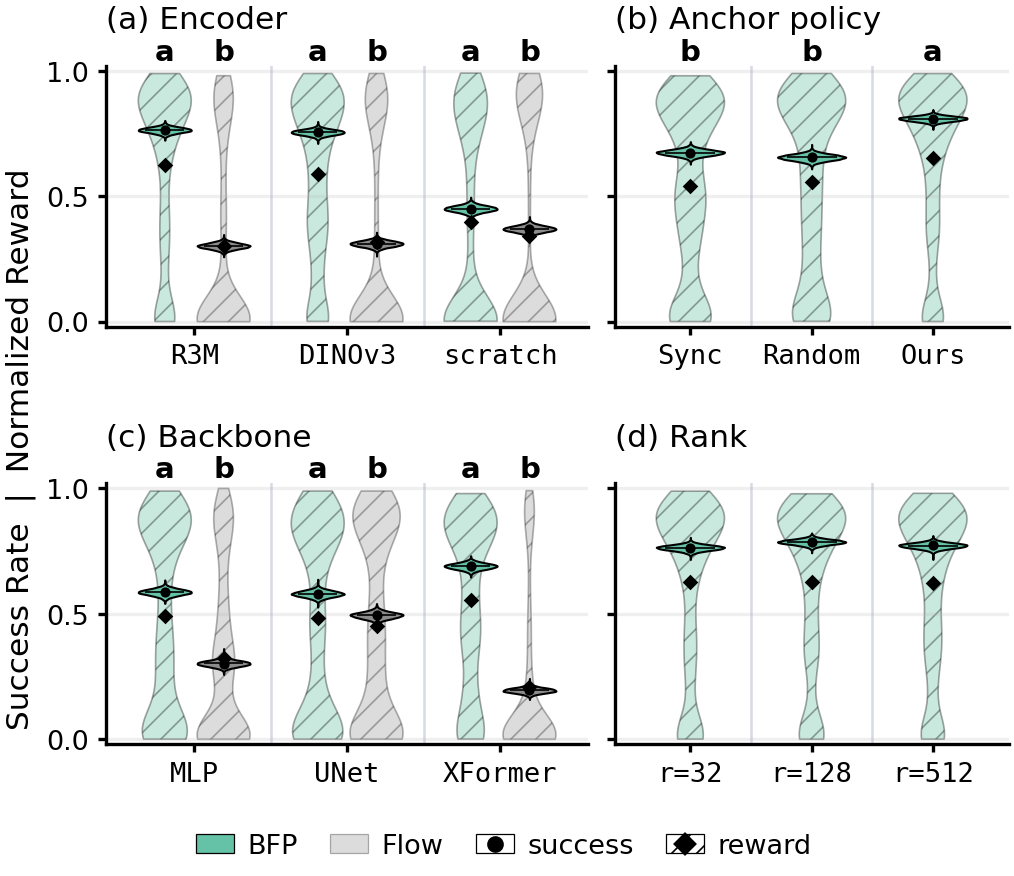}
        \caption{Ablations.}
        \label{fig:bfp_component_ablations}
    \end{subfigure}

    \caption{
        \textbf{Pre-rollout diagnostics and \ours{} ablations.}
        \textbf{(a)} Held-out ID flow error ranks OOD error for
        \ours{} but not for \flow{}; within a fixed \ours{}
        checkpoint, better residual coverage is associated with
        lower extrapolation error.
        \textbf{(b)} Ablations of the visual encoder, anchor policy, backbone, and bilinear rank. Frozen pretrained features and phase-conditioned retrieval both lead to better performance; 
        \ours{} can exploit stronger sequence backbones and performs similarly across the ranks.
    }
    \label{fig:diagnostics_and_ablations}
\end{figure}

\textbf{Result.} 
\flow{} shows the failure first: lower held-out ID CFM loss does not consistently correspond to lower OOD error.
For \ours{}, however, the checkpoints in the highest ID error quintile have a mean OOD-error rank approximately $48$ percentile points higher than those in the lowest quintile.
These results suggest that held-out ID CFM loss can guide checkpoint selection and hyperparameter search for \ours{} without OOD rollout.

Residual coverage provides a complementary query-level signal.
In the within-checkpoint analysis, queries in the lowest-coverage
quintile have higher OOD-error percentile ranks than those in the
highest-coverage quintile.
Residual coverage therefore serves as a query-level predictor of performance on unseen goals and motivates test-time retrieval that favors anchors inducing residuals well covered by training.
% These diagnostics concern open-loop flow error rather than closed-loop success, and the coverage result is correlational rather than causal; directly optimizing coverage-aware retrieval remains future work.

\subsection{Ablations}
\label{sec:ablations}
We ablate the visual representation, anchor retrieval, flow predictor backbone, and bilinear rank in \Cref{fig:bfp_component_ablations}.
A frozen pretrained visual representation and phase-conditioned retrieval are the most consequential components: \ours{} with a frozen R3M or DINOv3 encoder reaches both $76\%$ OOD success,
$+31\%$ over jointly training the encoder from scratch ($45\%$), and learned phase-conditioned retrieval reaches $81\%$ versus $67\%$ and $66\%$ for time-synchronized (as in BT) and random retrieval.

% \begin{wrapfigure}{r}{0.5\textwidth}
%     \centering
%     \includegraphics[width=\linewidth]{figures/fig_abl_all_r3mP_dual_overlay_ood_col.png}
%     \caption{}
%     \label{fig:ablations}
% \end{wrapfigure}

One concern is that \ours{}'s low-rank structure makes the model simpler and more prone to underfitting.
Perhaps this underfitting, rather than bilinear interaction itself, explains the extrapolation gain.
If so, a more expressive backbone would erase the gain by allowing the model to overfit. We observe the opposite: replacing the MLP with a sequence-aware Transformer raises \ours{} from $59\%$ to $69\%$ OOD success, while lowering \flow{} from $30\%$ to $20\%$, despite $94$--$99\%$ ID success across trained variants. 
These results suggest that \ours{}’s gains are not merely a consequence of an underpowered backbone: \ours{} can benefit from greater backbone expressiveness while retaining its bilinear anchor–residual structure.
Finally, performance is similar across the tested bilinear ranks.

\section{Conclusion}
\label{sec:conclusion}

We introduce \ours{}, a generative goal-conditioned visuomotor policy that extrapolates to unseen goals by learning generalizable observation-goal-action relations among demonstrated conditions.
\ours{} extends bilinear transduction to conditional flow matching and combines it with compact representations and learned anchor selection to generate multimodal action distributions beyond demonstration support.
Across five simulation manipulation tasks and two real-world tasks, \ours{} substantially improves OOD success over competing methods, while providing practical signals for checkpoint selection and for identifying unseen conditions where extrapolation is better supported.

\textbf{Limitations and Future Work.}
\ours{} currently assumes compact goal representations.
Extending \ours{} to image- or language-conditioned goals remains an important direction for future work.

\subsection*{AI use statement}

We used generative AI tools to assist with literature review, experiment analysis, implementation of baseline methods, and editing and organization of
manuscript text. 
AI-assisted literature review was verified against the original sources, baseline implementations were reviewed and tested by
the authors, and all AI-assisted writing was manually reviewed and revised. 
We take responsibility for the final content of this work, including text, claims, code, and artifacts produced with the aid of generative AI.

\subsection*{Reproducibility statement}
We are committed to making our results reproducible.
The main paper specifies the BFP architecture, and pseudocode details provided in the \Cref{app:anchor_policy}.
\Cref{app:hardware} documents the real-world task setups.
We release the implementation, experiment configurations, and supporting materials through our project website: \url{https://bilinearflow.netlify.app}.

% \subsubsection*{Author Contributions}
% If you'd like to, you may include  a section for author contributions as is done
% in many journals. This is optional and at the discretion of the authors.

\subsubsection*{Acknowledgments}
Toyota Research Institute (TRI) provided funds to assist the authors with their research; this article solely reflects the opinions and conclusions of its authors, and not TRI or any other Toyota entity.
This project was also partially supported by the AI Manufacturing Pilot Facility project under Georgia Artificial Intelligence in Manufacturing (Georgia AIM) from the U.S. Department of Commerce Economic Development Administration, Award 04-79-07808.

% \bibliography{iclr2027_conference}
\bibliography{references}  % .bib
\bibliographystyle{iclr2027_conference}

\clearpage
\appendix
% =====================================================================
% Bilinear Flow Policy: reorganized appendix draft (v9)
% This file is intended to be included after \appendices.
% Prose formatting convention: one complete sentence per source line.
% =====================================================================

\section*{Appendix}
 We include several appendices with additional details, derivations, and results:
  \begin{itemize}
      \item \Cref{app:hardware} describes the real-world robot setup, tasks, and additional hardware results.

      \item \Cref{app:experiment_details} details the simulation benchmark, including tasks, goal splits, evaluation protocol, and additional results.
 
      \item \Cref{app:theory} develops the theory: \Cref{app:theory:background} reviews the prior low-rank bilinear extrapolation result, \Cref{app:bfp_problem_setup}
  shows that BFP conditional flow matching induces exactly such a bivariate regression problem, and \Cref{app:bfp_theory} proves the distributional extrapolation guarantee and its
  multi-anchor corollary.
      
      \item \Cref{app:anchor_policy} gives the train- and test-time anchor-policy instantiations and their training and inference algorithms. 

      \item \Cref{app:toy} reports a controlled numerical study of distributional extrapolation.

      \item \Cref{app:sec:baseline_study} describes how each baseline is instantiated, reports encoder and baseline sensitivity studies, and evaluates the pre-rollout extrapolation diagnostics.
  \end{itemize}

%%%%%%%%%%%%%%%%%%%%%%%%%%%%%%%%%%%%
% Hardware
%%%%%%%%%%%%%%%%%%%%%%%%%%%%%%%%%%%%
\clearpage
\section{Real-world Experiment Details \& Analysis}
\label{app:hardware}

\begin{figure}[h]
    \centering
    \includegraphics[width=\linewidth]{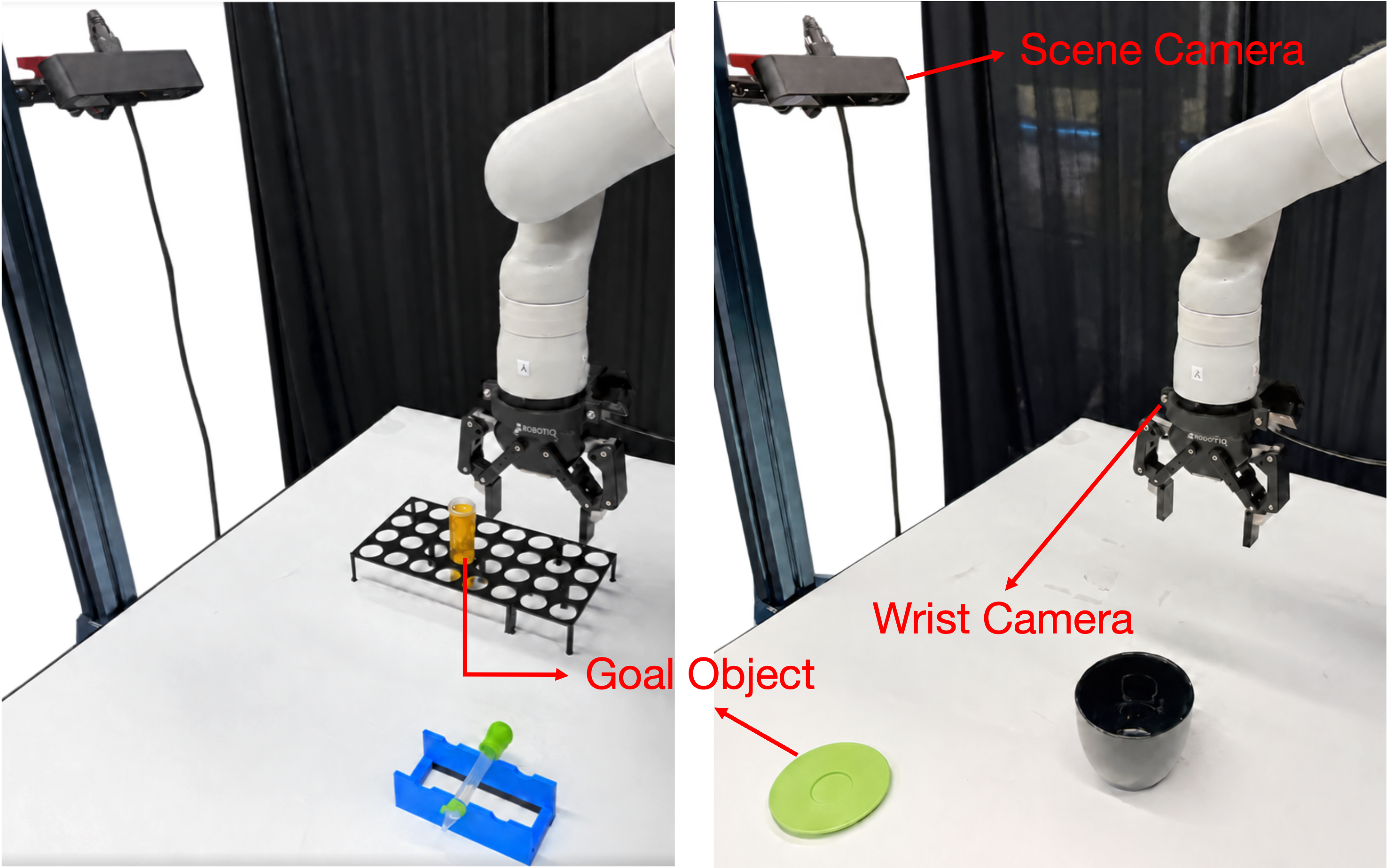}
    \caption{ \textbf{Task Setup.} \textbf{(Left)} \texttt{InsertPipette} \textbf{(Right)} \texttt{PnPMug}}
    \label{fig:hw_task_suite}
\end{figure}

\textbf{Hardware Setup.}
We conduct all real-world experiments on a 7-DoF Kinova Gen3 arm with a Robotiq 2F-85 parallel-jaw gripper; the full setup is shown in \Cref{fig:hw_task_suite}.
We use two RGB-D cameras: a ZED~2i as a scene camera, and an OAK-D Pro as a wrist camera.
Depth information is only used to extract the goal and not used for policy learning.
The controller operates at $20$\,Hz.
Inference runs on a single NVIDIA RTX 4090 Laptop GPU with 16\,GB of memory.
We use a Meta Quest 3 VR headset to collect teleoperated demonstrations.

The explicit task goal is extracted from the ZED RGB-D stream and represented in the robot base frame using hand--eye calibration.
SAM~3 \citep{carion2026sam} segments the target plate and FoundationPose \citep{wen2024foundationpose} registers and tracks its 6-DoF pose.

\textbf{Task and Distribution Shift.}
We evaluate two goal-conditioned manipulation tasks, shown in \Cref{fig:hw_task_suite}.
In \texttt{PnPMug}, the robot picks up a black mug and places it on a green plate.
We use the 3D position of the plate as the policy goal.
We collect $127$ teleoperated demonstrations for policy training with the plate restricted to $y>0$ and $x<0.7$.
For OOD evaluation, the plate is placed exclusively in the held-out region, such that no goal lies within the bounding box of the demonstrated goals.
We evaluate $50$ OOD initial conditions (ICs) with 2 trials per IC and $50$ ID initial conditions with 1 trial per IC, with a maximum episode length of $30$\,s ($1.5\times$ average human demo duration).

\texttt{InsertPipette} evaluates more precise goal extrapolation.
We use the discrete coordinate of test tube as the policy goal.
The robot picks up a pipette from a cradle and inserts it into a test tube positioned in one of the holes of a $4\times8$ rack, whose hole pitch is $36$\,mm.
The demonstration set for policy training contains $133$ trajectories spanning $10$ training holes in a $2\times5$ region.
OOD evaluation uses the remaining $22$ held-out holes, with five trials per hole. Each policy is evaluated in-distribution with 3 trials per hole. Maximum episode length is set to $60$\,s ($1.5\times$ average human demo duration).

\textbf{Policy Training.}
All policies use the same frozen, pretrained R3M ResNet-34 visual encoder \citep{nair2022r3m}.
The policy input consists of both camera views, a $10$-dimensional proprioceptive state containing the end-effector position, 6-D rotation representation, and gripper state, together with the explicit task goal.
Policies output absolute end-effector poses and gripper commands as $16$-step action chunks using a 1-D U-Net backbone.
We use prediction horizon and execution horizon $(T_p, T_a) = (16, 8)$.
We train all policies with a batch size of $32$ and a learning rate of $10^{-4}$.
Flow-based policies use $10$ integration steps at inference.
All hardware results use a single trained seed per method.

\textbf{Baselines and Evaluation.}
We compare \ours{} with two baselines.
\flow{} \citep{lipman_flow_2024} is a goal-conditioned flow-matching policy that uses the same visual encoder and action-generation backbone as \ours{}, but removes the anchor--residual decomposition and bilinear interaction.
\brp{} uses bilinear action regression, but replaces our learned candidate-set anchor scorer with synchronized, phase-based anchor selection.
We use temporal ensemble technique \citep{zhao2023learning}.

A human evaluator determines binary task success.
We additionally report a normalized dense reward in $[0,1]$ based on sequential task progress.
For \texttt{PnPMug}, the progress sequence is
\emph{reach $\rightarrow$ grasp $\rightarrow$ lift $\rightarrow$ release $\rightarrow$ place on plate $\rightarrow$ retreat}.
For \texttt{InsertPipette}, it is
\emph{reach $\rightarrow$ grasp-and-lift $\rightarrow$ align $\rightarrow$ insert $\rightarrow$ retreat},
where alignment requires the pipette to come within $2$\,cm of the target hole.
A stage contributes to the reward only after all preceding stages have been completed, while partial progress toward the next stage is credited fractionally.

\textbf{Results.}
\Cref{tab:hw_results} reports the complete task-level OOD and ID results.
We report both success rate and normalized dense reward; the corresponding task-averaged comparison is discussed in the main text.
\Cref{fig:spoid_ic_overlay_results,fig:pnp_ic_overlay_results} visualize the full per-task spatial distributions of successes and failures for both tasks.
\Cref{fig:hardware_violin} visualizes the per-task distribution of dense reward for both tasks.
Finally, \Cref{fig:hw_failure_mode_strips} shows time-lapse strips with progress phase labels on successful \ours{} rollouts for both tasks.
This figure also shows failed \brp{} and \flow{} rollouts which exhibit the common failure modes of these baselines.

\definecolor{srbg}{RGB}{232,241,250}
% Highlight SR cells while keeping the method column white.
\newcommand{\srshade}{\rowcolor{srbg}\cellcolor{white}}

\begin{table}[t]
\centering
\caption{
Performance in- and out-of-distribution on real-world robot hardware.
Success rate (SR, \%), our primary metric, is highlighted in blue.
Reward denotes normalized dense reward ($\times 100$).
Higher is better for both metrics.
Values are means $\pm$ standard error over trials:
\texttt{PnPMug} 50 ID / 100 OOD trials,
\texttt{InsertPipette} 30 ID / 110 OOD trials per method.
Bold and underlined values indicate the best and second-best
means, respectively, including ties.
}
\label{tab:hw_results}
\small
\setlength{\tabcolsep}{3pt}
\renewcommand{\arraystretch}{1.12}

\begin{tabular}{@{}llccc@{}}
\toprule
Method & Metric & \texttt{PnPMug} & \texttt{InsertPipette} & Avg. \\
\midrule
\multicolumn{5}{c}{\textit{(a) In-distribution (ID)}} \\
\midrule

\srshade
\multirow{2}{*}{\flow{}}
& \textbf{SR}
& $\sr{86}{5}$ & $\sr{83}{7}$ & $85$ \\
& Reward
& $\sr{95}{2}$ & $\underline{\sr{92}{3}}$ & $93$ \\
\midrule[0.3pt]

\srshade
\multirow{2}{*}{\brp{}}
& \textbf{SR}
& $\mathbf{\sr{100}{0}}$ & $\mathbf{\sr{100}{0}}$ & $\mathbf{100}$ \\
& Reward
& $\underline{\sr{97}{1}}$ & $\mathbf{\sr{99}{1}}$ & $\underline{98}$ \\
\midrule[0.3pt]

\srshade
\multirow{2}{*}{\textbf{\ours{}}}
& \textbf{SR}
& $\underline{\sr{98}{2}}$ & $\underline{\sr{97}{3}}$ & $\underline{97}$ \\
& Reward
& $\mathbf{\sr{99}{1}}$ & $\mathbf{\sr{99}{1}}$ & $\mathbf{99}$ \\

\midrule[0.8pt]
\multicolumn{5}{c}{\textit{(b) Out-of-distribution (OOD)}} \\
\midrule

\srshade
\multirow{2}{*}{\flow{}}
& \textbf{SR}
& $\sr{18}{4}$ & $\sr{18}{4}$ & $18$ \\
& Reward
& $\sr{35}{3}$ & $\sr{61}{2}$ & $48$ \\
\midrule[0.3pt]

\srshade
\multirow{2}{*}{\brp{}}
& \textbf{SR}
& $\underline{\sr{45}{5}}$ & $\underline{\sr{34}{5}}$ & $\underline{39}$ \\
& Reward
& $\underline{\sr{62}{4}}$ & $\underline{\sr{70}{2}}$ & $\underline{66}$ \\
\midrule[0.3pt]

\srshade
\multirow{2}{*}{\textbf{\ours{}}}
& \textbf{SR}
& $\mathbf{\sr{51}{5}}$ & $\mathbf{\sr{50}{5}}$ & $\mathbf{50}$ \\
& Reward
& $\mathbf{\sr{66}{4}}$ & $\mathbf{\sr{79}{2}}$ & $\mathbf{73}$ \\

\bottomrule
\end{tabular}
\end{table}

\begin{figure}
    \centering
    \includegraphics[width=\linewidth]{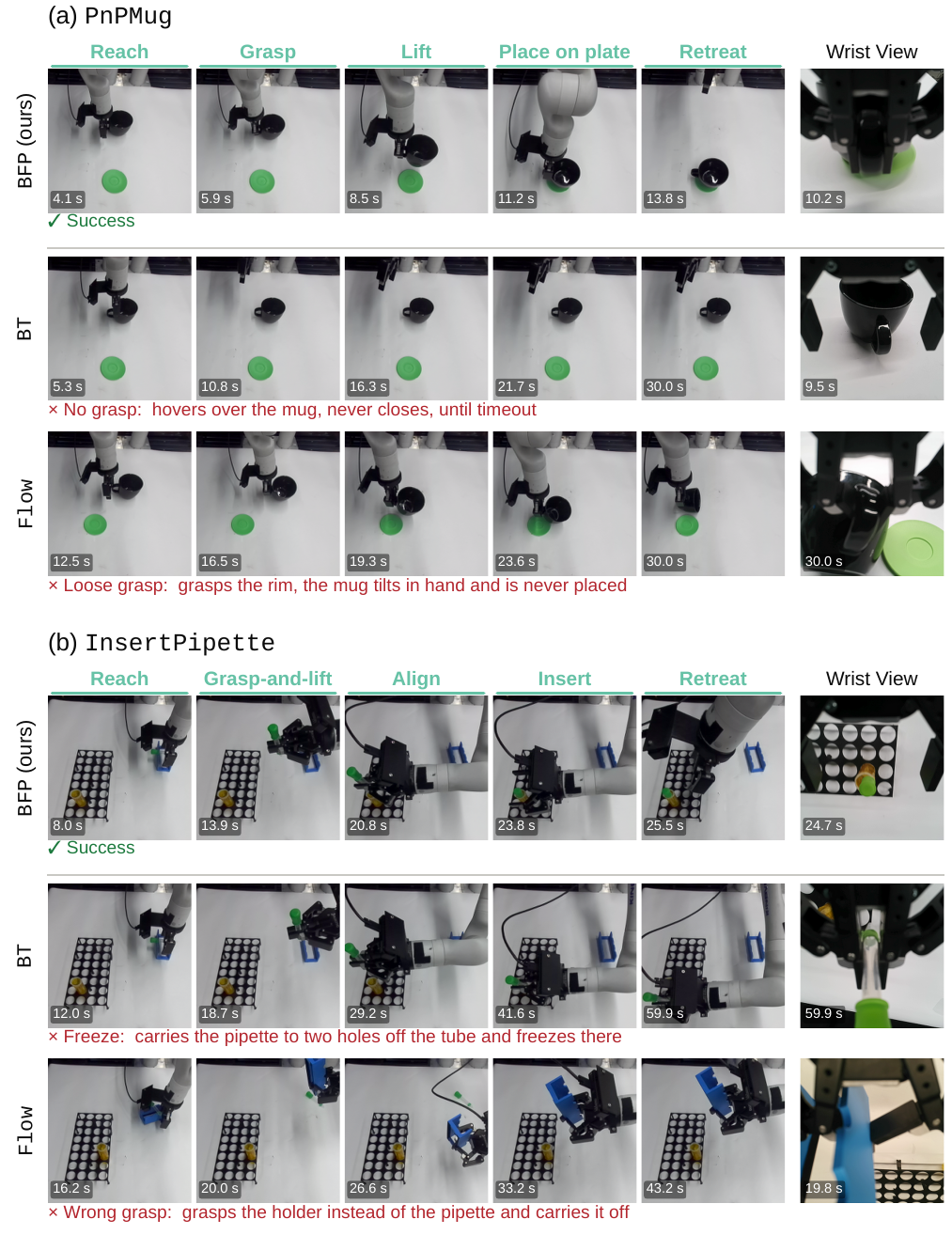}
    \caption{\textbf{OOD Rollout Strips.} Successful \ours{} rollouts in OOD labeled with the progress phase used to compute dense reward. \brp{} and \flow{} failure rollouts in OOD exhibit common failure modes. In \texttt{PnPMug} and \texttt{InsertPipette}, \flow{} often fails early in the task in grasping the manipuland despite its unchanging pose in the scene between ID training demonstrations and OOD rollouts. \brp{}'s time-synchronized anchor cannot re-synchronize once the robot lags behind it, so the residual drifts off support and leads to a freezing failure mode where the policy fails to make further progress. In \texttt{PnPMug} this often occurs in the grasping phase, and in \texttt{InsertPipette} often occurs during the alignment phase.}
    \label{fig:hw_failure_mode_strips}
\end{figure}

\begin{figure}
    \centering
    \includegraphics[width=\linewidth]{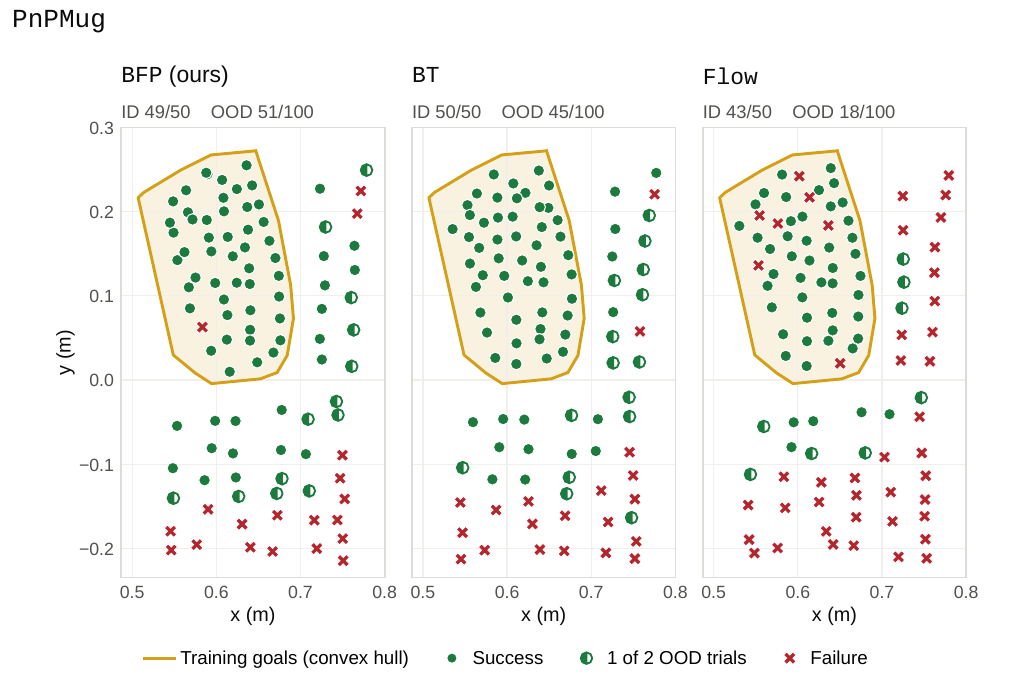}
    \caption{
        \textbf{\texttt{PnPMug} spatial distribution of successes and failures}. Each dot represents a goal object (plate) IC used in evaluation. Failures increase as the goal object is moved further from the training data distribution (ID). \ours{} achieves successful generalization to these OOD conditions more often and further away from the training data goal pose distribution compared to \flow{}.
    }
    \label{fig:pnp_ic_overlay_results}
\end{figure}

\begin{figure}
    \centering
    \includegraphics[width=\linewidth]{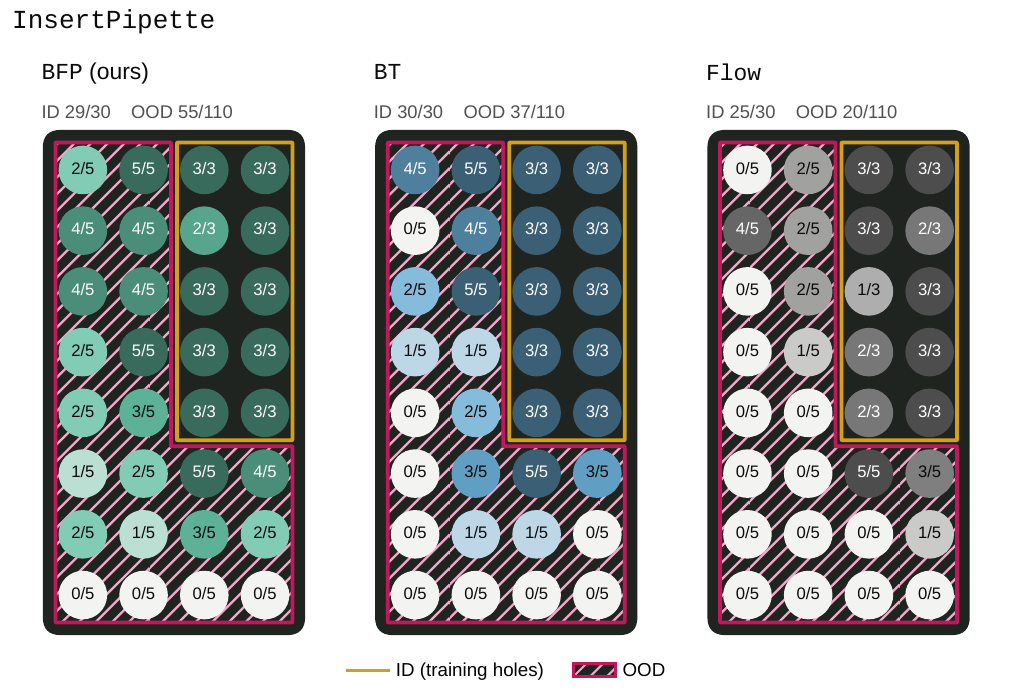}
    \caption{
        \textbf{\texttt{InsertPipette} spatial distribution of successes and failures}. Each circle represents a test-tube IC used for evaluation in a $4\times8$ test-tube tray. Generalization to unseen, OOD goals is substantially improved with \ours{}.
    }
    \label{fig:spoid_ic_overlay_results}
\end{figure}

\begin{figure}
    \centering
    \includegraphics[width=\linewidth]{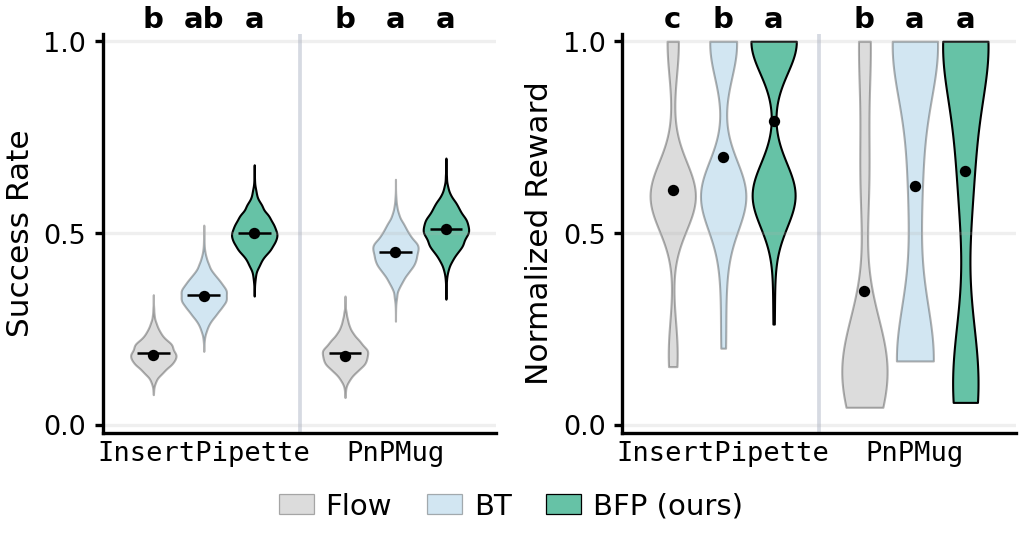}
    \caption{\textbf{Success rate and Dense reward under OOD evaluation on real-world robot hardware.}}
    \label{fig:hardware_violin}
\end{figure}

\clearpage
\section{Simulation Benchmark Details and Additional Results}
\label{app:experiment_details}

\textbf{Tasks and goals.}
We evaluate five manipulation tasks across seven settings:
\texttt{PushCube-D0/D1}, \texttt{Relocate-D0/D1},
\texttt{RotatePen}, \texttt{StackCube}, and \texttt{PickCube}
(\Cref{fig:task_suite}).
The tasks span versatile skills including planar pushing, dexterous relocation and rotation, pick-and-place, and stacking.
Each policy is conditioned on an explicit task goal; the goal definition, success criterion, and dense reward for each task are summarized in \Cref{tab:sim_task_definitions}.

\textbf{Distribution shifts.}
We evaluate extrapolation by holding out goal conditions from the training support
(\Cref{fig:sim_distribution_shift}).
The D0 settings isolate goal extrapolation under a restricted initial-condition distribution,
whereas D1 evaluates the same goal shift with increased initial-condition variation.
For tasks without a D0/D1 suffix, we use a D1 setting.
Exact demonstration counts and train, ID, and OOD supports are reported in
\Cref{tab:sim_distribution_splits}.

\textbf{Training Details.}
All methods use the same demonstration set for each task and observation regime.
$5\%$ of demonstrations is held out for validation and is not used for policy training, as specified in \Cref{tab:sim_distribution_splits}.
Method-specific architectures, extrapolation rules, and optimization details are provided in \Cref{app:impl_details}.

\textbf{Evaluation protocol and metrics.}
For each task--observation pair, each of three independently trained seeds is evaluated on the same $100$ ID and $100$ OOD initial conditions, yielding $200$ evaluation rollouts per seed.
We report the mean and standard error of the mean (SEM) across training seeds.
Our primary metric is binary task success.
We additionally report normalized dense reward, which measures partial task progress when the binary success criterion is not reached.

\begin{table}[t]
    \centering
    \small
    \setlength{\tabcolsep}{4pt}
    \renewcommand{\arraystretch}{1.15}
    \caption{
        \textbf{Simulation task definitions.}
        Goal denotes the compact representation used for policy conditioning.
        Success is evaluated at the final step of the episode.
        Dense reward is the environment's task-progress reward, normalized by the maximum episode return for each task.
    }
    \label{tab:sim_task_definitions}
    \begin{tabularx}{\textwidth}{@{}lXXX@{}}
        \toprule
        Task (Benchmark) & Goal & Success criterion & Dense reward \\
        \midrule

        \texttt{PushCube} (MetaWorld)
        &
        2D target position
        &
        target dist. $<5$\,cm
        &
        reach, grasp, and target progress
        \\

        \texttt{Relocate} (Adroit)
        &
        3D target position
        &
        target dist. $<10$\,cm
        &
        reach, lift, and target progress
        \\

        \texttt{RotatePen} (Adroit)
        &
        target orientation
        &
        ori. cos. $>0.95$, pos. err. $<7.5$\,cm
        &
        orientation and position progress
        \\

        \texttt{StackCube} (ManiSkill3)
        &
        target cube position
        &
        stacked, released, static
        &
        reach, grasp, stack, and release progress
        \\

        \texttt{PickCube} (ManiSkill3)
        &
        3D target position
        &
        target dist. $<2.5$\,cm
        &
        reach, grasp, and place progress
        \\

        \bottomrule
    \end{tabularx}
\end{table}

\begin{table}[t]
    \centering
    \small
    \setlength{\tabcolsep}{4pt}
    \renewcommand{\arraystretch}{1.12}
    \caption{
        \textbf{Simulation data and evaluation splits.}
        ID goals are sampled from the training goal support while OOD goals are held out.
        D0 fixes the object initial-state variation.
        Positions and angles are reported in meters and radians, respectively.
    }
    \label{tab:sim_distribution_splits}

    \begin{tabularx}{\textwidth}{@{}l c X X X@{}}
        \toprule
        Setting
        & Demos
        & ID goal support
        & OOD goal support
        & Object initial condition
        \\
        \midrule

        \texttt{PushCube-D0}
        & 200
        & $x\in[0,0.3]$ \newline $y\in[0.5,0.7]$
        & $x\in[-0.3,0]$ \newline $y\in[0.5,0.7]$
        & fixed
        \\

        \texttt{PushCube-D1}
        & 200
        & $x\in[0,0.3]$ \newline $y\in[0.5,0.7]$
        & $x\in[-0.3,0]$ \newline $y\in[0.5,0.7]$
        & puck $x\in[-0.06,0.06]$
        \\

        \texttt{Relocate-D0}
        & 100
        & $x,y\in[0,0.3]$ \newline $z\in[0.15,0.35]$
        & $x,y\in[-0.3,0]$ \newline $z\in[0.15,0.35]$
        & fixed
        \\

        \texttt{Relocate-D1}
        & 250
        & $x,y\in[0,0.3]$ \newline $z\in[0.15,0.35]$
        & $x,y\in[-0.3,0]$ \newline $z\in[0.15,0.35]$
        & ball $x\in[-0.1,0.1]$
        \\

        \texttt{RotatePen}
        & 187
        & roll $\in[0.2,1.0]$ \newline pitch $\in[-0.2,0.6]$
        & roll $\in[-0.3,0]$ \newline pitch $\in[-0.2,0.6]$
        & fixed
        \\

        \texttt{StackCube}
        & 200
        & $x\in[0, 0.2]$ \newline $y\in[0, 0.27]$
        & $x\in[-0.19,0]$ \newline $y\in[-0.25, 0]$
        & $x\in[-0.19,0.19]$ \newline
          $y\in[-0.29,0.30]$
        \\

        \texttt{PickCube}
        & 200
        & $x,y\in(0,0.1]$
        & $x,y\in[-0.1,0)$
        & cube $x,y\in[-0.1,0.1]$
        \\

        \bottomrule
    \end{tabularx}
\end{table}

\textbf{Results.}
\Cref{tab:ood_results} reports the complete task-level OOD results under state and visuomotor
observations.
We report both success rate and normalized dense reward; the corresponding task-averaged comparison
is discussed in the main text.
\Cref{fig:success_ood,fig:dense_reward_ood} visualize the full per-task distributions.

\Cref{tab:ind_results} reports the corresponding ID results.
Across methods, ID performance is generally high relative to OOD performance, indicating that
the primary differences between extrapolation methods emerge outside the demonstrated goal support.
\Cref{fig:success_id,fig:dense_reward_id} provide the complete per-task ID distributions.

\newcommand{\splitdatasetpanel}[4]{%
    \begin{subfigure}[t]{#1}
        \centering
        % \makebox[\linewidth]{%
        %     \makebox[0.5\linewidth][c]{\scriptsize\bfseries ID}%
        %     \makebox[0.5\linewidth][c]{\scriptsize\bfseries OOD}%
        % }
        \vspace{1pt}
        \includegraphics[width=\linewidth]{#2}
        \caption{#3}
        \label{#4}
    \end{subfigure}%
}

\begin{figure}[t]
    \centering

    % First row
    \subfloat[\texttt{PushCube}]{
        \includegraphics[
            width=0.18\columnwidth,
            keepaspectratio
        ]{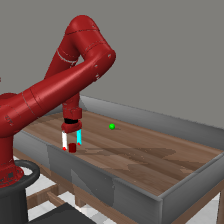}
    }
    % \hfill
    \subfloat[\texttt{Relocate}]{
        \includegraphics[
            width=0.18\columnwidth,
            keepaspectratio
        ]{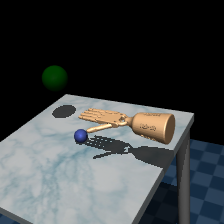}
    }
    % \hfill
    \subfloat[\texttt{RotatePen}]{
        \includegraphics[
            width=0.18\columnwidth,
            keepaspectratio
        ]{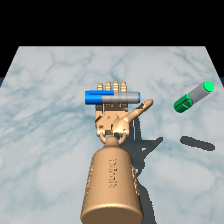}
    }
    % \hfill
    \subfloat[\texttt{StackCube}]{
        \includegraphics[
            width=0.18\columnwidth,
            keepaspectratio
        ]{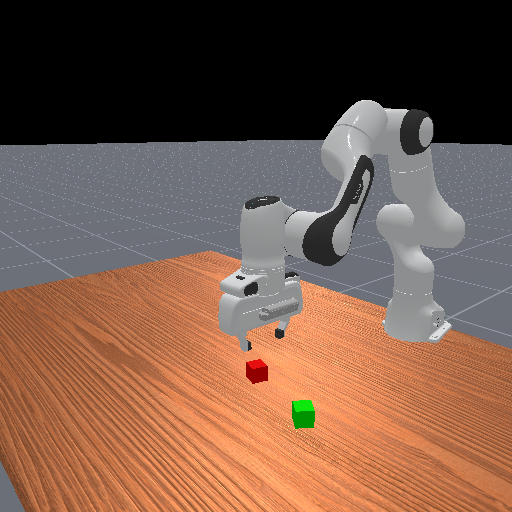}
    }
    % \hfill
    \subfloat[\texttt{PickCube}]{
        \includegraphics[
            width=0.18\columnwidth,
            keepaspectratio
        ]{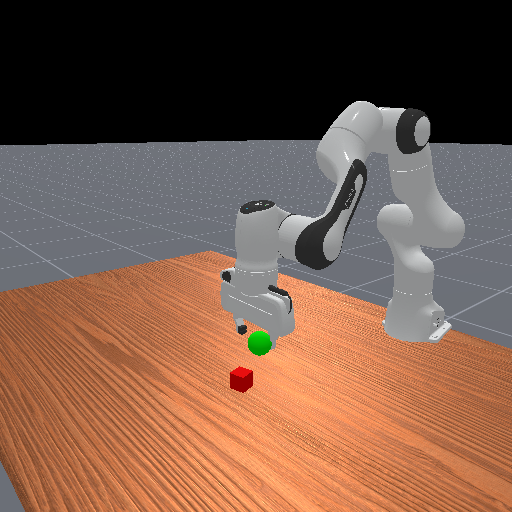}
    }
    % \hfill
    % \hfill
    % \subfloat[\texttt{{PegInsert}}]{
    %     \includegraphics[
    %         width=0.3\columnwidth,
    %         keepaspectratio
    %     ]{figures/task_peg_insert.png}
    % }

    \caption{
    \textbf{Evaluation Tasks.}
    (a) \texttt{PushCube},
    (b) \texttt{Relocate},
    (c) \texttt{RotatePen},
    (d) \texttt{StackCube},
    (e) \texttt{PickCube}.
    % (f) \texttt{PegInsert}.
    The suite spans pushing, dexterous object relocation and rotation, pick-and-place, stacking, and insertion.
    For definition of ID/OOD per task, please see \Cref{fig:sim_distribution_shift}.
    }
    \label{fig:task_suite}
\end{figure}

\begin{figure*}[t]
    \centering

    % First row
    \splitdatasetpanel{0.315\textwidth}
        {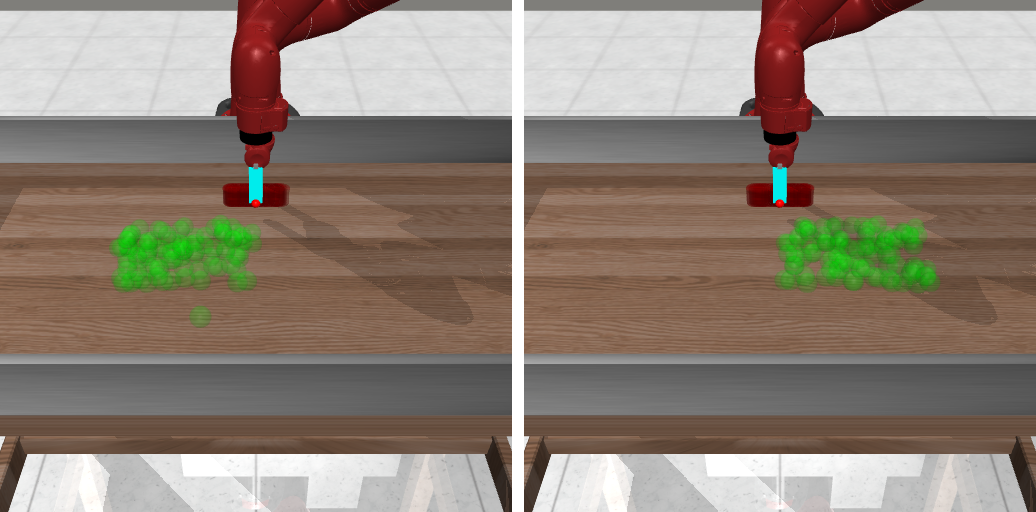}
        {\texttt{PushCube-D1}}
        {fig:split_push_d0}
    \hfill
    \splitdatasetpanel{0.315\textwidth}
        {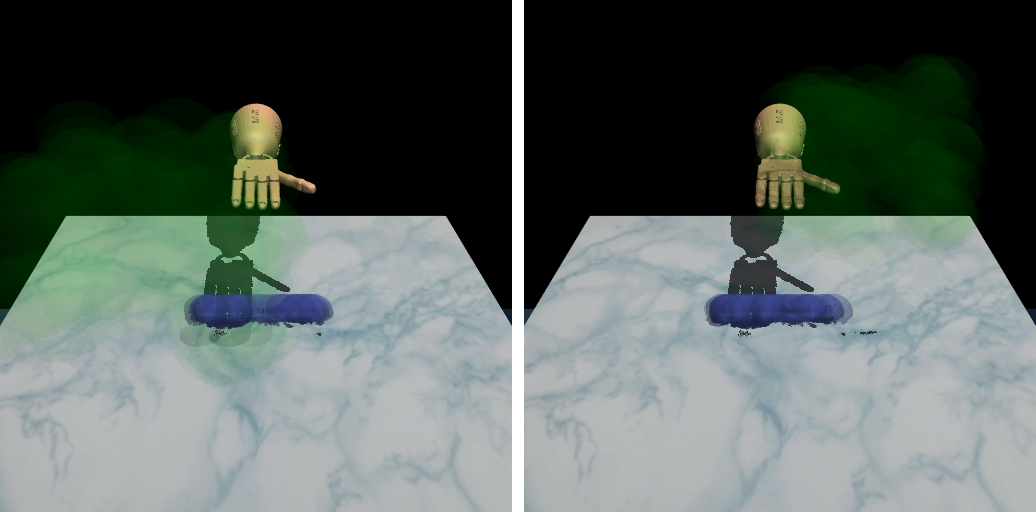}
        {\texttt{Relocate-D1}}
        {fig:split_push_d1}
    \hfill
    \splitdatasetpanel{0.315\textwidth}
        {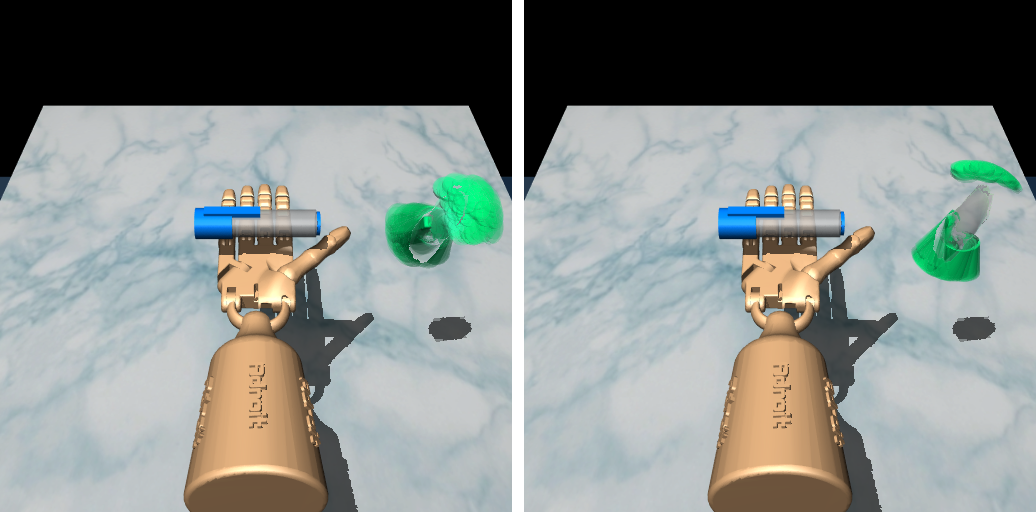}
        {\texttt{RotatePen}}
        {fig:split_relocate_d0}

    \vspace{0.8em}

    % Second row, centered
    \hspace*{0.16\textwidth}%
    \splitdatasetpanel{0.315\textwidth}
        {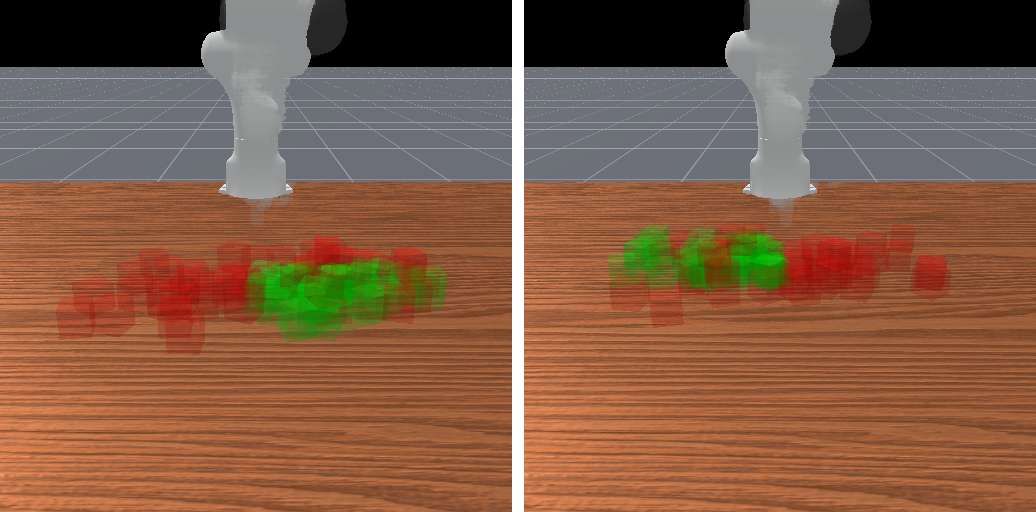}
        {\texttt{StackCube}}
        {fig:split_relocate_d1}
    \hfill
    \splitdatasetpanel{0.315\textwidth}
        {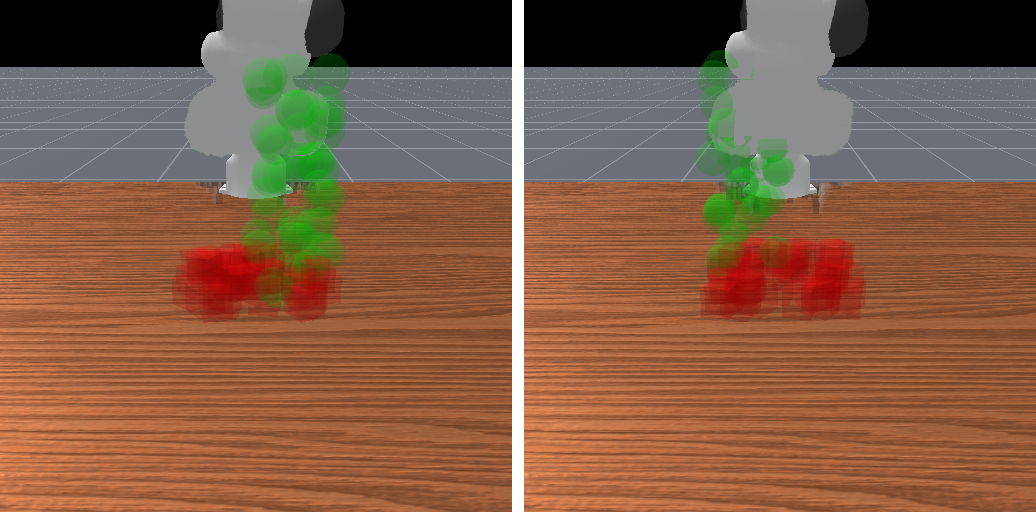}
        {\texttt{PickCube}}
        {fig:split_pen}
    \hspace*{0.16\textwidth}

    \caption{
        \textbf{Distribution Shift.}
        Each task panel visualizes the support of the training conditions
        (ID, left) and the held-out evaluation conditions (OOD, right).
        The paired views use the same camera and scene configuration so that
        the corresponding distribution shift is directly visible.
    }
    \label{fig:sim_distribution_shift}
\end{figure*}

\begin{table}[t]
\centering
\caption{
In-distribution performance with state and visuomotor observations.
Success rate (SR, \%), our primary metric, is highlighted in blue.
Reward denotes normalized dense reward ($\times 100$).
Higher is better for both metrics.
Bold and underlined values indicate the best and second-best
means, respectively, including ties.
Mean $\pm$ standard error over three training seeds.
}
\label{tab:ind_results}
\small
\setlength{\tabcolsep}{3pt}
\renewcommand{\arraystretch}{1.12}

\resizebox{\textwidth}{!}{%
\begin{tabular}{@{}llcccccccc@{}}
\toprule
Method & Metric
& \texttt{PushCube-D0}
& \texttt{PushCube-D1}
& \texttt{Relocate-D0}
& \texttt{Relocate-D1}
& \texttt{RotatePen}
& \texttt{StackCube}
& \texttt{PickCube}
& Avg. \\
\midrule
\multicolumn{10}{c}{\textit{(a) State}} \\
\midrule

\srshade
\multirow{2}{*}{\flow{}}
& \textbf{SR}
& $\mathbf{\sr{100}{0}}$ & $\mathbf{\sr{100}{0}}$
& $\sr{98}{1}$ & $\mathbf{\sr{100}{0}}$
& $\underline{\sr{97}{1}}$ & $\sr{62}{3}$
& $\underline{\sr{96}{1}}$ & $93$ \\
& Reward
& $\mathbf{\sr{89}{0}}$ & $\mathbf{\sr{88}{0}}$
& $\sr{81}{1}$ & $\sr{80}{1}$
& $\underline{\sr{87}{0}}$ & $\sr{64}{0}$
& $\sr{63}{0}$ & $\underline{79}$ \\
\midrule[0.3pt]

\srshade
\multirow{2}{*}{\darp{}}
& \textbf{SR}
& $\mathbf{\sr{100}{0}}$ & $\mathbf{\sr{100}{0}}$
& $\mathbf{\sr{100}{0}}$ & $\mathbf{\sr{100}{0}}$
& $\underline{\sr{97}{0}}$ & $\underline{\sr{75}{0}}$
& $\sr{87}{1}$ & $94$ \\
& Reward
& $\mathbf{\sr{89}{0}}$ & $\mathbf{\sr{88}{0}}$
& $\mathbf{\sr{84}{1}}$ & $\underline{\sr{82}{1}}$
& $\underline{\sr{87}{0}}$ & $\underline{\sr{67}{0}}$
& $\sr{64}{0}$ & $\mathbf{80}$ \\
\midrule[0.3pt]

\srshade
\multirow{2}{*}{\cta{}}
& \textbf{SR}
& $\mathbf{\sr{100}{0}}$ & $\underline{\sr{96}{3}}$
& $\mathbf{\sr{100}{0}}$ & $\mathbf{\sr{100}{0}}$
& $\underline{\sr{97}{0}}$ & $\sr{52}{2}$
& $\sr{82}{2}$ & $89$ \\
& Reward
& $\mathbf{\sr{89}{0}}$ & $\underline{\sr{86}{2}}$
& $\sr{78}{1}$ & $\sr{81}{1}$
& $\underline{\sr{87}{0}}$ & $\sr{58}{0}$
& $\sr{62}{0}$ & $77$ \\
\midrule[0.3pt]

\srshade
\multirow{2}{*}{\edp{}}
& \textbf{SR}
& $\mathbf{\sr{100}{0}}$ & $\mathbf{\sr{100}{0}}$
& $\underline{\sr{99}{0}}$ & $\mathbf{\sr{100}{0}}$
& $\mathbf{\sr{98}{1}}$ & $\sr{58}{2}$
& $\sr{90}{1}$ & $92$ \\
& Reward
& $\mathbf{\sr{89}{0}}$ & $\mathbf{\sr{88}{0}}$
& $\underline{\sr{82}{1}}$ & $\mathbf{\sr{83}{1}}$
& $\mathbf{\sr{88}{0}}$ & $\sr{63}{0}$
& $\sr{62}{0}$ & $\underline{79}$ \\
\midrule[0.3pt]

\srshade
\multirow{2}{*}{\brp{}}
& \textbf{SR}
& $\mathbf{\sr{100}{0}}$ & $\mathbf{\sr{100}{0}}$
& $\mathbf{\sr{100}{0}}$ & $\underline{\sr{99}{0}}$
& $\underline{\sr{97}{0}}$ & $\sr{74}{3}$
& $\mathbf{\sr{100}{0}}$ & $\underline{96}$ \\
& Reward
& $\mathbf{\sr{89}{0}}$ & $\mathbf{\sr{88}{0}}$
& $\sr{80}{0}$ & $\sr{81}{1}$
& $\underline{\sr{87}{0}}$ & $\sr{66}{1}$
& $\mathbf{\sr{67}{0}}$ & $\mathbf{80}$ \\
\midrule[0.3pt]

\srshade
\multirow{2}{*}{\textbf{\ours{}}}
& \textbf{SR}
& $\mathbf{\sr{100}{0}}$ & $\mathbf{\sr{100}{0}}$
& $\mathbf{\sr{100}{0}}$ & $\mathbf{\sr{100}{0}}$
& $\sr{96}{0}$ & $\mathbf{\sr{89}{1}}$
& $\mathbf{\sr{100}{0}}$ & $\mathbf{98}$ \\
& Reward
& $\underline{\sr{87}{0}}$ & $\mathbf{\sr{88}{0}}$
& $\underline{\sr{82}{1}}$ & $\sr{81}{0}$
& $\underline{\sr{87}{0}}$ & $\mathbf{\sr{71}{1}}$
& $\underline{\sr{65}{0}}$ & $\mathbf{80}$ \\

\midrule[0.8pt]
\multicolumn{10}{c}{\textit{(b) Visuomotor}} \\
\midrule

\srshade
\multirow{2}{*}{\flow{}}
& \textbf{SR}
& $\mathbf{\sr{100}{0}}$ & $\sr{94}{1}$
& $\underline{\sr{94}{2}}$ & $\sr{95}{2}$
& $\mathbf{\sr{99}{1}}$ & $\sr{60}{1}$
& $\sr{94}{1}$ & $91$ \\
& Reward
& $\mathbf{\sr{89}{0}}$ & $\sr{83}{1}$
& $\sr{77}{1}$ & $\sr{79}{2}$
& $\mathbf{\sr{89}{0}}$ & $\sr{60}{1}$
& $\sr{65}{0}$ & $77$ \\
\midrule[0.3pt]

\srshade
\multirow{2}{*}{\darp{}}
& \textbf{SR}
& $\mathbf{\sr{100}{0}}$ & $\sr{97}{0}$
& $\mathbf{\sr{100}{0}}$ & $\sr{89}{1}$
& $\mathbf{\sr{99}{1}}$ & $\sr{77}{1}$
& $\sr{90}{1}$ & $93$ \\
& Reward
& $\mathbf{\sr{89}{0}}$ & $\sr{86}{0}$
& $\mathbf{\sr{81}{0}}$ & $\sr{75}{1}$
& $\mathbf{\sr{89}{0}}$ & $\sr{68}{0}$
& $\mathbf{\sr{68}{0}}$ & $79$ \\
\midrule[0.3pt]

\srshade
\multirow{2}{*}{\cta{}}
& \textbf{SR}
& $\underline{\sr{98}{0}}$ & $\sr{75}{4}$
& $\sr{90}{2}$ & $\sr{74}{3}$
& $\sr{96}{2}$ & $\sr{55}{7}$
& $\sr{11}{0}$ & $71$ \\
& Reward
& $\sr{64}{8}$ & $\sr{43}{5}$
& $\sr{41}{1}$ & $\sr{36}{4}$
& $\sr{85}{0}$ & $\sr{58}{3}$
& $\sr{38}{1}$ & $52$ \\
\midrule[0.3pt]

\srshade
\multirow{2}{*}{\edp{}}
& \textbf{SR}
& $\mathbf{\sr{100}{0}}$ & $\mathbf{\sr{100}{0}}$
& $\mathbf{\sr{100}{0}}$ & $\mathbf{\sr{100}{0}}$
& $\sr{97}{0}$ & $\underline{\sr{83}{3}}$
& $\sr{94}{1}$ & $\underline{96}$ \\
& Reward
& $\mathbf{\sr{89}{0}}$ & $\mathbf{\sr{88}{0}}$
& $\underline{\sr{80}{1}}$ & $\mathbf{\sr{83}{1}}$
& $\underline{\sr{87}{0}}$ & $\underline{\sr{69}{2}}$
& $\mathbf{\sr{68}{1}}$ & $\mathbf{81}$ \\
\midrule[0.3pt]

\srshade
\multirow{2}{*}{\brp{}}
& \textbf{SR}
& $\mathbf{\sr{100}{0}}$ & $\sr{97}{1}$
& $\mathbf{\sr{100}{0}}$ & $\mathbf{\sr{100}{0}}$
& $\sr{97}{0}$ & $\underline{\sr{83}{2}}$
& $\mathbf{\sr{97}{1}}$ & $\underline{96}$ \\
& Reward
& $\mathbf{\sr{89}{0}}$ & $\sr{86}{1}$
& $\underline{\sr{80}{0}}$ & $\sr{80}{0}$
& $\sr{86}{0}$ & $\sr{68}{1}$
& $\underline{\sr{67}{0}}$ & $79$ \\
\midrule[0.3pt]

\srshade
\multirow{2}{*}{\textbf{\ours{}}}
& \textbf{SR}
& $\mathbf{\sr{100}{0}}$ & $\underline{\sr{99}{1}}$
& $\mathbf{\sr{100}{0}}$ & $\underline{\sr{99}{0}}$
& $\underline{\sr{98}{0}}$ & $\mathbf{\sr{88}{0}}$
& $\underline{\sr{96}{1}}$ & $\mathbf{97}$ \\
& Reward
& $\underline{\sr{87}{0}}$ & $\underline{\sr{87}{1}}$
& $\underline{\sr{80}{1}}$ & $\underline{\sr{81}{0}}$
& $\underline{\sr{87}{0}}$ & $\mathbf{\sr{70}{0}}$
& $\sr{66}{1}$ & $\underline{80}$ \\

\bottomrule
\end{tabular}%
}
\end{table}

% Uses your existing method macros and \sr{mean}{uncertainty}.
\begin{table}[t]
\centering
\caption{
OOD performance with state and visuomotor observations.
Success rate (SR, \%), our primary metric, is highlighted in blue.
Reward denotes normalized dense reward ($\times 100$).
Higher is better for both metrics.
Bold and underlined values indicate the best and second-best
means, respectively, including ties.
}
\label{tab:ood_results}
\small
\setlength{\tabcolsep}{3pt}
\renewcommand{\arraystretch}{1.12}

\resizebox{\textwidth}{!}{%
\begin{tabular}{@{}llcccccccc@{}}
\toprule
Method & Metric
& \texttt{PushCube-D0}
& \texttt{PushCube-D1}
& \texttt{Relocate-D0}
& \texttt{Relocate-D1}
& \texttt{RotatePen}
& \texttt{StackCube}
& \texttt{PickCube}
& Avg. \\
\midrule
\multicolumn{10}{c}{\textit{(a) State}} \\
\midrule

\srshade
\multirow{2}{*}{\flow{}}
& \textbf{SR}
& $\sr{31}{4}$ & $\sr{49}{2}$
& $\sr{26}{7}$ & $\sr{24}{2}$
& $\sr{52}{5}$ & $\sr{41}{1}$
& $\sr{80}{2}$ & $43$ \\
& Reward
& $\sr{27}{4}$ & $\sr{42}{1}$
& $\sr{18}{5}$ & $\sr{22}{2}$
& $\sr{49}{5}$ & $\sr{55}{0}$
& $\sr{50}{3}$ & $38$ \\
\midrule[0.3pt]

\srshade
\multirow{2}{*}{\darp{}}
& \textbf{SR}
& $\underline{\sr{36}{1}}$ & $\sr{45}{1}$
& $\sr{43}{11}$ & $\sr{47}{8}$
& $\sr{75}{10}$ & $\sr{2}{1}$
& $\sr{5}{1}$ & $36$ \\
& Reward
& $\sr{31}{1}$ & $\sr{40}{1}$
& $\sr{29}{7}$ & $\sr{36}{6}$
& $\sr{65}{8}$ & $\sr{38}{1}$
& $\sr{40}{0}$ & $40$ \\
\midrule[0.3pt]

\srshade
\multirow{2}{*}{\cta{}}
& \textbf{SR}
& $\underline{\sr{36}{8}}$ & $\sr{48}{9}$
& $\sr{15}{4}$ & $\sr{24}{4}$
& $\sr{77}{12}$ & $\sr{12}{3}$
& $\sr{25}{3}$ & $34$ \\
& Reward
& $\sr{49}{4}$ & $\sr{45}{7}$
& $\sr{10}{2}$ & $\sr{16}{1}$
& $\sr{70}{10}$ & $\sr{35}{2}$
& $\sr{45}{2}$ & $39$ \\
\midrule[0.3pt]

\srshade
\multirow{2}{*}{\edp{}}
& \textbf{SR}
& $\mathbf{\sr{100}{0}}$ & $\mathbf{\sr{100}{0}}$
& $\sr{27}{4}$ & $\sr{85}{2}$
& $\underline{\sr{88}{1}}$ & $\underline{\sr{61}{2}}$
& $\sr{85}{1}$ & $78$ \\
& Reward
& $\mathbf{\sr{88}{0}}$ & $\mathbf{\sr{88}{0}}$
& $\sr{22}{4}$ & $\sr{53}{2}$
& $\sr{76}{1}$ & $\underline{\sr{60}{2}}$
& $\sr{59}{0}$ & $64$ \\
\midrule[0.3pt]

\srshade
\multirow{2}{*}{\brp{}}
& \textbf{SR}
& $\mathbf{\sr{100}{0}}$ & $\underline{\sr{94}{2}}$
& $\underline{\sr{76}{9}}$ & $\underline{\sr{99}{1}}$
& $\sr{86}{1}$ & $\sr{51}{1}$
& $\underline{\sr{99}{0}}$ & $\underline{86}$ \\
& Reward
& $\underline{\sr{85}{0}}$ & $\sr{77}{1}$
& $\underline{\sr{47}{3}}$ & $\underline{\sr{72}{2}}$
& $\underline{\sr{78}{1}}$ & $\sr{57}{1}$
& $\mathbf{\sr{62}{0}}$ & $\underline{68}$ \\
\midrule[0.3pt]

\srshade
\multirow{2}{*}{\textbf{\ours{}}}
& \textbf{SR}
& $\mathbf{\sr{100}{0}}$ & $\mathbf{\sr{100}{0}}$
& $\mathbf{\sr{100}{0}}$ & $\mathbf{\sr{100}{0}}$
& $\mathbf{\sr{96}{1}}$ & $\mathbf{\sr{78}{9}}$
& $\mathbf{\sr{100}{0}}$ & $\mathbf{96}$ \\
& Reward
& $\sr{83}{0}$ & $\underline{\sr{83}{0}}$
& $\mathbf{\sr{67}{1}}$ & $\mathbf{\sr{81}{1}}$
& $\mathbf{\sr{81}{2}}$ & $\mathbf{\sr{68}{3}}$
& $\underline{\sr{61}{1}}$ & $\mathbf{75}$ \\

\midrule[0.8pt]
\multicolumn{10}{c}{\textit{(b) Visuomotor}} \\
\midrule

\srshade
\multirow{2}{*}{\flow{}}
& \textbf{SR}
& $\sr{28}{0}$ & $\sr{34}{1}$
& $\sr{21}{3}$ & $\sr{17}{2}$
& $\sr{50}{4}$ & $\sr{11}{3}$
& $\sr{45}{6}$ & $30$ \\
& Reward
& $\sr{26}{0}$ & $\sr{31}{1}$
& $\sr{13}{1}$ & $\sr{14}{2}$
& $\sr{68}{3}$ & $\sr{34}{1}$
& $\sr{43}{2}$ & $33$ \\
\midrule[0.3pt]

\srshade
\multirow{2}{*}{\darp{}}
& \textbf{SR}
& $\sr{31}{3}$ & $\sr{23}{1}$
& $\sr{33}{8}$ & $\underline{\sr{32}{12}}$
& $\underline{\sr{57}{7}}$ & $\sr{16}{1}$
& $\sr{0}{0}$ & $28$ \\
& Reward
& $\sr{28}{2}$ & $\sr{21}{1}$
& $\sr{20}{5}$ & $\underline{\sr{24}{8}}$
& $\underline{\sr{69}{4}}$ & $\sr{43}{1}$
& $\sr{45}{0}$ & $36$ \\
\midrule[0.3pt]

\srshade
\multirow{2}{*}{\cta{}}
& \textbf{SR}
& $\sr{11}{1}$ & $\sr{8}{2}$
& $\sr{1}{0}$ & $\sr{0}{0}$
& $\sr{23}{3}$ & $\sr{1}{0}$
& $\sr{0}{0}$ & $6$ \\
& Reward
& $\sr{24}{2}$ & $\sr{12}{2}$
& $\sr{3}{0}$ & $\sr{2}{0}$
& $\sr{29}{5}$ & $\sr{38}{2}$
& $\sr{31}{1}$ & $20$ \\
\midrule[0.3pt]

\srshade
\multirow{2}{*}{\edp{}}
& \textbf{SR}
& $\sr{66}{11}$ & $\underline{\sr{59}{4}}$
& $\sr{7}{2}$ & $\sr{26}{4}$
& $\sr{14}{3}$ & $\underline{\sr{41}{2}}$
& $\underline{\sr{79}{2}}$ & $42$ \\
& Reward
& $\sr{58}{10}$ & $\underline{\sr{51}{3}}$
& $\sr{6}{0}$ & $\sr{17}{2}$
& $\sr{25}{3}$ & $\underline{\sr{47}{2}}$
& $\sr{57}{2}$ & $37$ \\
\midrule[0.3pt]

\srshade
\multirow{2}{*}{\brp{}}
& \textbf{SR}
& $\underline{\sr{99}{1}}$ & $\sr{57}{0}$
& $\underline{\sr{57}{3}}$ & $\sr{29}{3}$
& $\underline{\sr{57}{1}}$ & $\sr{28}{1}$
& $\underline{\sr{79}{2}}$ & $\underline{58}$ \\
& Reward
& $\mathbf{\sr{78}{1}}$ & $\sr{50}{0}$
& $\underline{\sr{40}{1}}$ & $\sr{19}{2}$
& $\sr{60}{2}$ & $\sr{40}{0}$
& $\mathbf{\sr{62}{0}}$ & $\underline{50}$ \\
\midrule[0.3pt]

\srshade
\multirow{2}{*}{\textbf{\ours{}}}
& \textbf{SR}
& $\mathbf{\sr{100}{0}}$ & $\mathbf{\sr{83}{1}}$
& $\mathbf{\sr{99}{1}}$ & $\mathbf{\sr{54}{4}}$
& $\mathbf{\sr{78}{5}}$ & $\mathbf{\sr{48}{5}}$
& $\mathbf{\sr{88}{2}}$ & $\mathbf{79}$ \\
& Reward
& $\underline{\sr{74}{1}}$ & $\mathbf{\sr{72}{1}}$
& $\mathbf{\sr{68}{2}}$ & $\mathbf{\sr{38}{1}}$
& $\mathbf{\sr{80}{1}}$ & $\mathbf{\sr{52}{1}}$
& $\underline{\sr{60}{0}}$ & $\mathbf{63}$ \\

\bottomrule
\end{tabular}%
}
\end{table}

\begin{figure}[t]
    \centering
    \includegraphics[width=\linewidth]{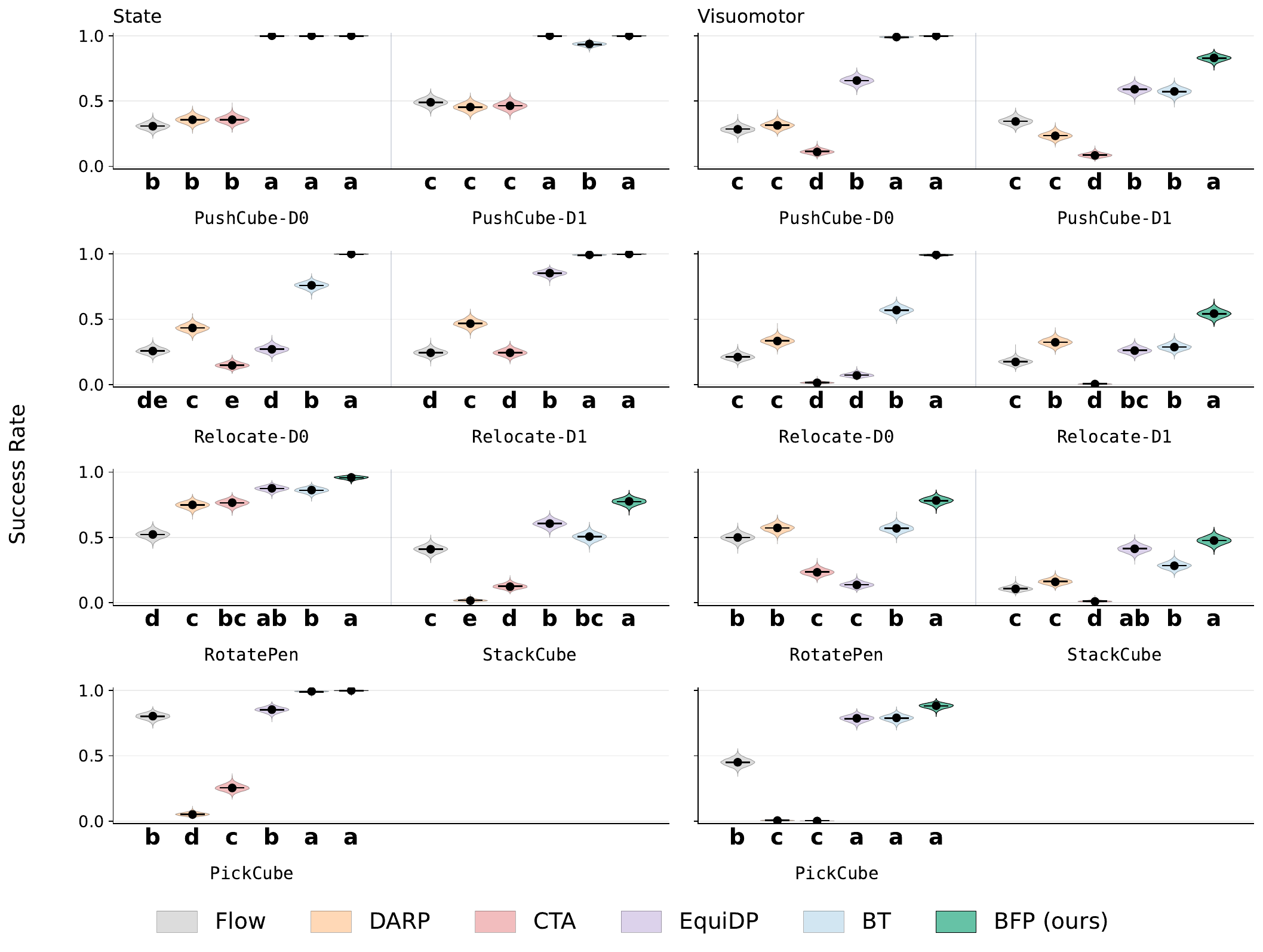}

    \caption{
        \textbf{Success rate under OOD evaluation.}
    }
    \label{fig:success_ood}
\end{figure}

\begin{figure}[t]
    \centering
    \includegraphics[width=\linewidth]{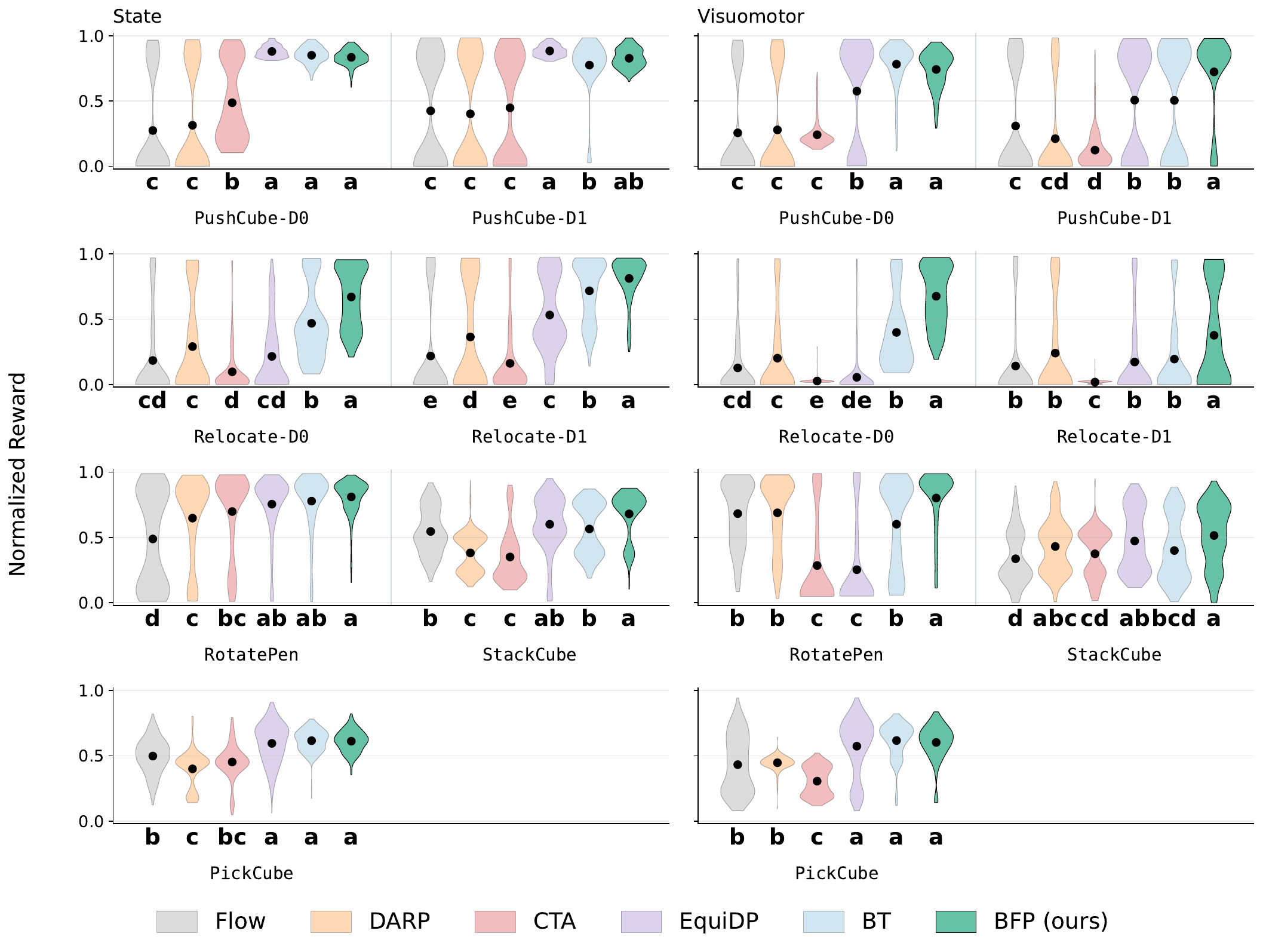}

    \caption{
        \textbf{Dense reward under OOD evaluation.}
    }
    \label{fig:dense_reward_ood}
\end{figure}

\begin{figure}[t]
    \centering
    \includegraphics[width=\linewidth]{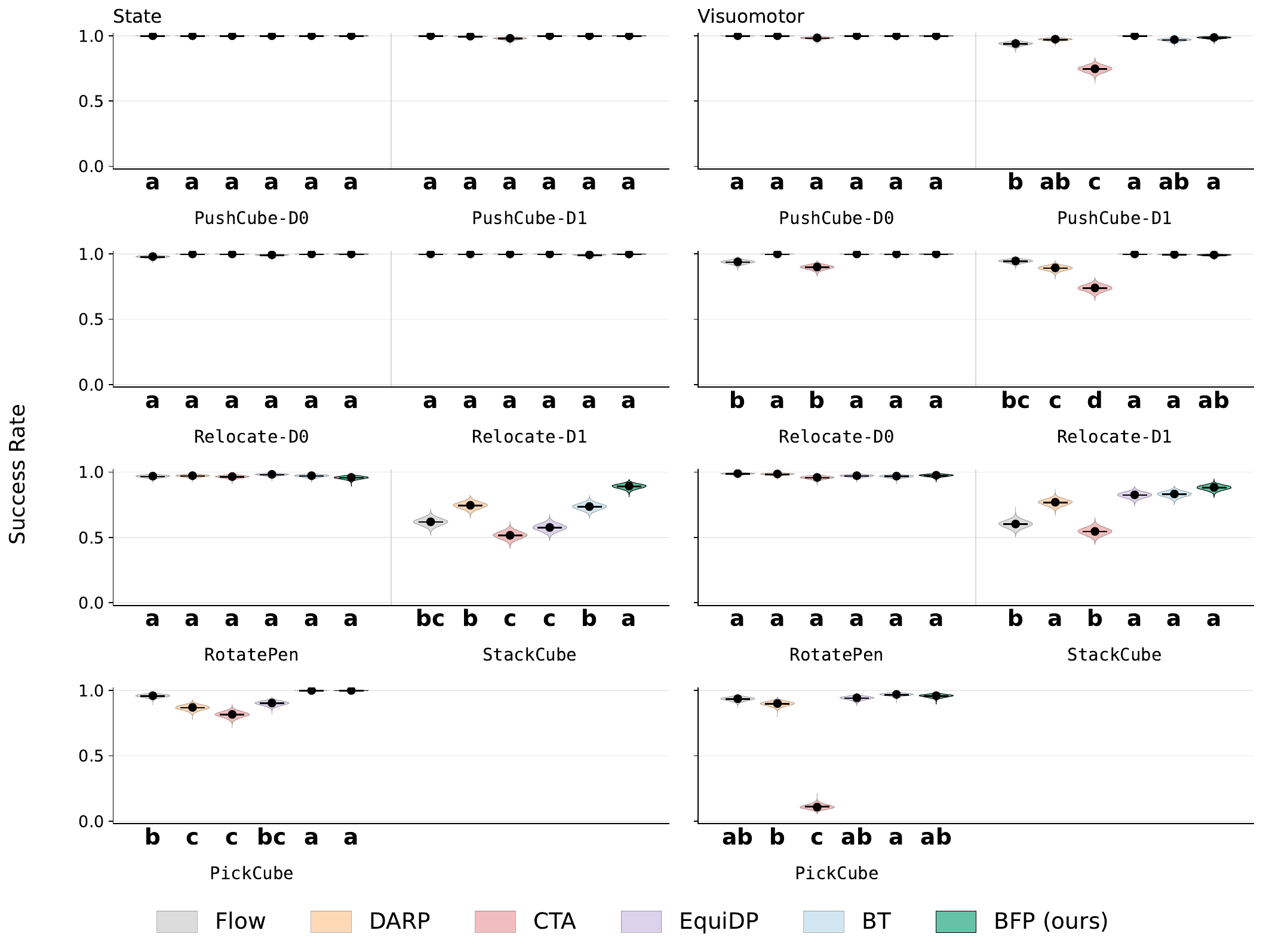}

    \caption{
    \textbf{Success rate under ID evaluation.}
    }
    \label{fig:success_id}
\end{figure}

\begin{figure}[t]
    \centering
    \includegraphics[width=\linewidth]{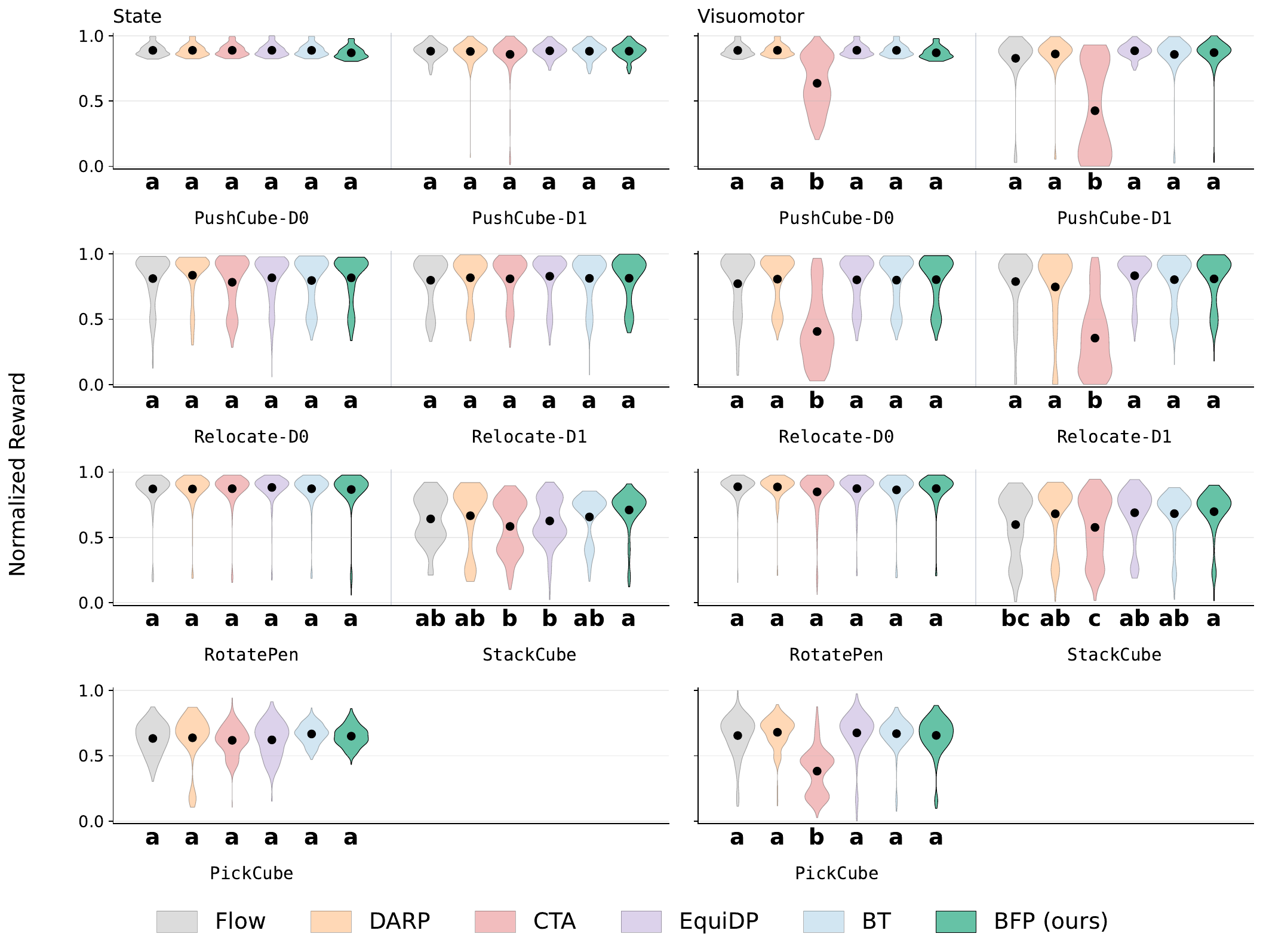}

    \caption{
        \textbf{Dense reward under ID evaluation.}
    }
    \label{fig:dense_reward_id}
\end{figure}

\clearpage

\section{Theoretical Result}
\label{app:theory}

\subsection{Background: Low-Rank Bilinear Combinatorial Extrapolation}
\label{app:theory:background}
In this section, we review the prior combinatorial-extrapolation result \citep{netanyahu2023learning, simchowitz2023tackling} used in the proof and analysis of our method.
We first formulate the regression problem in the anchor--residual notation used by \ours{}, then state the low-rank bilinear extrapolation bound and its matrix-completion interpretation.
\Cref{app:bfp_problem_setup} shows how the BFP encoder, anchor policies, and conditional flow matching instantiate this problem for the velocity field.

\subsubsection{Combinatorial extrapolation}
\label{app:prior_pair_problem}

Consider a regression problem in which a target $Y$ is predicted from a residual feature $\RelDelta$ and an anchor feature $\AnchorLatent$.
For each regime $s\in\{\Train,\Test\}$, let $\overline D_s$ denote the distribution of $(\RelDelta,\AnchorLatent)$, and assume that training and test share the same Bayes regression function
\begin{equation}
    h^\star(\reldelta,\anchorlatent)
    :=
    \E_s\!\left[
        Y
        \mid
        \RelDelta=\reldelta,
        \AnchorLatent=\anchorlatent
    \right],
    \qquad
    s\in\{\Train,\Test\}.
    \label{eq:app_prior_pair_target}
\end{equation}
A rank-$r$ bilinear predictor has the form
\begin{equation}
    h_\theta(\reldelta,\anchorlatent)
    :=
    \left\langle
        f_\theta(\reldelta),
        g_\theta(\anchorlatent)
    \right\rangle_r,
    \qquad
    f_\theta(\reldelta),g_\theta(\anchorlatent)\in\R^r.
    \label{eq:app_prior_pair_predictor}
\end{equation}
Its squared prediction risk in regime $s$ is
\begin{equation}
    \mathcal R_s(h_\theta)
    :=
    \E_{(\RelDelta,\AnchorLatent)\sim\overline D_s}
    \left[
        \left(
            h_\theta(\RelDelta,\AnchorLatent)
            -
            h^\star(\RelDelta,\AnchorLatent)
        \right)^2
    \right].
    \label{eq:app_prior_pair_risk}
\end{equation}
Combinatorial extrapolation asks when small training risk implies small test risk even though $\overline D_{\Test}$ may contain residual--anchor combinations absent from $\overline D_{\Train}$.
Vector-valued targets are handled coordinate-wise, and their squared risks are summed over output coordinates.

\noindent\textbf{Notation.}
We use the BFP symbols $\RelDelta$ and $\AnchorLatent$ from the outset.
Within this appendix, they simply name the two regression features; in BFP, $\AnchorLatent$ is the encoded anchor and $\RelDelta$ is the encoded query--anchor residual.
All inner products $\langle\cdot,\cdot\rangle_r$ below are taken in $\R^r$.

\citet{simchowitz2023tackling} study this combinatorial-extrapolation problem directly for paired features.
Bilinear Transduction \citep{netanyahu2023learning} instead constructs it from an ordinary regression problem: given a query $X$, select an anchor $X^\dagger$ and define
\begin{equation}
    \RelDelta:=X-X^\dagger,
    \qquad
    \AnchorLatent:=X^\dagger,
    \qquad
    h^\star(\reldelta,\anchorlatent)
    :=
    h_{\mathrm{orig}}^\star(\anchorlatent+\reldelta).
    \label{eq:app_bt_instantiation}
\end{equation}
\ours{} uses the same construction in representation space,
$\AnchorLatent=\obsenc(X^\dagger)$ and
$\RelDelta=\obsenc(X)-\obsenc(X^\dagger)$;
\Cref{app:bfp_problem_setup} derives the resulting train and test laws and identifies the conditional-flow regression target.

\subsubsection{Low-rank bilinear combinatorial extrapolation bound}
\label{app:prior_bilinear_extrapolation}

For nonnegative finite measures $\mu$ and $\nu$, write
$\mu\preceq_\kappa\nu$ when
\begin{equation}
    \mu(B)\leq\kappa\nu(B)
    \qquad
    \text{for every measurable set }B.
    \label{eq:app_measure_domination}
\end{equation}
The two assumptions below correspond to \coveragereq{} and \structurereq{} in the main text.

\begin{assum}[Combinatorial coverage]
\label{ass:app_coverage}
There exist residual marginals
$D_{\Delta,1},D_{\Delta,2}$,
anchor marginals
$D_{\AnchorLatent,1},D_{\AnchorLatent,2}$,
and a constant $\kappa\geq1$ such that, with
\begin{equation}
    D_{i\otimes j}
    :=
    D_{\Delta,i}\otimes D_{\AnchorLatent,j},
    \qquad
    i,j\in\{1,2\},
    \label{eq:app_product_blocks}
\end{equation}
we have
\begin{equation}
    \sum_{(i,j)\neq(2,2)}D_{i\otimes j}
    \preceq_\kappa
    \trainpair,
    \qquad
    \testpair
    \preceq_\kappa
    \sum_{i,j\in\{1,2\}}D_{i\otimes j}.
    \label{eq:app_blockwise_coverage}
\end{equation}
\end{assum}

\Cref{ass:app_coverage} requires training to cover the three blocks $D_{1\otimes1}$, $D_{1\otimes2}$, and $D_{2\otimes1}$, while evaluation may additionally contain the unseen combination $D_{2\otimes2}$.

\begin{assum}[Well-conditioned low-rank bilinear structure]
\label{ass:app_structure}
There exist feature maps
\begin{equation}
    f^\star:\latspace_{\Delta}\rightarrow\R^r,
    \qquad
    g^\star:\latspace_{\ancmark}\rightarrow\R^r,
\end{equation}
such that, for all residual--anchor pairs,
\begin{equation}
    h^\star(\reldelta,\anchorlatent)
    =
    \left\langle
        f^\star(\reldelta),
        g^\star(\anchorlatent)
    \right\rangle_r,
    \qquad
    \left|h^\star(\reldelta,\anchorlatent)\right|
    \leq M.
    \label{eq:app_low_rank_structure}
\end{equation}
Furthermore, under $D_{1\otimes1}$, define
\begin{equation}
\begin{aligned}
    \Sigma_\Delta^\star
    &:={}
    \E_{\RelDelta\sim D_{\Delta,1}}
    \left[
        f^\star(\RelDelta)f^\star(\RelDelta)^\top
    \right],
    \\
    \Sigma_{\AnchorLatent}^\star
    &:={}
    \E_{\AnchorLatent\sim D_{\AnchorLatent,1}}
    \left[
        g^\star(\AnchorLatent)g^\star(\AnchorLatent)^\top
    \right].
\end{aligned}
\label{eq:app_reference_factor_covariances}
\end{equation}
The feature directions are non-degenerate:
\begin{equation}
    \sigma^2
    :=
    \lambda_{\min}(\Sigma_\Delta^\star)
    \lambda_{\min}(\Sigma_{\AnchorLatent}^\star)
    >0.
    \label{eq:app_feature_excitation}
\end{equation}
\end{assum}

For \ours{}, this assumption becomes central design objective of encoder and retrieval policy. 
We aim to find a representation and retrieval policy that action is a low-rank bilinear product of residual and the anchor, while avoiding the collapse.

\begin{thm}[Low-rank bilinear combinatorial extrapolation bound; prior result]
\label{thm:app_bilinear_extrapolation}
Suppose \Cref{ass:app_coverage} and \Cref{ass:app_structure} hold, and let $h_\theta$ be the rank-$r$ predictor in \eqref{eq:app_prior_pair_predictor}.
If
\begin{equation}
    \mathcal R_{\Train}(h_\theta)
    \leq
    \frac{\sigma^2}{4\kappa},
    \label{eq:app_bilinear_small_error}
\end{equation}
then
\begin{equation}
    \mathcal R_{\Test}(h_\theta)
    \leq
    \kappa^2
    \left(
        1+\frac{64M^4}{\sigma^4}
    \right)
    \mathcal R_{\Train}(h_\theta).
    \label{eq:app_bilinear_extrapolation_bound}
\end{equation}
\end{thm}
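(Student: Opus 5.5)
The plan is to reduce the statement to a \emph{block-wise} error-transfer inequality and then use \Cref{ass:app_coverage} twice. Write the pointwise error $e(\reldelta,\anchorlatent):=h_\theta(\reldelta,\anchorlatent)-h^\star(\reldelta,\anchorlatent)$ and the block errors $\mathcal E_{ij}:=\E_{D_{i\otimes j}}[e^2]$.

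\textbf{Reduction via coverage.} The first half of \eqref{eq:app_blockwise_coverage} gives
\[
\mathcal E_{11}+\mathcal E_{12}+\mathcal E_{21}\le\kappa\,\mathcal R_{\Train}(h_\theta).
\]
The second half gives
\[
\mathcal R_{\Test}(h_\theta)\le\kappa\,\bigl(\mathcal E_{11}+\mathcal E_{12}+\mathcal E_{21}+\mathcal E_{22}\bigr).
\]
It therefore suffices to prove the core inequality
\[
\mathcal E_{22}\le\frac{64M^4}{\sigma^4}\,\bigl(\mathcal E_{11}+\mathcal E_{12}+\mathcal E_{21}\bigr).
\]
Combining the three displays yields exactly $\kappa^2\bigl(1+64M^4/\sigma^4\bigr)\mathcal R_{\Train}$.

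\textbf{Core step: a functional matrix-completion identity.} View $h^\star$ and $h_\theta$ as kernels on $(\reldelta,\anchorlatent)$, i.e.\ $2\times2$ block ``matrices'' of rank at most $r$. For a rank-$r$ matrix $\bigl[\begin{smallmatrix}A&B\\C&D\end{smallmatrix}\bigr]$ with $A$ of full rank $r$, one has $D=CA^{+}B$. I would implement this functionally through $r\times r$ operators on the $1\otimes1$ block.

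First, introduce the cross-moment maps
\[
P:=\E_{D_{\Delta,1}}\bigl[f^\star f_\theta^\top\bigr],\qquad Q:=\E_{D_{\AnchorLatent,1}}\bigl[g_\theta g^{\star\top}\bigr],
\]
together with the covariances $\Sigma_\Delta^\theta$ and $\Sigma^\theta_{\AnchorLatent}$ of $f_\theta$ and $g_\theta$.

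Next, regress $f^\star(\reldelta)$ on the $1$-block test functions. Because $\Sigma_\Delta^\star\Sigma_{\AnchorLatent}^\star$ is nondegenerate, $\E_{\AnchorLatent\sim D_{\AnchorLatent,1}}\bigl[h^\star(\reldelta,\AnchorLatent)\,g^\star(\AnchorLatent)^\top\bigr]=f^\star(\reldelta)^\top\Sigma^\star_{\AnchorLatent}$. This expresses $f^\star(\reldelta)$ as a linear functional of the row $h^\star(\reldelta,\cdot)$ restricted to block~1, and symmetrically for $g^\star(\anchorlatent)$.

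Apply the same construction to $h_\theta$. Then $h_\theta(\reldelta,\anchorlatent)$ on block $2\otimes2$ equals the row $h_\theta(\reldelta,\cdot)|_{1}$, composed with an inverse Gram operator built from the block-$1\otimes1$ values of $h_\theta$, composed with the column $h_\theta(\cdot,\anchorlatent)|_1$. The same identity holds for $h^\star$.

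Subtracting the two identities and expanding telescopically gives three terms, each a product of
\begin{itemize}
\item one error factor, controlled by $\mathcal E_{21}$, $\mathcal E_{12}$, or $\mathcal E_{11}$, and
\item bounded factors, each at most $M$ (by $|h^\star|\le M$) times inverse-Gram norms at most of order $1/\sigma^2$.
\end{itemize}
Squaring and applying Cauchy--Schwarz should produce the constant $64M^4/\sigma^4$. I expect it to arise as $(2\cdot 2M^2/\sigma^2)^2$: a factor $2$ from eigenvalue halving and a factor $2$ from splitting into terms.

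\textbf{Main obstacle: the perturbed inverse.} The hard step is controlling the inverse Gram operator built from $h_\theta$ rather than $h^\star$; its smallest singular value must not collapse. This is exactly where the smallness condition \eqref{eq:app_bilinear_small_error} enters. Using $\mathcal E_{11}\le\kappa\mathcal R_{\Train}\le\sigma^2/4$, I would show by a Weyl/Hoffman--Wielandt perturbation argument that the relevant $r\times r$ operator ($Q^\top\Sigma^\star$-type products, or $P^\top Q$) has smallest singular value at least $\sigma^2-\sqrt{\mathcal E_{11}}\cdot\sigma\ge\sigma^2/2$.

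The first thing I would try is the cleaner route of \citet{simchowitz2023tackling}. Since $\theta$'s factorization is only identifiable up to an invertible $r\times r$ change of basis, I would fix that gauge so that $g_\theta\approx Tg^\star$ on $D_{\AnchorLatent,1}$, making $T$ the least-squares map, and then bound $\|T^{-1}\|$ via the eigenvalue lower bound just described. The $M$ factors enter only through $\|f^\star\|$ and $\|g^\star\|$, each bounded on average via $\E[(h^\star)^2]\le M^2$ together with the covariance normalization.
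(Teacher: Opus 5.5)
Your proposal is correct in outline and follows the route the paper points to. The coverage assumption reduces the claim to the block inequality $\mathcal E_{22}\le(64M^4/\sigma^4)(\mathcal E_{11}+\mathcal E_{12}+\mathcal E_{21})$, which is the rank-$r$ missing-not-at-random matrix-completion step behind the cited prior result: the generalized Schur-complement identity $\mathbf M_{22}=\mathbf M_{21}\mathbf M_{11}^{\dagger}\mathbf M_{12}$ plus control of a perturbed pseudo-inverse. The paper itself does not reprove this; it only records the block Frobenius identity.

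When you execute it, write $\widehat{\mathbf M}_{ij}:=\mathbf M_{i\otimes j}(f_\theta,g_\theta)$ and $\mathbf M^\star_{ij}:=\mathbf M^\star_{i\otimes j}$, and fix two things:
\begin{itemize}
\item Apply Weyl directly to the weighted block, $\sigma_r(\widehat{\mathbf M}_{11})\ge\sigma-\sqrt{\mathcal E_{11}}\ge\sigma/2$. Your bound for the $r\times r$ cross-moment products does not follow as written, because perturbing them picks up the largest eigenvalues of the factor covariances rather than $\sigma$.
\item The learned off-diagonal blocks that appear in the telescoping are not bounded by $M$ a priori; bound them by $M+\sigma/2\le 3M/2$ using $\sigma\le\sigma_r(\mathbf M^\star_{11})\le\|\mathbf M^\star_{11}\|_F\le M$. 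The same inequality absorbs the $M/\sigma$ terms into $M^2/\sigma^2$.
\end{itemize}
An equal-rank Wedin bound on $\widehat{\mathbf M}_{11}^{\dagger}-\mathbf M_{11}^{\star\dagger}$ then gives a constant comfortably below $64$.
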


\subsubsection{Matrix-completion interpretation}
\label{app:matrix_completion_identity}
A key proof idea underlying low-rank bilinear combinatorial extrapolation is its reduction to a structured
missing-not-at-random matrix-completion problem
\citep{shah2020sample,simchowitz2023tackling,netanyahu2023learning}.
Suppose $D_{\Delta,i}$ and $D_{\AnchorLatent,j}$ have probability masses
$p_{\Delta,i}$ and $p_{\AnchorLatent,j}$.
For rank-$r$ factors $f$ and $g$, define the probability-weighted factor matrices
\begin{equation}
\begin{aligned}
    \relax[\mathbf F_i]_{\reldelta,:}
    &:={}
    \sqrt{p_{\Delta,i}(\reldelta)}\,
    f(\reldelta)^\top,
    \\
    [\mathbf G_j]_{\anchorlatent,:}
    &:={}
    \sqrt{p_{\AnchorLatent,j}(\anchorlatent)}\,
    g(\anchorlatent)^\top.
\end{aligned}
\label{eq:app_generic_weighted_factors}
\end{equation}
Let
\begin{equation}
\begin{aligned}
    \mathbf M_{i\otimes j}(f,g)
    &:={}
    \mathbf F_i\mathbf G_j^\top,
    \\
    [\mathbf M^\star_{i\otimes j}]_{\reldelta,\anchorlatent}
    &:={}
    \sqrt{
        p_{\Delta,i}(\reldelta)
        p_{\AnchorLatent,j}(\anchorlatent)
    }\,
    h^\star(\reldelta,\anchorlatent).
\end{aligned}
\label{eq:app_generic_prediction_matrices}
\end{equation}
Then, for every block $(i,j)$,
\begin{equation}
\begin{aligned}
    &\E_{\substack{
        \RelDelta\sim D_{\Delta,i},\\
        \AnchorLatent\sim D_{\AnchorLatent,j}
    }}
    \left[
        \left(
            \left\langle
                f(\RelDelta),
                g(\AnchorLatent)
            \right\rangle_r
            -
            h^\star(\RelDelta,\AnchorLatent)
        \right)^2
    \right]
    \\
    &\qquad=
    \left\|
        \mathbf M_{i\otimes j}(f,g)
        -
        \mathbf M^\star_{i\otimes j}
    \right\|_F^2.
\end{aligned}
\label{eq:app_generic_frobenius_identity}
\end{equation}
Notice that, each product-block prediction risk is exactly a probability-weighted Frobenius error.
\ours{} applies this trick to Flow Matching loss as a key proof idea to distributional extrapolation.

\subsection{The Bivariate Conditional-Flow Problem Induced by \ours{}}
\label{app:bfp_problem_setup}

This appendix shows how \ours{} constructs the bivariate regression problem via transductive lens of the core modules.
We begin with the expert action distribution, re-express it in the anchor--residual coordinates induced by the BFP encoder and anchor policies, and then show that conditional flow matching turns action generation into regression of a marginal velocity field.
After fixing a flow state, flow time, and output coordinate, this regression is exactly the combinatorial extrapolation problem of \Cref{app:theory:background}.

\subsubsection{BFP-induced anchor--residual coordinates}
\label{app:induced_pair_distribution}
Condition throughout on the realized demonstration dataset $\DemoData$.
For each regime $s\in\{\Train,\Test\}$, let
\begin{equation}
    X\sim P_s^X,
    \qquad
    \achunk\mid X=x
    \sim
    \expertdist(\cdot\mid x),
    \label{eq:app_original_expert_problem}
\end{equation}
where $P_s^X$ is the marginal distribution of policy conditions at $s\in[\Train, \Test]$ and $\expertdist(\cdot\mid x)$ is the expert distribution over action chunks.
Training and evaluation may use different condition distributions, but they share the same expert conditional $\expertdist(\cdot\mid x)$.
The distributional-extrapolation objective is to generate from this conditional when $x$ is drawn from $P_{\Test}^X$, including conditions outside the support of $P_{\Train}^X$.

For a query $X$, the anchor policy selects an anchor
\begin{equation}
    \anc{X}
    \sim
    \rho_s(\cdot\mid X,\DemoData),
    \qquad
    s\in\{\Train,\Test\}.
    \label{eq:app_anchor_kernel}
\end{equation}
BFP applies the shared observation encoder $\obsenc$ to the query and anchor and forms
\begin{equation}
    \QryEmb:=\obsenc(X),
    \qquad
    \AnchorLatent:=\obsenc(\anc{X}),
    \qquad
    \RelDelta:=\QryEmb-\AnchorLatent.
    \label{eq:app_bfp_pair_map}
\end{equation}
Thus, the encoder and anchor policy define the transformation
\begin{equation}
    T_{\obsenc}(x,x^{\ancmark})
    :=
    \left(
        \obsenc(x)-\obsenc(x^{\ancmark}),
        \obsenc(x^{\ancmark})
    \right).
    \label{eq:app_bfp_pair_map_operator}
\end{equation}
Together with \Cref{eq:app_original_expert_problem}, this transformation induces a joint law
\begin{equation}
    \overline P_s^{\obsenc,\rho_s}
    :=
    \Law_s(\RelDelta,\AnchorLatent,\achunk),
    \label{eq:app_bfp_joint_pushforward}
\end{equation}
which is equivalently characterized by
\begin{equation}
\begin{aligned}
    &\E_{\overline P_s^{\obsenc,\rho_s}}
    \left[
        F(\RelDelta,\AnchorLatent,\achunk)
    \right]
    \\
    &\quad=
    \E_{X\sim P_s^X}
    \E_{\achunk\sim\expertdist(\cdot\mid X)}
    \E_{\anc{X}\sim\rho_s(\cdot\mid X,\DemoData)}
    \left[
        F\!\left(
            T_{\obsenc}(X,\anc{X}),
            \achunk
        \right)
    \right]
\end{aligned}
\label{eq:app_bfp_pushforward_expectation}
\end{equation}
for every bounded measurable $F$.
Let $\overline D_s^{\obsenc,\rho_s}$ denote the $(\RelDelta,\AnchorLatent)$ marginal of this law.
This is the train or test combinatorial-extrapolation distribution constructed by the BFP modules.

The pair reconstructs the encoded query,
\begin{equation}
    \QryEmb
    =
    \RelDelta+\AnchorLatent,
    \label{eq:app_encoded_query_reconstruction}
\end{equation}
but it need not reconstruct the raw condition $X$.
We therefore make explicit when the induced train and test problems share one expert target.

\begin{assum}[Policy-sufficient representation]
\label{ass:app_policy_sufficiency}
There exists a conditional action law
$q_{\obsenc}^{\star}(\cdot\mid\xi)$ such that, for both
$s\in\{\Train,\Test\}$,
\begin{equation}
    \expertdist(\cdot\mid X)
    =
    q_{\obsenc}^{\star}
    (\cdot\mid\obsenc(X))
    \qquad
    P_s^X\text{-almost surely}.
    \label{eq:app_policy_sufficiency}
\end{equation}
\end{assum}

Under \Cref{ass:app_policy_sufficiency}, define the expert action distribution in anchor--residual coordinates by
\begin{equation}
    q_{\mathrm{pair}}^{\star}
    (\mathrm d\mathbf a\mid\reldelta,\anchorlatent)
    :=
    q_{\obsenc}^{\star}
    (\mathrm d\mathbf a\mid\reldelta+\anchorlatent).
    \label{eq:app_pair_endpoint_law}
\end{equation}

For reference, an arbitrary encoded conditional
$q_{\obsenc}^{\star}$ incurs the information loss induced by representation
\begin{equation}
    \varepsilon_{\obsenc,s}^2
    :=
    \E_{X\sim P_s^X}
    W_2^2\!\left(
        \expertdist(\cdot\mid X),
        q_{\obsenc}^{\star}
        (\cdot\mid\obsenc(X))
    \right).
    \label{eq:app_representation_information_loss}
\end{equation}
Exact policy sufficiency corresponds to
$\varepsilon_{\obsenc,\Train}=\varepsilon_{\obsenc,\Test}=0$.
We use exact sufficiency in the main theorem; \Cref{eq:app_representation_information_loss} only records the action-distribution information discarded by an approximate representation.

\subsubsection{Conditional flow matching in anchor--residual coordinates}
\label{app:pair_conditioned_cfm}

The objective at a fixed pair is to sample
\begin{equation}
    \achunk
    \sim
    q_{\mathrm{pair}}^{\star}
    (\cdot\mid\reldelta,\anchorlatent).
    \label{eq:app_pair_generation_objective}
\end{equation}
Let $p_{\tau\mid1}(\mathrm dy\mid\mathbf a)$ be the conditional probability path used by flow matching \citep{lipman_flow_2024}, and let $u_{\tau\mid1}(y\mid\mathbf a)$ be its conditional vector field.
The corresponding marginal path is
\begin{equation}
    p_\tau(\mathrm dy\mid\reldelta,\anchorlatent)
    :=
    \int
    p_{\tau\mid1}(\mathrm dy\mid\mathbf a)
    q_{\mathrm{pair}}^{\star}
    (\mathrm d\mathbf a\mid\reldelta,\anchorlatent).
    \label{eq:app_pair_marginal_path}
\end{equation}
with the corresponding marginal vector field:
\begin{equation}
\begin{aligned}
    u_\tau^{\star}
    (y\mid\reldelta,\anchorlatent)
    &:={}
    \frac{
        \int
        u_{\tau\mid1}(y\mid\mathbf a)
        p_{\tau\mid1}(y\mid\mathbf a)
        q_{\mathrm{pair}}^{\star}
        (\mathrm d\mathbf a\mid\reldelta,\anchorlatent)
    }{
        p_\tau(y\mid\reldelta,\anchorlatent)
    }.
\end{aligned}
\label{eq:app_pair_marginal_vector_field}
\end{equation}
Under the standard flow-matching regularity conditions, this vector field generates the marginal path in \Cref{eq:app_pair_marginal_path} and therefore transports the base distribution $p_0$ to the $q_{\mathrm{pair}}^{\star}
    (\cdot\mid\reldelta,\anchorlatent)$ at $\tau=1$.
\Cref{app:bfp_theory} states the precise well-posedness condition used by our guarantee.

To identify the target optimized by CFM, sample
\begin{equation}
\begin{gathered}
    (\RelDelta,\AnchorLatent,\achunk)
    \sim
    \overline P_s^{\obsenc,\rho_s},
    \qquad
    \flowtime\sim\mathcal U[0,1],
    \\
    \achunk_{\flowtime}
    \sim
    p_{\flowtime\mid1}(\cdot\mid\achunk),
    \qquad
    C:=(\achunk_{\flowtime},\flowtime),
    \\
    U
    :=
    u_{\flowtime\mid1}
    (\achunk_{\flowtime}\mid\achunk).
\end{gathered}
\label{eq:app_cfm_sampling}
\end{equation}
The random variable $U$ is the conditional vector field evaluated at the sampled path state.
The CFM objective regresses the learned vector field against this quantity.

\begin{prop}[CFM regression identity]
\label{prop:app_cfm_identity}
For every square-integrable vector field
$v(C,\RelDelta,\AnchorLatent)$,
\begin{equation}
\begin{aligned}
    &\E_s\!\left[
        \left\|
            v(C,\RelDelta,\AnchorLatent)-U
        \right\|_2^2
    \right]
    \\
    &\quad=
    \E_s\!\left[
        \left\|
            v(C,\RelDelta,\AnchorLatent)
            -u^{\star}(C\mid\RelDelta,\AnchorLatent)
        \right\|_2^2
    \right]
    +\mathcal V_s,
\end{aligned}
\label{eq:app_cfm_regression_identity}
\end{equation}
where
\begin{equation}
    u^{\star}(C\mid\RelDelta,\AnchorLatent)
    :=
    \E_s[U\mid C,\RelDelta,\AnchorLatent]
    \label{eq:app_cfm_bayes_target}
\end{equation}
and
\begin{equation}
    \mathcal V_s
    :=
    \E_s\!\left[
        \left\|
            U-u^{\star}(C\mid\RelDelta,\AnchorLatent)
        \right\|_2^2
    \right]
\end{equation}
is independent of $v$.
Under \Cref{ass:app_policy_sufficiency}, the conditional expectation in \Cref{eq:app_cfm_bayes_target} equals the marginal vector field in \Cref{eq:app_pair_marginal_vector_field} evaluated at $C=(y,\tau)$.
\end{prop}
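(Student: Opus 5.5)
The identity is the standard bias--variance (Pythagorean) decomposition for squared loss. The only nontrivial part is identifying the conditional expectation with the marginal vector field. I will expand
\[
\|v-U\|_2^2=\|v-u^\star\|_2^2+2\langle v-u^\star,\,u^\star-U\rangle+\|u^\star-U\|_2^2,
\]
writing $v=v(C,\RelDelta,\AnchorLatent)$ and $u^\star=u^\star(C\mid\RelDelta,\AnchorLatent)$, and take $\E_s$ of each term. Square-integrability of $v$ and of $U$ (assumed under the standard regularity for the conditional path) makes every term finite, and Cauchy--Schwarz makes the cross term integrable. For the cross term I condition on $\sigma(C,\RelDelta,\AnchorLatent)$. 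Since $v-u^\star$ is measurable with respect to this $\sigma$-algebra, the tower property gives
\[
\E_s\langle v-u^\star,\,u^\star-U\rangle=\E_s\langle v-u^\star,\,u^\star-\E_s[U\mid C,\RelDelta,\AnchorLatent]\rangle=0
\]
by the definition in \Cref{eq:app_cfm_bayes_target}. The last term is $\mathcal V_s$, which involves only $U$ and $u^\star$ and hence does not depend on $v$.

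For the identification step, I compute the joint law of $(\RelDelta,\AnchorLatent,\achunk,\tau,\achunk_\tau)$ from \Cref{eq:app_bfp_pushforward_expectation} and \Cref{eq:app_cfm_sampling}. The first task is to show that, conditionally on $(\RelDelta,\AnchorLatent)=(\reldelta,\anchorlatent)$, the action $\achunk$ has law $q_{\mathrm{pair}}^\star(\cdot\mid\reldelta,\anchorlatent)$. I use \Cref{ass:app_policy_sufficiency}: $P_s^X$-a.s.\ we have $\expertdist(\cdot\mid X)=q_\obsenc^\star(\cdot\mid\obsenc(X))$, and $\obsenc(X)=\RelDelta+\AnchorLatent$ by \Cref{eq:app_encoded_query_reconstruction}. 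The anchor $\anc{X}$ is drawn from $\rho_s(\cdot\mid X,\DemoData)$ independently of $\achunk$ given $X$. So for bounded $F$,
\[
\E_s[F(\RelDelta,\AnchorLatent,\achunk)]=\E_s\!\int F(\RelDelta,\AnchorLatent,\mathbf a)\,q_\obsenc^\star(\mathrm d\mathbf a\mid\RelDelta+\AnchorLatent),
\]
and this is the claimed conditional law via \Cref{eq:app_pair_endpoint_law}. I expect this to be the main obstacle. The subtlety is that conditioning on $(\RelDelta,\AnchorLatent)$ must not leak additional information about $X$ beyond $\obsenc(X)$. The displayed identity settles this, because the integrand depends on $X$ only through $\obsenc(X)$.

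Next, given $(\reldelta,\anchorlatent)$, the variable $\tau$ is independent and uniform, and $\achunk_\tau\mid\achunk,\tau\sim p_{\tau\mid1}(\cdot\mid\achunk)$. The joint conditional density of $(\achunk,\achunk_\tau)$ at $\tau$ is therefore $p_{\tau\mid1}(y\mid\mathbf a)\,q_{\mathrm{pair}}^\star(\mathrm d\mathbf a\mid\reldelta,\anchorlatent)$. Applying Bayes' rule,
\[
\E_s[U\mid \achunk_\tau=y,\tau,\reldelta,\anchorlatent]=\frac{\int u_{\tau\mid1}(y\mid\mathbf a)\,p_{\tau\mid1}(y\mid\mathbf a)\,q_{\mathrm{pair}}^\star(\mathrm d\mathbf a\mid\reldelta,\anchorlatent)}{p_\tau(y\mid\reldelta,\anchorlatent)},
\]
which equals \Cref{eq:app_pair_marginal_vector_field}. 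This holds for almost every point where $p_\tau>0$, which suffices since conditional expectations are defined only a.s. I will verify it by checking the defining property: integrate both sides against bounded test functions of $(y,\tau,\reldelta,\anchorlatent)$ and use Fubini, which the integrability of $U$ justifies.
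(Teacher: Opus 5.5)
Your proposal is correct and follows the paper's argument: you add and subtract $u^\star$, kill the cross term by conditioning on $(C,\RelDelta,\AnchorLatent)$, and then identify $u^\star$ with the marginal vector field. Your explicit check that, under policy sufficiency and $\obsenc(X)=\RelDelta+\AnchorLatent$, the conditional law of $\achunk$ given $(\RelDelta,\AnchorLatent)$ is $q_{\mathrm{pair}}^\star$, followed by Bayes' rule, fills in the final step that the paper dismisses in one line as following from the definition of the marginal field.
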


\begin{proof}
Add and subtract
$u^{\star}(C\mid\RelDelta,\AnchorLatent)$ inside the squared norm.
The cross term vanishes after conditioning on
$(C,\RelDelta,\AnchorLatent)$ because
$u^{\star}$ is the conditional expectation of $U$.
The remaining two terms are the squared error to the conditional expectation and the irreducible conditional variance.
The final statement follows from \Cref{eq:app_pair_marginal_vector_field}.
\end{proof}

\subsubsection{Fixed-context combinatorial extrapolation}
\label{app:cfm_matrix_completion}

For compactness, call
\begin{equation}
    c=(y,\tau)
    \label{eq:app_flow_context_definition}
\end{equation}
a flow context.
Fix a context $c$ and output coordinate
$k\in[\chunkdim]$.
The marginal vector field defines the scalar target
\begin{equation}
    h_{k,c}^{\star}
    (\reldelta,\anchorlatent)
    :=
    u_k^{\star}
    (c\mid\reldelta,\anchorlatent).
    \label{eq:app_fixed_context_target}
\end{equation}
BFP parameterizes the corresponding predictor as
\begin{equation}
\begin{aligned}
    h_{\theta,k,c}
    (\reldelta,\anchorlatent)
    &:={}
    v_{\theta,k}
    (c\mid\reldelta,\anchorlatent)
    \\
    &:={}
    \left\langle
        f_{\theta,k}(\reldelta;c),
        g_{\theta,k}(\anchorlatent;c)
    \right\rangle_r.
\end{aligned}
\label{eq:app_fixed_context_predictor}
\end{equation}
Let $\nu_s$ denote the law of $C$ in regime $s$, and let
\begin{equation}
    \overline D_{s,c}^{\mathrm{BFP}}
    :=
    \Law_s
    \bigl(
        \RelDelta,\AnchorLatent
        \mid C=c
    \bigr)
    \label{eq:app_context_conditioned_pair_law}
\end{equation}
be a regular conditional anchor--residual law.
For every fixed $(c,k)$, \Cref{eq:app_fixed_context_target,eq:app_fixed_context_predictor,eq:app_context_conditioned_pair_law} instantiate the combinatorial-extrapolation problem of \Cref{app:prior_pair_problem}: the train and test covariate laws are
$\overline D_{\Train,c}^{\mathrm{BFP}}$ and
$\overline D_{\Test,c}^{\mathrm{BFP}}$, the shared regression function is $h_{k,c}^{\star}$, and the rank-$r$ predictor is $h_{\theta,k,c}$.

Whenever these conditional laws admit the block decomposition of \Cref{ass:app_coverage}, the matrix-completion identity in \Cref{eq:app_generic_frobenius_identity} applies directly to their coordinate-wise squared error.
Thus, after fixing $(c,k)$, BFP velocity prediction is the same structured low-rank matrix-completion problem reviewed in \Cref{app:matrix_completion_identity}.
\Cref{app:bfp_theory} next states the conditions under which the prior combinatorial-extrapolation bound applies to this family of fixed flow-context problems and converts the resulting vector-field control into an action-distribution guarantee.

\subsection{Distributional Extrapolation of BFP}
\label{app:bfp_theory}
We now complete the argument.
We first identify the flow risk controlled by CFM, then combine per-context bilinear transfer with ODE stability to prove the distributional extrapolation theorem.
We next state the multi-anchor corollary and conclude by discussing sufficient conditions for the theorem assumptions.

\subsubsection{CFM loss induced by BFP}
\label{app:flow_risk}

For $s\in\{\Train,\Test\}$, define the FM error to the marginal vector field by
\begin{equation}
\begin{aligned}
    \mathcal E_s^{\mathrm{FM}}(\theta)
    &:={}
    \E_{\substack{
        (\RelDelta,\AnchorLatent)
        \sim\overline D_s^{\obsenc,\rho_s},\\
        \flowtime\sim\mathcal U[0,1],\\
        \achunk_{\flowtime}
        \sim
        p_{\flowtime}
        (\cdot\mid\RelDelta,\AnchorLatent)
    }}
    \left[
        \left\|
            \flowfield(
                \achunk_{\flowtime},\flowtime
                \mid\RelDelta,\AnchorLatent
            )
            -
            u_{\flowtime}^{\star}(
                \achunk_{\flowtime}
                \mid\RelDelta,\AnchorLatent
            )
        \right\|_2^2
    \right].
\end{aligned}
\label{eq:app_fm_error}
\end{equation}
By \Cref{prop:app_cfm_identity}, the expected CFM objective in regime $s$ decomposes as
\begin{equation}
    \mathcal L_s^{\mathrm{CFM}}(\theta)
    =
    \mathcal E_s^{\mathrm{FM}}(\theta)
    +
    \mathcal V_s,
    \label{eq:app_cfm_fm_decomposition}
\end{equation}
where $\mathcal V_s$ is independent of $\theta$.
The theorem therefore controls the FM error to the marginal vector field; the practical CFM objective differs from it only by the irreducible conditional-variance term.
Recall from \Cref{eq:app_context_conditioned_pair_law} that
$\overline D_{s,c}^{\mathrm{BFP}}$ is the conditional law of the anchor--residual pair given flow context $C=c$, and let $\nu_s$ denote the marginal law of $C$.
For output coordinate $k$, define the fixed-context error
\begin{equation}
\begin{aligned}
    \mathcal E_{s,k}(c;\theta)
    &:={}
    \E_{(\RelDelta,\AnchorLatent)
        \sim\overline D_{s,c}^{\mathrm{BFP}}}
    \left[
        \left(
            v_{\theta,k}
            (c\mid\RelDelta,\AnchorLatent)
            -
            u_k^{\star}
            (c\mid\RelDelta,\AnchorLatent)
        \right)^2
    \right],
    \\
    \mathcal E_s(c;\theta)
    &:={}
    \sum_{k=1}^{\chunkdim}
    \mathcal E_{s,k}(c;\theta).
\end{aligned}
\label{eq:app_context_fm_error}
\end{equation}
Disintegration of the joint pair--context law gives
\begin{equation}
    \mathcal E_s^{\mathrm{FM}}(\theta)
    =
    \int
    \mathcal E_s(c;\theta)
    \,\nu_s(\mathrm dc).
    \label{eq:app_fm_error_disintegration}
\end{equation}

\subsubsection{Main theorem: distributional extrapolation}
\label{app:main_distributional_theorem}

The proof has two steps.
First, the prior combinatorial-extrapolation bound transfers the marginal-vector-field error from training to evaluation at each fixed flow context.
Second, ODE stability converts evaluation FM error into Wasserstein error of the generated action distribution.

\textbf{Combinatorial extrapolation of the marginal vector field.}
\label{app:marginal_field_extrapolation}
Define the bilinear extrapolation factor
\begin{equation}
    \Gamma_{\mathrm{bil}}
    :=
    \kappa^2
    \left(
        1+\frac{64M^4}{\sigma^4}
    \right).
    \label{eq:app_bilinear_factor}
\end{equation}

\begin{prop}[Marginal-vector-field extrapolation]
\label{prop:app_fm_extrapolation}
Assume that, for $\nu_{\Train}$-almost every flow context $c$ and every output coordinate $k$, the train and test problems
$\overline D_{\Train,c}^{\mathrm{BFP}}$ and
$\overline D_{\Test,c}^{\mathrm{BFP}}$ satisfy
\coveragereq{} and \structurereq{} from Appendix~\ref{app:prior_bilinear_extrapolation} with common constants
$(r,M,\sigma,\kappa)$.
Assume also that the fixed-context training errors satisfy
\begin{equation}
    \mathcal E_{\Train,k}(c;\theta)
    \leq
    \frac{\sigma^2}{4\kappa}
    \label{eq:app_context_small_error}
\end{equation}
for every relevant $(c,k)$, and that the evaluation flow-context law is covered by the training law,
\begin{equation}
    \nu_{\Test}
    \preceq_{\kappa_C}
    \nu_{\Train}.
    \label{eq:app_flow_context_coverage}
\end{equation}
Then
\begin{equation}
    \mathcal E_{\Test}^{\mathrm{FM}}(\theta)
    \leq
    \kappa_C
    \Gamma_{\mathrm{bil}}
    \mathcal E_{\Train}^{\mathrm{FM}}(\theta).
    \label{eq:app_fm_extrapolation_bound}
\end{equation}
\end{prop}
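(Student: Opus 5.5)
The plan is to reduce \Cref{prop:app_fm_extrapolation} to a pointwise application of \Cref{thm:app_bilinear_extrapolation}, and then to integrate over flow contexts using the disintegration \Cref{eq:app_fm_error_disintegration}.

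\textbf{Per-context transfer.} Fix a context $c$ outside a $\nu_{\Train}$-null set and a coordinate $k$. By the reduction in \Cref{app:cfm_matrix_completion}, the pair $(h_{\theta,k,c},h^\star_{k,c})$ with covariate laws $\overline D_{\Train,c}^{\mathrm{BFP}}$ and $\overline D_{\Test,c}^{\mathrm{BFP}}$ is exactly an instance of the combinatorial-extrapolation problem of \Cref{app:prior_pair_problem}. In particular, $\mathcal E_{s,k}(c;\theta)=\mathcal R_s(h_{\theta,k,c})$. The hypotheses give \Cref{ass:app_coverage} and \Cref{ass:app_structure} with common constants $(r,M,\sigma,\kappa)$, together with the small-error condition \Cref{eq:app_context_small_error}, which is exactly \Cref{eq:app_bilinear_small_error}. \Cref{thm:app_bilinear_extrapolation} therefore yields
\[
\mathcal E_{\Test,k}(c;\theta)\le \Gamma_{\mathrm{bil}}\,\mathcal E_{\Train,k}(c;\theta).
\]
Summing over $k\in[\chunkdim]$ gives $\mathcal E_{\Test}(c;\theta)\le\Gamma_{\mathrm{bil}}\mathcal E_{\Train}(c;\theta)$ for $\nu_{\Train}$-a.e.\ $c$. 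Because the constants are uniform in $c$, the factor does not depend on the context.

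\textbf{Change of context measure.} Write
\[
\mathcal E_{\Test}^{\mathrm{FM}}(\theta)=\int \mathcal E_{\Test}(c;\theta)\,\nu_{\Test}(\mathrm dc).
\]
The domination $\nu_{\Test}\preceq_{\kappa_C}\nu_{\Train}$ extends from indicators to nonnegative measurable functions by the standard simple-function and monotone-convergence argument, giving $\int F\,\mathrm d\nu_{\Test}\le\kappa_C\int F\,\mathrm d\nu_{\Train}$ for $F\ge0$. This domination also means $\nu_{\Train}$-null sets are $\nu_{\Test}$-null, so the per-context bound holds $\nu_{\Test}$-a.e. Combining the two steps,
\[
\mathcal E_{\Test}^{\mathrm{FM}}(\theta)\le\Gamma_{\mathrm{bil}}\int\mathcal E_{\Train}(c;\theta)\,\nu_{\Test}(\mathrm dc)\le\kappa_C\Gamma_{\mathrm{bil}}\,\mathcal E_{\Train}^{\mathrm{FM}}(\theta),
\]
where the last equality to $\mathcal E_{\Train}^{\mathrm{FM}}$ is again \Cref{eq:app_fm_error_disintegration}.

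\textbf{Main obstacle.} The substantive step is measure-theoretic. The conditional laws $\overline D_{\Test,c}^{\mathrm{BFP}}$ are defined only $\nu_{\Test}$-a.e., and the hypotheses are stated $\nu_{\Train}$-a.e. I would choose versions of the regular conditional laws on a common set of full $\nu_{\Train}$-measure and use absolute continuity to transfer statements between the two measures. I would also check measurability of $c\mapsto\mathcal E_{s}(c;\theta)$ by joint measurability of $v_\theta,u^\star$ and Fubini. Beyond that, the argument is a direct pointwise application of \Cref{thm:app_bilinear_extrapolation}.
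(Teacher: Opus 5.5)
Your proposal is correct and follows the paper's proof. Both apply \Cref{thm:app_bilinear_extrapolation} at each fixed context and coordinate, sum over $k$, integrate against $\nu_{\Test}$, switch to $\nu_{\Train}$ using $\nu_{\Test}\preceq_{\kappa_C}\nu_{\Train}$, and close with the disintegration identity. Your additional remarks, namely extending the set-wise domination to nonnegative integrands and using $\nu_{\Test}\ll\nu_{\Train}$ to carry the $\nu_{\Train}$-a.e.\ hypotheses over to $\nu_{\Test}$-a.e.\ contexts, just make explicit steps the paper leaves implicit.
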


\begin{proof}
Fix $c$ and $k$.
Appendix~\ref{app:cfm_matrix_completion} identifies
$u_k^{\star}(c\mid\reldelta,\anchorlatent)$ as the shared scalar regression function and
$v_{\theta,k}(c\mid\reldelta,\anchorlatent)$ as the rank-$r$ bilinear predictor.
Theorem~\ref{thm:app_bilinear_extrapolation} and \Cref{eq:app_context_small_error} therefore give
\begin{equation}
    \mathcal E_{\Test,k}(c;\theta)
    \leq
    \Gamma_{\mathrm{bil}}
    \mathcal E_{\Train,k}(c;\theta).
    \label{eq:app_context_coordinate_extrapolation}
\end{equation}
Summing over coordinates, integrating with respect to $\nu_{\Test}$, applying \Cref{eq:app_flow_context_coverage}, and using \Cref{eq:app_fm_error_disintegration} prove \Cref{eq:app_fm_extrapolation_bound}.
\end{proof}

\Cref{eq:app_context_small_error} is context-wise because the prior theorem is applied separately at each fixed context.
A small average value of $\mathcal E_{\Train}^{\mathrm{FM}}$ alone does not exclude a large error on a rare training context that receives substantial evaluation mass.

\textbf{From FM error to action-distribution error.}
\label{app:fm_to_distribution}
Fix an anchor--residual pair
$p=(\reldelta,\anchorlatent)$.
Let $\mathbf Y_\tau^{\star}$ solve the expert ODE driven by the marginal vector field,
\begin{equation}
    \frac{\mathrm d\mathbf Y_\tau^{\star}}
         {\mathrm d\tau}
    =
    u_\tau^{\star}
    (\mathbf Y_\tau^{\star}\mid p),
    \qquad
    \mathbf Y_0^{\star}\sim p_0,
    \label{eq:app_expert_ode}
\end{equation}
and let $\widehat{\mathbf Y}_\tau$ solve the learned ODE from the same initial sample,
\begin{equation}
    \frac{\mathrm d\widehat{\mathbf Y}_\tau}
         {\mathrm d\tau}
    =
    \flowfield(
        \widehat{\mathbf Y}_\tau,\tau
        \mid p
    ),
    \qquad
    \widehat{\mathbf Y}_0
    =
    \mathbf Y_0^{\star}.
    \label{eq:app_learned_ode}
\end{equation}

\begin{prop}[ODE stability]
\label{prop:app_ode_stability}
Assume that the expert ODE is well posed and realizes the marginal path in \Cref{eq:app_pair_marginal_path},
\begin{equation}
    \Law(\mathbf Y_\tau^{\star})
    =
    p_\tau(\cdot\mid p)
    \qquad
    \text{for every }\tau\in[0,1].
    \label{eq:app_path_realization}
\end{equation}
Assume also that
$\flowfield(\cdot,\tau\mid p)$ is
$L$-Lipschitz in its flow-state argument, uniformly over
$\tau$ and $p$.
Then
\begin{equation}
\begin{aligned}
    &W_2^2\!\left(
        \Law(\widehat{\mathbf Y}_1),
        q_{\mathrm{pair}}^{\star}(\cdot\mid p)
    \right)
    \\
    &\quad\leq
    C_{\mathrm{ODE}}
    \int_0^1
    \E_{\mathbf Y_\tau^{\star}
        \sim p_\tau(\cdot\mid p)}
    \left[
        \left\|
            \flowfield(
                \mathbf Y_\tau^{\star},\tau
                \mid p
            )
            -
            u_\tau^{\star}
            (\mathbf Y_\tau^{\star}\mid p)
        \right\|_2^2
    \right]
    \mathrm d\tau,
    \\
    &\hspace{45mm}
    C_{\mathrm{ODE}}
    :=
    \exp(2L+1).
\end{aligned}
\label{eq:app_ode_stability_bound}
\end{equation}
\end{prop}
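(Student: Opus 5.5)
We couple the two ODEs from the shared initial sample and use a Grönwall argument on the squared distance. Set $e_\tau:=\widehat{\mathbf Y}_\tau-\mathbf Y_\tau^{\star}$, so that $e_0=0$. The pair $(\widehat{\mathbf Y}_1,\mathbf Y_1^{\star})$ is a coupling of $\Law(\widehat{\mathbf Y}_1)$ and $\Law(\mathbf Y_1^{\star})$. By \Cref{eq:app_path_realization} at $\tau=1$, together with the fact that the marginal path ends at $q_{\mathrm{pair}}^{\star}(\cdot\mid p)$ (\Cref{eq:app_pair_marginal_path}, with $p_{1\mid1}(\cdot\mid\mathbf a)=\delta_{\mathbf a}$ in the standard construction), we get $\Law(\mathbf Y_1^{\star})=q_{\mathrm{pair}}^{\star}(\cdot\mid p)$. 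Hence
\[
W_2^2\bigl(\Law(\widehat{\mathbf Y}_1),q_{\mathrm{pair}}^{\star}(\cdot\mid p)\bigr)\le \E\|e_1\|_2^2 ,
\]
and it suffices to bound $\E\|e_1\|_2^2$.

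Next we differentiate $\|e_\tau\|_2^2$ along trajectories. Add and subtract $\flowfield(\mathbf Y_\tau^{\star},\tau\mid p)$ to obtain
\[
\dot e_\tau=\bigl[\flowfield(\widehat{\mathbf Y}_\tau,\tau\mid p)-\flowfield(\mathbf Y^{\star}_\tau,\tau\mid p)\bigr]+\bigl[\flowfield(\mathbf Y^{\star}_\tau,\tau\mid p)-u^{\star}_\tau(\mathbf Y^{\star}_\tau\mid p)\bigr]=:a_\tau+b_\tau .
\]
Then $\frac{\mathrm d}{\mathrm d\tau}\|e_\tau\|^2=2\langle e_\tau,a_\tau\rangle+2\langle e_\tau,b_\tau\rangle$. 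The Lipschitz assumption gives $2\langle e_\tau,a_\tau\rangle\le 2L\|e_\tau\|^2$, and Young's inequality gives $2\langle e_\tau,b_\tau\rangle\le\|e_\tau\|^2+\|b_\tau\|^2$. Therefore
\[
\tfrac{\mathrm d}{\mathrm d\tau}\|e_\tau\|^2\le(2L+1)\|e_\tau\|^2+\|b_\tau\|^2 .
\]
Taking expectations over $\mathbf Y_0^{\star}\sim p_0$ (exchanging derivative and expectation, or working with the integrated form pathwise) yields a linear differential inequality for $\phi(\tau):=\E\|e_\tau\|^2$ with $\phi(0)=0$.

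Grönwall's inequality then gives
\[
\phi(1)\le\int_0^1 e^{(2L+1)(1-\tau)}\,\E\|b_\tau\|^2\,\mathrm d\tau\le e^{2L+1}\int_0^1\E\|b_\tau\|^2\,\mathrm d\tau .
\]
Because $\Law(\mathbf Y_\tau^{\star})=p_\tau(\cdot\mid p)$ by \Cref{eq:app_path_realization}, the expectation $\E\|b_\tau\|^2$ is exactly the inner expectation in \Cref{eq:app_ode_stability_bound}. This gives the claim with $C_{\mathrm{ODE}}=\exp(2L+1)$.

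The main obstacle is regularity bookkeeping rather than the estimate itself. The Lipschitz assumption is placed on $\flowfield$ only, not on $u^\star$, which is exactly why the splitting above routes the Lipschitz step through the learned field. Even so, we need the learned ODE to have a unique global solution (which follows from the Lipschitz property, plus measurability in $\tau$), and we need $\E\|e_\tau\|^2$ to be finite and absolutely continuous so the Grönwall argument is legitimate. We would handle this by applying the argument to the integral form $\|e_t\|^2=\int_0^t\frac{\mathrm d}{\mathrm d\tau}\|e_\tau\|^2\,\mathrm d\tau$ pathwise, then taking expectations via Tonelli. If the right-hand side is infinite, the bound holds trivially.
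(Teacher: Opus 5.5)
Your proof is correct and follows the paper's argument step for step: you couple the two ODEs through the shared initial sample, add and subtract $v_\theta(\mathbf Y_\tau^{\star},\tau\mid p)$, use the Lipschitz bound and Young's inequality to get the rate $2L+1$, apply Gr\"onwall with $e_0=0$, and identify $\Law(\mathbf Y_1^{\star})=q_{\mathrm{pair}}^{\star}(\cdot\mid p)$ from the path-realization assumption. Your extra remarks are sound refinements of details the paper leaves implicit: that the conditional path must end at a Dirac mass so the marginal path terminates at $q_{\mathrm{pair}}^{\star}$, and that applying Gr\"onwall pathwise before taking expectations via Tonelli avoids integrability issues.
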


\begin{proof}
Set
$e_\tau:=\widehat{\mathbf Y}_\tau-\mathbf Y_\tau^{\star}$.
Adding and subtracting
$\flowfield(\mathbf Y_\tau^{\star},\tau\mid p)$, using Lipschitzness, and applying Young's inequality yield
\begin{equation}
\begin{aligned}
    \frac{\mathrm d}{\mathrm d\tau}
    \|e_\tau\|_2^2
    \leq{}&
    (2L+1)\|e_\tau\|_2^2
    \\
    &+
    \left\|
        \flowfield(
            \mathbf Y_\tau^{\star},\tau
            \mid p
        )
        -
        u_\tau^{\star}
        (\mathbf Y_\tau^{\star}\mid p)
    \right\|_2^2.
\end{aligned}
\label{eq:app_gronwall_step}
\end{equation}
Gronwall's inequality and $e_0=0$ bound the expected terminal coupling cost by the right-hand side of \Cref{eq:app_ode_stability_bound}.
\Cref{eq:app_path_realization} gives
$\Law(\mathbf Y_1^{\star})=q_{\mathrm{pair}}^{\star}(\cdot\mid p)$, so the squared Wasserstein distance is no larger than this coupling cost.
\end{proof}

For a query $x$ and anchor $x^{\ancmark}$, let
$\pi_{\theta,x^{\ancmark}}(\cdot\mid x)$ denote the terminal law of \Cref{eq:app_learned_ode} using
$p=(\obsenc(x)-\obsenc(x^{\ancmark}),\obsenc(x^{\ancmark}))$.
The test-time BFP policy samples one anchor before integration:
\begin{equation}
    \pi_{\theta,\rhotest}(\cdot\mid x)
    :=
    \int
    \pi_{\theta,x^{\ancmark}}(\cdot\mid x)
    \,\rhotest(
        \mathrm dx^{\ancmark}
        \mid x,\DemoData
    ).
    \label{eq:app_bfp_mixture_policy}
\end{equation}

\begin{thm}[Distributional extrapolation of BFP]
\label{thm:app_bfp_distributional_extrapolation}
Suppose Assumption~\ref{ass:app_policy_sufficiency} holds, the assumptions of Proposition~\ref{prop:app_fm_extrapolation} hold, and the ODE assumptions of Proposition~\ref{prop:app_ode_stability} hold.
Then
\begin{equation}
    \distrisk_{\Test}
    (\pi_{\theta,\rhotest})
    \leq
    C_{\mathrm{ODE}}
    \kappa_C
    \Gamma_{\mathrm{bil}}
    \mathcal E_{\Train}^{\mathrm{FM}}(\theta).
    \label{eq:app_distributional_extrapolation_bound}
\end{equation}
\end{thm}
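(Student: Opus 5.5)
The plan is to chain the two propositions through the mixture structure of the test-time policy in \Cref{eq:app_bfp_mixture_policy}. Fix a test query $x$ and write $p=T_{\obsenc}(x,x^{\ancmark})$ for an anchor $x^{\ancmark}\sim\rhotest(\cdot\mid x,\DemoData)$.

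\textbf{Step 1: reduce the target to the pair conditional.} By Assumption~\ref{ass:app_policy_sufficiency}, $P_{\Test}^X$-a.s.
\begin{equation*}
\expertdist(\cdot\mid x)=q_{\obsenc}^{\star}(\cdot\mid\obsenc(x)).
\end{equation*}
Using \Cref{eq:app_encoded_query_reconstruction} and \Cref{eq:app_pair_endpoint_law}, this equals $q_{\mathrm{pair}}^{\star}(\cdot\mid p)$ for \emph{every} anchor choice. Hence the target does not depend on the sampled anchor, and the expert law is a trivial mixture over anchors with the same weights $\rhotest$.

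\textbf{Step 2: pass from the mixture to individual anchors.} Use joint convexity of $W_2^2$ under mixing with common weights: glue, measurably in $x^{\ancmark}$, the optimal couplings between $\pi_{\theta,x^{\ancmark}}(\cdot\mid x)$ and $q_{\mathrm{pair}}^{\star}(\cdot\mid p)$. This gives
\begin{equation*}
W_2^2\bigl(\pi_{\theta,\rhotest}(\cdot\mid x),\expertdist(\cdot\mid x)\bigr)\le\int W_2^2\bigl(\pi_{\theta,x^{\ancmark}}(\cdot\mid x),q_{\mathrm{pair}}^{\star}(\cdot\mid p)\bigr)\,\rhotest(\mathrm dx^{\ancmark}\mid x,\DemoData).
\end{equation*}
In practice I would avoid the measurable-selection issue by coupling directly: the same anchor and the same $\mathbf Y_0^\star$ drive both ODEs \Cref{eq:app_expert_ode} and \Cref{eq:app_learned_ode}. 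The Gronwall estimate in the proof of Proposition~\ref{prop:app_ode_stability} then bounds the expected terminal cost of this explicit coupling. That cost dominates $W_2^2$ of the mixtures, since $\Law(\mathbf Y_1^\star)$ under random anchor is exactly $\expertdist(\cdot\mid x)$ by Step 1.

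\textbf{Step 3: apply ODE stability and integrate.} Apply Proposition~\ref{prop:app_ode_stability} pointwise in $p$, so each integrand is at most $C_{\mathrm{ODE}}\int_0^1\E_{p_\tau(\cdot\mid p)}\|\flowfield-u^\star_\tau\|^2\mathrm d\tau$. Then take expectation over $x\sim P_{\Test}^X$. By the pushforward characterization \Cref{eq:app_bfp_pushforward_expectation}, the pair $(\RelDelta,\AnchorLatent)$ has law $\overline D_{\Test}^{\obsenc,\rhotest}$. By \Cref{eq:app_pair_marginal_path}, drawing $\tau\sim\mathcal U[0,1]$ and $\achunk_\tau\sim p_\tau(\cdot\mid p)$ reproduces exactly the sampling in \Cref{eq:app_fm_error}. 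The right side is therefore $C_{\mathrm{ODE}}\mathcal E_{\Test}^{\mathrm{FM}}(\theta)$, which yields $\distrisk_{\Test}(\pi_{\theta,\rhotest})\le C_{\mathrm{ODE}}\mathcal E^{\mathrm{FM}}_{\Test}(\theta)$.

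\textbf{Step 4: invoke Proposition~\ref{prop:app_fm_extrapolation}.} This gives $\mathcal E^{\mathrm{FM}}_{\Test}\le\kappa_C\Gamma_{\mathrm{bil}}\mathcal E^{\mathrm{FM}}_{\Train}$, which concludes the proof.

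\textbf{Main obstacle.} The hard step is Step 3's identification that the expected integrated ODE error equals $\mathcal E_{\Test}^{\mathrm{FM}}$. This needs the expert ODE marginals to coincide with $p_\tau(\cdot\mid p)$ for $\overline D_{\Test}$-a.e.\ $p$, which is assumption \Cref{eq:app_path_realization}. It also needs $u^\star$, defined in Proposition~\ref{prop:app_cfm_identity} as a conditional expectation under the test law, to agree with the pair-marginal field \Cref{eq:app_pair_marginal_vector_field}; this uses sufficiency and is anchor-policy independent. Fubini/measurability in $(x,x^{\ancmark},\tau)$ is also required. I would handle these by working at the level of the joint test law and checking that both descriptions yield the same disintegration. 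Uniformity of $L$ over $p$ keeps $C_{\mathrm{ODE}}$ outside the expectation.
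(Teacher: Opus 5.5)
Your proposal is correct and follows the paper's proof essentially step for step. Policy sufficiency makes $q_{\mathrm{pair}}^{\star}(\cdot\mid p)=\expertdist(\cdot\mid x)$ independent of the anchor. Proposition~\ref{prop:app_ode_stability} is then applied per query and sampled anchor, and averaging the shared-initial-sample coupling over $\rhotest$ and $P_{\Test}^X$ gives $\distrisk_{\Test}(\pi_{\theta,\rhotest})\le C_{\mathrm{ODE}}\mathcal E_{\Test}^{\mathrm{FM}}(\theta)$, after which Proposition~\ref{prop:app_fm_extrapolation} closes the argument. Your direct-coupling remark is exactly what the paper means by ``average the resulting coupling'' (the gluing version is Corollary~\ref{cor:app_multi_anchor}), and your bookkeeping on path realization and Fubini only makes explicit what the paper leaves implicit.
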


\begin{proof}
Under Assumption~\ref{ass:app_policy_sufficiency}, every anchor--residual representation of a query $x$ has the same target
$q_{\mathrm{pair}}^{\star}(\cdot\mid p)=\expertdist(\cdot\mid x)$.
Apply Proposition~\ref{prop:app_ode_stability} for each test query and sampled anchor, then average the resulting coupling over
$\anc{X}\sim\rhotest(\cdot\mid X,\DemoData)$ and
$X\sim P_{\Test}^X$.
This gives
\begin{equation}
    \distrisk_{\Test}
    (\pi_{\theta,\rhotest})
    \leq
    C_{\mathrm{ODE}}
    \mathcal E_{\Test}^{\mathrm{FM}}(\theta).
    \label{eq:app_test_fm_to_distribution}
\end{equation}
Proposition~\ref{prop:app_fm_extrapolation} then yields \Cref{eq:app_distributional_extrapolation_bound}.
\end{proof}

\subsubsection{Corollary: multi-anchor test-time policies}
\label{app:multi_anchor_policy}

Theorem~\ref{thm:app_bfp_distributional_extrapolation} already permits a stochastic test-time anchor policy.
The following corollary makes the resulting policy distribution explicit.

\begin{cor}[Mixture over anchors]
\label{cor:app_multi_anchor}
For a fixed query $x$, let $\rho(\mathrm da\mid x)$ be any anchor policy and let
$\pi_{\theta,a}(\cdot\mid x)$ be the terminal distribution obtained by fixing anchor $a$ throughout the ODE solve.
Then
\begin{equation}
    \pi_{\theta,\rho}(\cdot\mid x)
    =
    \int
    \pi_{\theta,a}(\cdot\mid x)
    \,\rho(\mathrm da\mid x),
    \label{eq:app_anchor_mixture_identity}
\end{equation}
and, for every target distribution $q(\cdot\mid x)$,
\begin{equation}
\begin{aligned}
    &W_2^2\!\left(
        \pi_{\theta,\rho}(\cdot\mid x),
        q(\cdot\mid x)
    \right)
    \\
    &\quad\leq
    \int
    W_2^2\!\left(
        \pi_{\theta,a}(\cdot\mid x),
        q(\cdot\mid x)
    \right)
    \,\rho(\mathrm da\mid x).
\end{aligned}
\label{eq:app_anchor_mixture_wasserstein}
\end{equation}
\end{cor}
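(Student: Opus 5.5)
The corollary has two parts. I would prove the mixture identity first and then obtain the Wasserstein bound from it by a gluing-of-couplings argument.

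\textbf{Step 1: the mixture identity.} The test-time procedure generates a sample in three stages: draw $a\sim\rho(\cdot\mid x)$, draw $\flowstate_0\sim p_0$ independently of $a$, then integrate the ODE with anchor $a$ held fixed. This defines a joint law of $(a,\flowstate_1)$. By the well-posedness assumed in Proposition~\ref{prop:app_ode_stability}, the terminal state is a measurable function $\Phi_a(\flowstate_0)$ of $a$ and the initial sample. Hence the conditional law of $\flowstate_1$ given $a$ is the pushforward $(\Phi_a)_\#p_0$, which is $\pi_{\theta,a}(\cdot\mid x)$ by definition. Disintegrating the joint law with respect to $a$ then gives \Cref{eq:app_anchor_mixture_identity}; this is exactly the definition in \Cref{eq:app_bfp_mixture_policy}. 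The only care needed is measurability of $a\mapsto\pi_{\theta,a}$, which follows from measurable dependence of the ODE solution on its parameters.

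\textbf{Step 2: the Wasserstein bound (convexity).} For each $a$, pick an optimal coupling $\gamma_a$ of $\pi_{\theta,a}(\cdot\mid x)$ and $q(\cdot\mid x)$. Optimal couplings exist because the cost is continuous and, when the right-hand side is finite, second moments are finite. Define the mixed coupling $\gamma:=\int\gamma_a\,\rho(\mathrm da\mid x)$. Its first marginal is $\int\pi_{\theta,a}\,\rho(\mathrm da\mid x)=\pi_{\theta,\rho}$ by Step 1. Its second marginal is $\int q\,\rho(\mathrm da\mid x)=q$, since the target does not depend on $a$. So $\gamma$ is an admissible coupling, and therefore
\[
W_2^2(\pi_{\theta,\rho},q)\le\int\|y-z\|_2^2\,\gamma(\mathrm dy,\mathrm dz)=\int W_2^2(\pi_{\theta,a},q)\,\rho(\mathrm da\mid x).
\]
If the right-hand side is infinite, the bound holds trivially.

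\textbf{Main obstacle: measurable selection.} The only delicate point is choosing $a\mapsto\gamma_a$ measurably, so that the integral defining $\gamma$ makes sense. I would invoke the standard measurable-selection result for optimal transport plans (Villani, Cor.~5.22). Alternatively, I would avoid exact optimality: use $\varepsilon$-optimal couplings chosen from a countable dense family, or discretize $\rho$, and then let $\varepsilon\to0$.

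An equivalent route is to cite joint convexity of $W_2^2$ over mixtures, applied with the trivial mixture $q=\int q\,\rho(\mathrm da\mid x)$.
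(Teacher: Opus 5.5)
Your proposal is correct and follows the paper's argument: the mixture identity comes from sampling the anchor before integrating the ODE, and the Wasserstein bound comes from gluing anchor-wise optimal couplings into an admissible coupling of $\pi_{\theta,\rho}(\cdot\mid x)$ and the anchor-independent target $q(\cdot\mid x)$. Your treatment of measurable selection (Villani, Cor.~5.22) fills in a detail the paper leaves implicit; one small correction is that optimal couplings exist for this nonnegative continuous cost without any finite-moment hypothesis, so that justification is unnecessary, and finiteness of the right-hand side would not by itself imply finite second moments anyway.
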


\begin{proof}
Sample the anchor from $\rho$ and then sample from the corresponding anchor-conditioned terminal law, which gives \Cref{eq:app_anchor_mixture_identity}.
For each anchor, choose an optimal coupling between
$\pi_{\theta,a}(\cdot\mid x)$ and $q(\cdot\mid x)$.
Sampling the anchor first and then sampling from its coupling produces a valid coupling for the mixture whose expected transport cost is the right-hand side of \Cref{eq:app_anchor_mixture_wasserstein}.
\end{proof}

The anchor must be sampled before ODE integration and held fixed throughout the solve.
Averaging the anchor-conditioned vector fields instead defines a different ODE and does not generally produce the mixture in \Cref{eq:app_anchor_mixture_identity}.

\subsubsection{Affine-Gaussian inheritance of bilinear structure}
\label{app:affine_gaussian_example}
The main theorem requires policy sufficiency, context-wise bilinear structure and combinatorial coverage, flow-context coverage, and ODE regularity.
These conditions concern different parts of BFP and should not be conflated.

The following proposition provides one sufficient setting for the bilinear-realizability part of \structurereq{}.
It is an explanatory result and is not required by the implementation.

\begin{prop}[Affine-Gaussian inheritance of bilinearity]
\label{prop:app_affine_gaussian}
Let
$p=(\reldelta,\anchorlatent)$ and suppose the pair-conditioned expert endpoint is
\begin{equation}
    \achunk\mid p
    \sim
    \mathcal N(m(p),\Sigma_1),
    \label{eq:app_affine_gaussian_endpoint}
\end{equation}
where $\Sigma_1$ does not depend on $p$.
Assume an affine Gaussian probability path whose conditional vector field has the form
\begin{equation}
    u_{\tau\mid1}(y\mid\achunk)
    =
    c_\tau(y)+B_\tau\achunk.
    \label{eq:app_affine_endpoint_field}
\end{equation}
Then there exist matrices $K_\tau$ and $L_\tau$, independent of $p$, such that
\begin{equation}
    \E[
        \achunk
        \mid
        \achunk_\tau=y,p
    ]
    =
    K_\tau y+L_\tau m(p),
    \label{eq:app_affine_gaussian_posterior_mean}
\end{equation}
and therefore
\begin{equation}
    u_\tau^{\star}(y\mid p)
    =
    \widetilde c_\tau(y)
    +
    \widetilde B_\tau m(p).
    \label{eq:app_affine_marginal_field}
\end{equation}
If every endpoint-mean coordinate has a bilinear realization
\begin{equation}
    m_\ell(p)
    =
    \left\langle
        f_\ell^{\star}(\reldelta),
        g_\ell^{\star}(\anchorlatent)
    \right\rangle_{r_\ell},
    \label{eq:app_bilinear_endpoint_mean}
\end{equation}
then output coordinate $k$ of the marginal vector field has bilinear rank at most
\begin{equation}
    1+
    \sum_{\ell:
        [\widetilde B_\tau]_{k\ell}\neq0}
    r_\ell.
    \label{eq:app_affine_rank_bound}
\end{equation}
If $\widetilde B_\tau$ is diagonal, the rank is at most $r_k+1$.
\end{prop}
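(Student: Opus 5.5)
The plan is to compute the marginal vector field in closed form by Gaussian conditioning, and then read off its bilinear rank.

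\textbf{Step 1: structure of the path.} For an affine Gaussian path, the conditional state is
\[
\achunk_\tau=\alpha_\tau\achunk+\beta_\tau\flowstate_0,
\qquad \flowstate_0\sim\mathcal N(0,I)
\]
(more generally $p_{\tau\mid1}(\cdot\mid\mathbf a)=\mathcal N(\alpha_\tau\mathbf a,\beta_\tau^2 I)$). Given $p$, the pair $(\achunk,\achunk_\tau)$ is jointly Gaussian with moments
\[
\E[\achunk_\tau\mid p]=\alpha_\tau m(p),\qquad
\mathrm{Cov}(\achunk,\achunk_\tau)=\alpha_\tau\Sigma_1,\qquad
\mathrm{Cov}(\achunk_\tau)=S_\tau:=\alpha_\tau^2\Sigma_1+\beta_\tau^2 I.
\]
Only the means depend on $p$; all covariances are independent of $p$ because $\Sigma_1$ is.

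\textbf{Step 2: posterior mean.} The Gaussian conditioning formula gives
\[
\E[\achunk\mid\achunk_\tau=y,p]=m(p)+\alpha_\tau\Sigma_1S_\tau^{-1}\bigl(y-\alpha_\tau m(p)\bigr).
\]
Setting
\[
K_\tau:=\alpha_\tau\Sigma_1S_\tau^{-1},\qquad L_\tau:=I-\alpha_\tau^2\Sigma_1S_\tau^{-1},
\]
yields \eqref{eq:app_affine_gaussian_posterior_mean}, with both matrices free of $p$. Invertibility of $S_\tau$ is needed for $\tau<1$ (it holds when $\beta_\tau>0$); the endpoint $\tau=1$ is handled by continuity, or by restricting to $\tau\in[0,1)$ where flow matching evaluates the field.

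\textbf{Step 3: marginal field.} By the CFM identity (Proposition~\ref{prop:app_cfm_identity}), the marginal field is the conditional expectation $u_\tau^\star(y\mid p)=\E[u_{\tau\mid1}(y\mid\achunk)\mid\achunk_\tau=y,p]$. Because $u_{\tau\mid1}$ is affine in $\achunk$ by \eqref{eq:app_affine_endpoint_field}, the expectation passes through:
\[
u_\tau^\star(y\mid p)=c_\tau(y)+B_\tau K_\tau y+B_\tau L_\tau m(p).
\]
This is \eqref{eq:app_affine_marginal_field} with $\widetilde c_\tau(y)=c_\tau(y)+B_\tau K_\tau y$ and $\widetilde B_\tau=B_\tau L_\tau$.

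\textbf{Step 4: rank count.} Fix the context $(y,\tau)$ and coordinate $k$. Then
\[
u^\star_{\tau,k}=\widetilde c_{\tau,k}(y)+\sum_{\ell}[\widetilde B_\tau]_{k\ell}\,\langle f_\ell^\star(\reldelta),g_\ell^\star(\anchorlatent)\rangle_{r_\ell}.
\]
The constant term is a rank-one bilinear product, realized by the feature pair $(1,\widetilde c_{\tau,k}(y))$. Each nonzero summand contributes the $r_\ell$ features $f_\ell^\star$ together with the scaled features $[\widetilde B_\tau]_{k\ell}\,g_\ell^\star$. Concatenating these feature vectors gives a realization of rank at most $1+\sum_{\ell:[\widetilde B_\tau]_{k\ell}\neq0}r_\ell$, which is \eqref{eq:app_affine_rank_bound}. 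If $\widetilde B_\tau$ is diagonal, only $\ell=k$ survives, so the rank is at most $r_k+1$.

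\textbf{Main obstacle.} The delicate point is Step 2: confirming that the affine path makes $(\achunk,\achunk_\tau)$ jointly Gaussian with $p$-independent covariances, and handling the degenerate endpoint $\tau=1$. Everything after that is linear bookkeeping.
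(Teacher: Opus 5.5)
Your proposal is correct and takes the same route as the paper's proof. The paper also treats the endpoint law and the affine Gaussian path as a linear-Gaussian model, so the posterior mean is affine in $y$ and $m(p)$; it then passes the conditional expectation through the affine conditional field, and realizes each output coordinate by concatenating the scaled bilinear features of the nonzero terms plus one constant feature. Your version makes explicit what the paper leaves implicit: the closed forms $K_\tau=\alpha_\tau\Sigma_1S_\tau^{-1}$ and $L_\tau=I-\alpha_\tau^2\Sigma_1S_\tau^{-1}$ for the isotropic path you chose, and the invertibility of $S_\tau$ needed away from the degenerate endpoint $\tau=1$.
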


\begin{proof}
\Cref{eq:app_affine_gaussian_endpoint} and the affine Gaussian path form a linear-Gaussian model, so the posterior mean of the endpoint is affine in the observed path state and in the prior mean, yielding \Cref{eq:app_affine_gaussian_posterior_mean}.
Taking the conditional expectation of \Cref{eq:app_affine_endpoint_field} gives \Cref{eq:app_affine_marginal_field}.
For coordinate $k$, expand
$[\widetilde B_\tau m(p)]_k$ over the endpoint coordinates, concatenate the scaled bilinear features for every nonzero term, and add one constant feature for
$\widetilde c_{\tau,k}(y)$.
The resulting inner product has the rank in \Cref{eq:app_affine_rank_bound}.
\end{proof}

\subsubsection{When do the remaining assumptions hold?}
\label{app:remaining_sufficient_conditions}

\paragraph{Representation sufficiency.}
Assumption~\ref{ass:app_policy_sufficiency} holds when $\obsenc(X)$ is a sufficient statistic for expert behavior: raw conditions mapped to the same encoded query must have the same expert action distribution.
The representation information loss in \Cref{eq:app_representation_information_loss} measures violations at the action distribution.

\paragraph{Combinatorial coverage and feature excitation.}
These are properties of the anchor--residual problems constructed by $(\obsenc,\rhotrain,\rhotest)$.
The train-time policy must expose the three product blocks required by Assumption~\ref{ass:app_coverage} and sufficiently excite every retained residual-side and anchor-side feature direction in Assumption~\ref{ass:app_structure}.
The test-time policy must represent an evaluation query using factors covered by those training blocks.
The learned shortlist, non-collapse regularization, and compact representation are mechanisms intended to make these properties plausible; they do not certify them.

\paragraph{Flow-context coverage.}
A simple sufficient condition follows when training and evaluation use the same endpoint-conditioned probability-path kernel.
Let
\begin{equation}
    \overline q_s(\mathrm d\mathbf a)
    :=
    \int
    \expertdist(\mathrm d\mathbf a\mid x)
    P_s^X(\mathrm dx)
    \label{eq:app_aggregate_endpoint_law}
\end{equation}
be the aggregate expert endpoint law.
If
\begin{equation}
    \overline q_{\Test}
    \preceq_{\kappa_A}
    \overline q_{\Train},
    \label{eq:app_endpoint_coverage}
\end{equation}
then the common path kernel preserves this domination and implies
\begin{equation}
    \nu_{\Test}
    \preceq_{\kappa_A}
    \nu_{\Train}.
    \label{eq:app_endpoint_implies_context_coverage}
\end{equation}
Indeed, for every measurable context set $B$, integrating the same conditional probability
$\Pr(C\in B\mid\achunk=\mathbf a)$ against \Cref{eq:app_endpoint_coverage} gives \Cref{eq:app_endpoint_implies_context_coverage}.
This is sufficient for \Cref{eq:app_flow_context_coverage}, but it is not implied merely by bounded actions or by overlapping Gaussian support.

\paragraph{ODE regularity.}
Proposition~\ref{prop:app_ode_stability} uses standard ODE conditions: the expert marginal vector field must realize the prescribed path, and the learned vector field must be uniformly Lipschitz in the flow state over the relevant domain.
These assumptions concern stable flow integration and are separate from the combinatorial-extrapolation assumptions on the anchor--residual coordinates.

\clearpage
\section{Retrieval Policy Method Details}
\label{app:anchor_policy}

\begin{figure}[H]
\centering

% ============================================================
% Algorithm 1
% ============================================================
\begin{minipage}[t]{0.485\linewidth}
\captionsetup{type=algorithm}

\hrule height 0.8pt
\vspace{2pt}

\caption{Training}
\label{alg:bfp_training}

\vspace{-3pt}
\hrule height 0.4pt
\vspace{4pt}

\footnotesize
\begin{algorithmic}[1]
\Require $\DemoData$; encoder $\obsenc$; predictor $v_\theta$; scorer $s_\psi$
\Repeat
    \State sample train queries $(\state,\achunk)$ and build $\mathcal C(\state)$
    \State compute $p_\psi$ by \eqref{eq:anchor_policy}
    \State update $\theta$ with \eqref{eq:weighted_predictor_loss}
    \State update EMA predictor $\bar\theta$
    \If{retrieval-update step}
        \State sample held-out queries and rebuild $\mathcal C(\state)$
        \State evaluate candidate losses with $\bar\theta$
        \State update $\psi$ with \eqref{eq:retrieval_objective}
    \EndIf
\Until{convergence}
\end{algorithmic}

\vspace{3pt}
\hrule height 0.4pt
\end{minipage}
\hfill
% ============================================================
% Algorithm 2
% ============================================================
\begin{minipage}[t]{0.485\linewidth}
\captionsetup{type=algorithm}

\hrule height 0.8pt
\vspace{2pt}

\caption{Inference}
\label{alg:bfp_inference}

\vspace{-3pt}
\hrule height 0.4pt
\vspace{4pt}

\footnotesize
\begin{algorithmic}[1]
\Require query $\state$; $\DemoData$; $\obsenc,v_\theta,s_\psi$
\State retrieve demonstration $j^\star$ using \eqref{eq:test_anchor_goal_delta}
\State build the phase-compatible $\mathcal C(\state)$ within $j^*$
\State compute anchor weights $\omega_m(\state)$ from the scorer
\State sample $M\sim\mathrm{Cat}(\omega(\state))$
\State form $\anchorlatent_M$ and $\reldelta_M$
\State sample $\flowstate_0\sim p_0$
\State integrate
    $\dot \flowstate_\tau =
    v_\theta(\flowstate_\flowtime,\flowtime\mid\reldelta_M,\anchorlatent_M)$
\State set $\achunk=\flowstate_1$ and execute its first $T_a$ actions
\State re-observe and repeat
\end{algorithmic}

\vspace{3pt}
\hrule height 0.4pt
\end{minipage}

\end{figure}

\begin{table}[t]
    \centering
    \small
    \setlength{\tabcolsep}{6pt}
    \caption{\textbf{Key hyperparameters for simulation experiments.}
    Anchor parameters apply only to BFP.}
    \label{tab:hyperparameters}
    \begin{tabular}{@{}ll@{}}
        \toprule
        Hyperparameter & Value \\
        \midrule
        Optimizer & Adam, $\beta=(0.9,0.999)$ \\
        Learning rate (BFP predictor/scorer) & $10^{-4}$ \\
        Batch size & 32 queries \\
        Gradient clipping & Global norm 1.0 \\
        EMA power / maximum decay & $0.75 / 0.9999$ \\
        Training steps &
            120k (Relocate-D0), 420k (Relocate-D1) \\
        & 300k (remaining tasks) \\
        ODE solver / integration steps & Euler / 10 \\
        Action horizon / execution horizon &
            MuJoCo: $1/1$; ManiSkill: $16/8$ \\
        Bilinear rank & 32; ablations: 128, 512 \\
        \midrule
        Anchor demonstrations $J$ & $\{1,4\}$ \\
        Frames per demonstration $M$ & 16 \\
        Phase window $w$ & $\{0.25,0.5\}$ \\
        Scorer update interval & 5 predictor steps \\
        Maximum predictor weight floor & 0.5 \\
        \midrule
        Training seeds & 3 \\
        Evaluation conditions per seed & 100 ID + 100 OOD \\
        \bottomrule
    \end{tabular}
\end{table}

This section specifies the retrieval policy used by \ours{}, that includes the specific phase-conditioned candidate set algorithm, scorer architectures, and predictor--scorer updates.

\paragraph{Phase-conditioned Candidate Set $\mathcal{C}(\state)$.}
For query $(x_t,A_t)$ from trajectory $i$ at time $t$, let $f_t$ denote its
phase, $\mathrm{seg}_j(f)=[s_j(f),e_j(f)]$ the phase-$f$ segment of
demonstration $j$, and $\mathrm{el}_t=t-s_i(f_t)$ the elapsed time within
the current phase.
We first select $J$ anchor demonstrations that reach phase $f_t$, uniformly sampled phase-compatible demonstrations, excluding
trajectory $i$.

For each selected demonstration $j$, we align by elapsed phase time and define
\begin{equation}
    \rho_j
    =
    s_j(f_t)+\min\{\mathrm{el}_t,\ell_j(f_t)-1\},
    \qquad
    W_j
    =
    [\rho_j-h_j,\rho_j+h_j]\cap\mathrm{seg}_j(f_t),
    \qquad
    h_j=\left\lfloor w(\ell_j(f_t)-1)\right\rfloor .
\end{equation}
We sample $M$ frames from each $W_j$, yielding
$\mathcal C(x_t)=\{x^\dagger_{tm}\}_{m=1}^{K}$ with
$K=JM$.
\label{app:eset_v2_scorer}

\paragraph{Scorer Architecture $s_\psi$.}
For candidate $\anc{\state}_{tm}\in\mathcal C(\state_t)$, let
\[
    \anchorlatent_{tm}=\obsenc(\anc{\state}_{tm}),
    \qquad
    \reldelta_{tm}=\obsenc(\state_t)-\anchorlatent_{tm}.
\]
MLP scores each candidate from features
$
[\reldelta_{tm}/\sigma,\,
  \reldelta_{tm}/\|\reldelta_{tm}\|_2,\,
  \log\|\reldelta_{tm}\|_2,\,
  \anchorlatent_{tm}]
$.
The resulting responsibility is
\begin{equation}
    q_{tm}
    =
    p_\psi(m\mid x_t,\mathcal C(x_t))
    =
    \operatorname{softmax}_m
    \left(
        \frac{s_\psi(\reldelta_{tm},\anchorlatent_{tm})}{\tau_s}
    \right),
    \label{eq:app_anchor_responsibility}
\end{equation}
where $\tau_s$ is learned and initialized to $1$.

\paragraph{Predictor update.}
We sample predictor queries from $\mathcal B\cup\mathcal C$, construct
$\mathcal C(\state_i)$, and compute the responsibilities in
\Cref{eq:app_anchor_responsibility}.
With these responsibilities treated as fixed, the predictor minimizes
\begin{equation}
    \mathcal L_\theta
    =
    \frac{1}{|\mathcal B_\theta|}
    \sum_{i\in\mathcal B_\theta}
    \sum_{m=1}^{K}
    \sg[w_{im}]\,
    \ell_{\mathrm{BFP}}
    (\theta;\state_i,\anc{\state}_{im},\achunk_i),
    \label{eq:app_predictor_update}
\end{equation}
where $w_{im}$ denotes the responsibility after applying the scheduled
predictor-weight floor.

% TODO: Insert the exact map q_{im} -> w_{im}.
% alpha_max = 0.5, ramped between 15% and 50% of training.

\paragraph{Scorer update.}
Every $K_\theta=5$ predictor updates, we draw a fresh batch
$\mathcal B_\psi$ of $32$ queries from the scorer pool $\mathcal A$.
For each query and candidate, we evaluate the BFP loss using an
exponential-moving-average predictor $\bar\theta$.
We average over $R_\psi=4$ flow contexts,
\begin{equation}
    \bar\ell_{im}
    =
    \frac{1}{R_\psi}
    \sum_{r=1}^{R_\psi}
    \ell_{\mathrm{BFP}}^{(r)}
    (\bar\theta;\state_i,\anc{\state}_{im},\achunk_i),
    \label{eq:app_candidate_loss}
\end{equation}
using the same sampled flow contexts across candidates of a query.
This shared sampling makes candidate losses directly comparable.

Because the absolute CFM-loss scale varies across queries, candidate losses
are robustly normalized within each shortlist:
\begin{equation}
    \widetilde\ell_{im}
    =
    \frac{
        \bar\ell_{im}
        -
        \mathrm{med}_{n}(\bar\ell_{in})
    }{
        \mathrm{MAD}_{n}(\bar\ell_{in})
        +\epsilon
    }.
    \label{eq:app_robust_candidate_loss}
\end{equation}
The scorer minimizes
\begin{equation}
    \mathcal L_\psi
    =
    \frac{1}{|\mathcal B_\psi|}
    \sum_{i\in\mathcal B_\psi}
    \sum_{m=1}^{K}
    q_{im}\,
    \sg[\widetilde\ell_{im}]
    +
    \lambda_{\rm nc}\mathcal R_{\rm nc},
    \label{eq:app_scorer_update}
\end{equation}
so candidates with lower held-out BFP loss receive higher probability.

To prevent collapse onto a small number of candidates, we use an entropy
hinge
\begin{equation}
    \mathcal R_{\rm nc}
    =
    \frac{1}{|\mathcal B_\psi|}
    \sum_{i\in\mathcal B_\psi}
    \left[
        \eta-H(q_i)
    \right]_+,
    \qquad
    \eta=0.6\log K,
    \qquad
    \lambda_{\rm nc}=1.
    \label{eq:app_noncollapse}
\end{equation}
Predictor and scorer parameters are optimized separately, and gradients are
not propagated through the responsibilities in
\Cref{eq:app_predictor_update} or through candidate losses in
\Cref{eq:app_scorer_update}.

\clearpage
\section{Numerical Example: Distributional Extrapolation}
\label{app:toy}

\begin{figure}[h]
    \centering
    \includegraphics[
        width=\textwidth,
        trim={0 0 0 0},
        clip
    ]{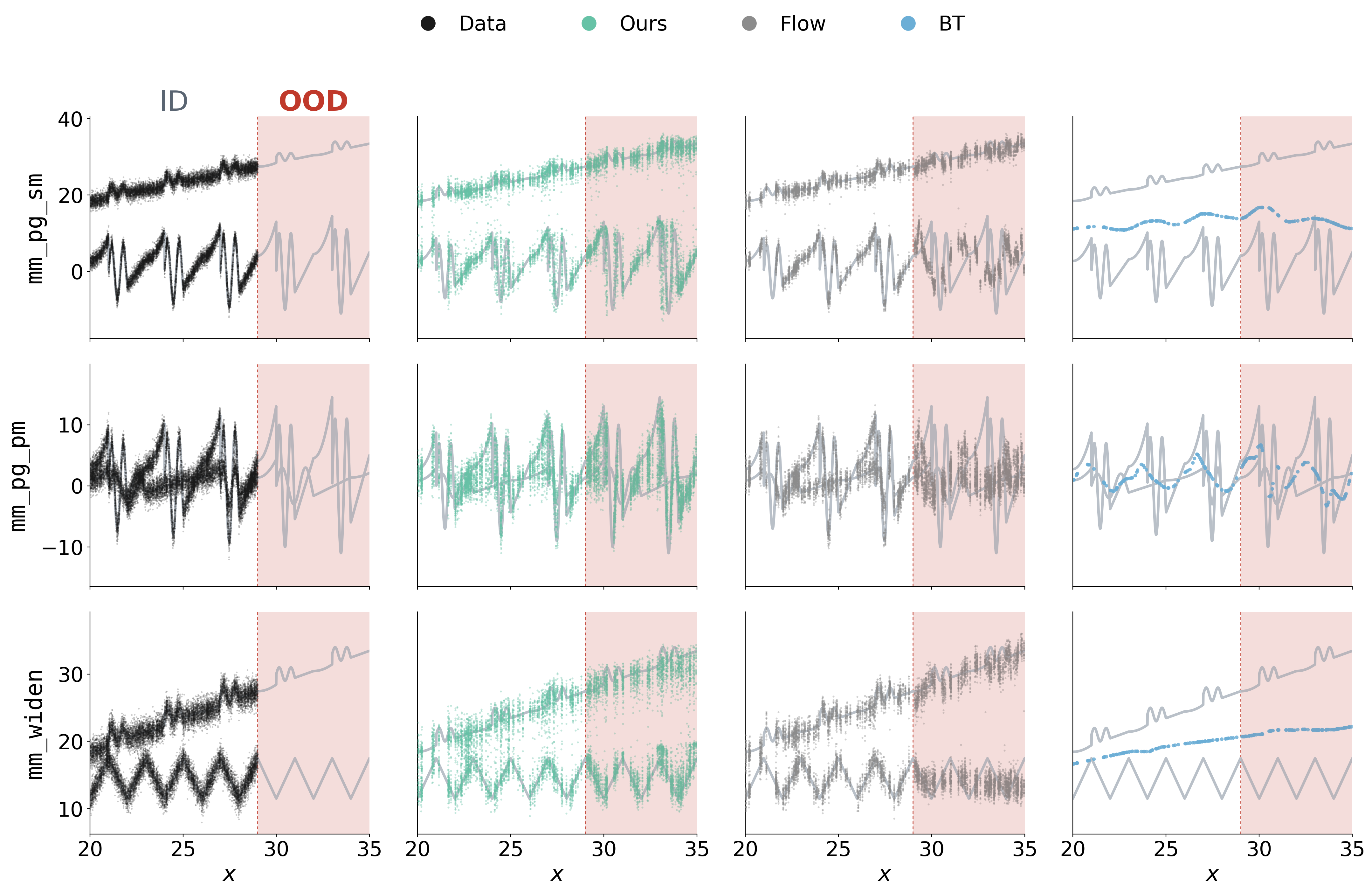}
    \caption{
        \textbf{Distributional extrapolation sample visualization.}
        Rows correspond to different tasks: \texttt{mm\_pg\_sm}, \texttt{mm\_pg\_pm}, and \texttt{mm\_widen}, while columns show the samples of training data (gray), \ours (red), \flow{} (purple), and \brp{} (orange).
    }
    \label{fig:toy_all_generators}
\end{figure}

\begin{table}[t]
    \centering
    \scriptsize
    \setlength{\tabcolsep}{5pt}
    \renewcommand{\arraystretch}{1.05}
    \begin{tabular*}{\columnwidth}{
    @{\extracolsep{\fill}}lcccc@{}
}
        \toprule
        &
        \multicolumn{2}{c}{\(\mathrm{NW}_2 \downarrow\)} &
        \multicolumn{2}{c}{Balance \(\downarrow\)} \\
        \cmidrule(lr){2-3}
        \cmidrule(l){4-5}
        Method & ID & OOD & ID & OOD \\
        \midrule

        \multicolumn{5}{@{}l}{\texttt{mm\_pg\_sm}} \\
        \emph{Oracle}
            & \(0.10{\pm}0.01\)
            & \(0.11{\pm}0.01\)
            & \(0.02{\pm}0.00\)
            & \(0.02{\pm}0.00\) \\
        Flow
            & \(0.23{\pm}0.01\)
            & \(0.30{\pm}0.01\)
            & \(0.09{\pm}0.00\)
            & \(0.07{\pm}0.01\) \\
        \brp{}
            & \(0.51{\pm}0.00\)
            & \(0.53{\pm}0.00\)
            & \(0.50{\pm}0.00\)
            & \(0.50{\pm}0.00\) \\
        \ours{}
            & \(\mathbf{0.15{\pm}0.01}\)
            & \(\mathbf{0.17{\pm}0.00}\)
            & \(\mathbf{0.03{\pm}0.00}\)
            & \(\mathbf{0.03{\pm}0.01}\) \\

        \addlinespace[2pt]
        \multicolumn{5}{@{}l}{\texttt{mm\_pg\_pm}} \\
        \emph{Oracle}
            & \(0.07{\pm}0.00\)
            & \(0.06{\pm}0.00\)
            & \(0.02{\pm}0.00\)
            & \(0.02{\pm}0.00\) \\
        Flow
            & \(\mathbf{0.25{\pm}0.00}\)
            & \(0.97{\pm}0.01\)
            & \(0.09{\pm}0.00\)
            & \(0.33{\pm}0.00\) \\
        \brp{}
            & \(1.13{\pm}0.05\)
            & \(1.24{\pm}0.08\)
            & \(0.50{\pm}0.00\)
            & \(0.50{\pm}0.00\) \\
        \ours{}
            & \(0.43{\pm}0.01\)
            & \(\mathbf{0.58{\pm}0.03}\)
            & \(\mathbf{0.07{\pm}0.01}\)
            & \(\mathbf{0.20{\pm}0.00}\) \\

        \addlinespace[2pt]
        \multicolumn{5}{@{}l}{\texttt{mm\_widen}} \\
        \emph{Oracle}
            & \(0.07{\pm}0.00\)
            & \(0.09{\pm}0.01\)
            & \(0.02{\pm}0.00\)
            & \(0.02{\pm}0.00\) \\
        Flow
            & \(0.19{\pm}0.00\)
            & \(0.25{\pm}0.00\)
            & \(0.09{\pm}0.00\)
            & \(\mathbf{0.08{\pm}0.00}\) \\
        \brp{}
            & \(0.53{\pm}0.00\)
            & \(0.51{\pm}0.00\)
            & \(0.50{\pm}0.00\)
            & \(0.50{\pm}0.00\) \\
        \ours{}
            & \(\mathbf{0.10{\pm}0.01}\)
            & \(\mathbf{0.21{\pm}0.01}\)
            & \(\mathbf{0.03{\pm}0.00}\)
            & \(\mathbf{0.08{\pm}0.00}\) \\

        \bottomrule
    \end{tabular*}
    \caption{
        \textbf{Controlled distributional extrapolation.}
        Mean \(\pm\) standard deviation over three seeds.
        Bold denotes the best learned method; Oracle is the finite-sampling floor.
    }
    \label{tab:toy}
\end{table}

\begin{table}[t]
    \centering
    \small
    \setlength{\tabcolsep}{6pt}
    \begin{tabular}{lcc}
        \toprule
        Problem & \(f_1\) & \(f_2\) \\
        \midrule
        \texttt{mm\_pg\_sm}
            & \(f_{\mathrm{pg}}\)
            & \(f_{\mathrm{sm}}\) \\
        \texttt{mm\_pg\_pm}
            & \(f_{\mathrm{pg}}\)
            & \(f_{\mathrm{pm}}\) \\
        \texttt{mm\_widen}
            & \(f_{\mathrm{sm}}\)
            & \(f_{\mathrm{sbl}}\) \\
        \bottomrule
    \end{tabular}
    \caption{
        \textbf{Distributional extrapolation problem setting.}
        Each problem combines two mode functions from
        \eqref{eq:toy_mode_functions}.
    }
    \label{tab:toy_generators}
\end{table}

This section specifies the controlled experiment summarized in \Cref{fig:bfp_regression} and \Cref{tab:toy}.
The figure in the main text reports \texttt{mm\_pg\_sm}, while we report two additional problems.
The experiment separates deterministic bilinear extrapolation, conditional generation without a relational extrapolation rule, and their combination in BFP.

\subsection{Expert Conditional Distribution}
\label{app:toy_problem_definition}
We define the expert conditional distribution \(\expertdist(\cdot\mid x)\) using pairs of analytical mode functions.
For \(T>0\), let
\begin{equation}
    r_T(x)
    :=
    x-T\left\lfloor \frac{x}{T}\right\rfloor
    \in[0,T)
\end{equation}
denote the remainder of \(x\) modulo \(T\).
The shared base profile \(g:[0,3)\rightarrow\R\) is
\begin{equation}
    g(u)
    :=
    \begin{cases}
        \sin(10u),
            & 0\le u<1,\\
        \sin(10)+(u-1),
            & 1\le u<2,\\
        \sin(10)+1+(u-2)^2,
            & 2\le u<3.
    \end{cases}
    \label{eq:toy_base_profile}
\end{equation}

We additionally define the time warp
\begin{equation}
    w(x)
    :=
    \begin{cases}
        r_9(x),
            & r_9(x)<3,\\
        \bigl(r_9(x)-3\bigr)/2,
            & r_9(x)\ge 3,
    \end{cases}
    \label{eq:toy_time_warp}
\end{equation}
and the triangular wave
\begin{equation}
    \operatorname{tri}(x)
    :=
    \begin{cases}
        r_1(x),
            & \lfloor x\rfloor \text{ even},\\
        1-r_1(x),
            & \lfloor x\rfloor \text{ odd}.
    \end{cases}
    \label{eq:toy_triangular_wave}
\end{equation}
The four mode functions are
\begin{align}
    f_{\mathrm{pg}}(x)
        &:=
        \left\lfloor \frac{x}{3}\right\rfloor g(r_3(x)),
        &
    f_{\mathrm{sm}}(x)
        &:=
        3\left\lfloor \frac{x}{3}\right\rfloor+g(r_3(x)),
        \nonumber\\
    f_{\mathrm{pm}}(x)
        &:=
        \left\lfloor \frac{x}{9}\right\rfloor g(w(x)),
        &
    f_{\mathrm{sbl}}(x)
        &:=
        6\operatorname{tri}(x)+11.5.
    \label{eq:toy_mode_functions}
\end{align}

For each problem, the expert conditional distribution is
\begin{equation}
    \expertdist(y\mid x)
    =
    \frac12\mathcal N\!\left(f_1(x),\sigma^2\right)
    +
    \frac12\mathcal N\!\left(f_2(x),\sigma^2\right),
    \qquad
    \sigma=1,
    \label{eq:toy_conditional_law}
\end{equation}
where \(f_1\) and \(f_2\) are selected according to
\Cref{tab:toy_generators}.

The training and OOD condition distributions are
\begin{equation}
    P_{\mathrm{tr}}^X
    =
    \operatorname{Unif}([20,29]),
    \qquad
    P_{\mathrm{te}}^X
    =
    \operatorname{Unif}([29,35]).
    \label{eq:toy_condition_distributions}
\end{equation}
Thus, evaluation changes only the support of the condition while preserving the same expert conditional law.

\subsection{Experimental Details}
\label{app:toy_evaluation}

\textbf{Setting.}
We sample \(2,000\) training conditions from \(P_{\mathrm{tr}}^X\) and 10 independent targets from \(\expertdist(\cdot\mid x)\) for each condition, yielding \(20,000\) training data.
Evaluation uses \(100\) ID conditions from \(P_{\mathrm{tr}}^X\) and \(100\) OOD conditions from \(P_{\mathrm{te}}^X\).
We compare a standard Flow model, \brp{}, and \ours{}.
All models use depth-\(3\), width-\(1{,}000\) MLPs and are trained for 300 epochs, 3 random seeds.
Flow and \ours{} use \(50\) Euler steps for generation.
\brp{} and \ours{} use $r=16$ for rank.

\textbf{Metric.}
For each query \(x\), we draw \(K=500\) samples and form the empirical policy law \(\widehat q_{\theta,K}(\cdot\mid x)\).
Our primary metric is the Wasserstein distance to the true expert conditional, normalized by the separation between its two modes:
\begin{equation}
    \operatorname{NW}_2(x)
    :=
    \frac{
        W_2\!\left(
            \widehat q_{\theta,K}(\cdot\mid x),
            \expertdist(\cdot\mid x)
        \right)
    }{
        |f_1(x)-f_2(x)|
    }.
    \label{eq:toy_normalized_w2}
\end{equation}
We also report mode-balance error.
Let
\[
    f_{\mathrm{low}}(x):=\min\{f_1(x),f_2(x)\},
    \qquad
    f_{\mathrm{high}}(x):=\max\{f_1(x),f_2(x)\},
\]
and define the empirical probability assigned to the lower mode as
\begin{equation}
    \widehat p_{\mathrm{low}}(x)
    :=
    \frac{1}{K}
    \sum_{k=1}^{K}
    \mathbbm{1}\!\left[
        |\widehat y_k(x)-f_{\mathrm{low}}(x)|
        \le
        |\widehat y_k(x)-f_{\mathrm{high}}(x)|
    \right].
    \label{eq:toy_mode_assignment}
\end{equation}
The mode-balance error is
\begin{equation}
    \operatorname{Bal}(x)
    :=
    \left|
        \widehat p_{\mathrm{low}}(x)-\frac12
    \right|.
    \label{eq:toy_balance}
\end{equation}
Because both metrics have a finite-sample floor, we measure the \textit{oracle row} by applying the same evaluation to samples drawn directly from \(\expertdist(\cdot\mid x)\).

\textbf{Result.}
\ours{} achieves the lowest OOD \(\operatorname{NW}_2\) on all three problems, with a mean of \(0.321\) compared with \(0.506\) for Flow, a \(36.6\%\) reduction.
It also yields better OOD mode balance on two problems and ties Flow on the third.
In contrast, \brp{} cannot represent both modes and performs worst in distributional error.
These results show that bilinear structure provides the extrapolation rule, while the generative flow head is necessary to preserve the conditional distribution.
We visualize the full samples at \Cref{fig:toy_all_generators}.

%%%%%%%%%%%%%%%%%%%%%%%%%%%%%%%%%%%%%%%%%%%%
% Implementation Details for Reproducibility
%%%%%%%%%%%%%%%%%%%%%%%%%%%%%%%%%%%%%%%%%%%%
\clearpage
\section{Baseline Instantiation and Ablation Results}
\label{app:sec:baseline_study}

\begin{figure}[h]
    \centering

    % (a) Encoder sensitivity table
    \begin{subfigure}{0.82\linewidth}
        \centering
        \normalsize
        \setlength{\tabcolsep}{5pt}
        \renewcommand{\arraystretch}{1.08}
        \begin{tabular}{@{}lcccccc@{}}
            \toprule
            & \multicolumn{2}{c}{\flow{}}
            & \multicolumn{2}{c}{CTA}
            & \multicolumn{2}{c}{\ours{}} \\
            \cmidrule(lr){2-3}
            \cmidrule(lr){4-5}
            \cmidrule(l){6-7}
            Encoder
            & ID & OOD
            & ID & OOD
            & ID & OOD \\
            \midrule
        
            Scratch
            & 96.9 & 36.9
            & 85.4 & 9.2
            & 59.4 & \textbf{45.0}
            \\
        
            R3M-34 (frozen)
            & 96.3 & 30.2
            & 50.4 & 6.8
            & 99.4 & \textbf{76.4}
            \\
        
            \bottomrule
        \end{tabular}
        \caption{\textbf{Is PVR sufficient for extrapolation?}}
        \label{fig:encoder_sensitivity}
    \end{subfigure}

    \vspace{0.8em}

    % (b) EquiDP sensitivity figure
    \begin{subfigure}{\linewidth}
        \centering
        \includegraphics[width=0.8\linewidth]{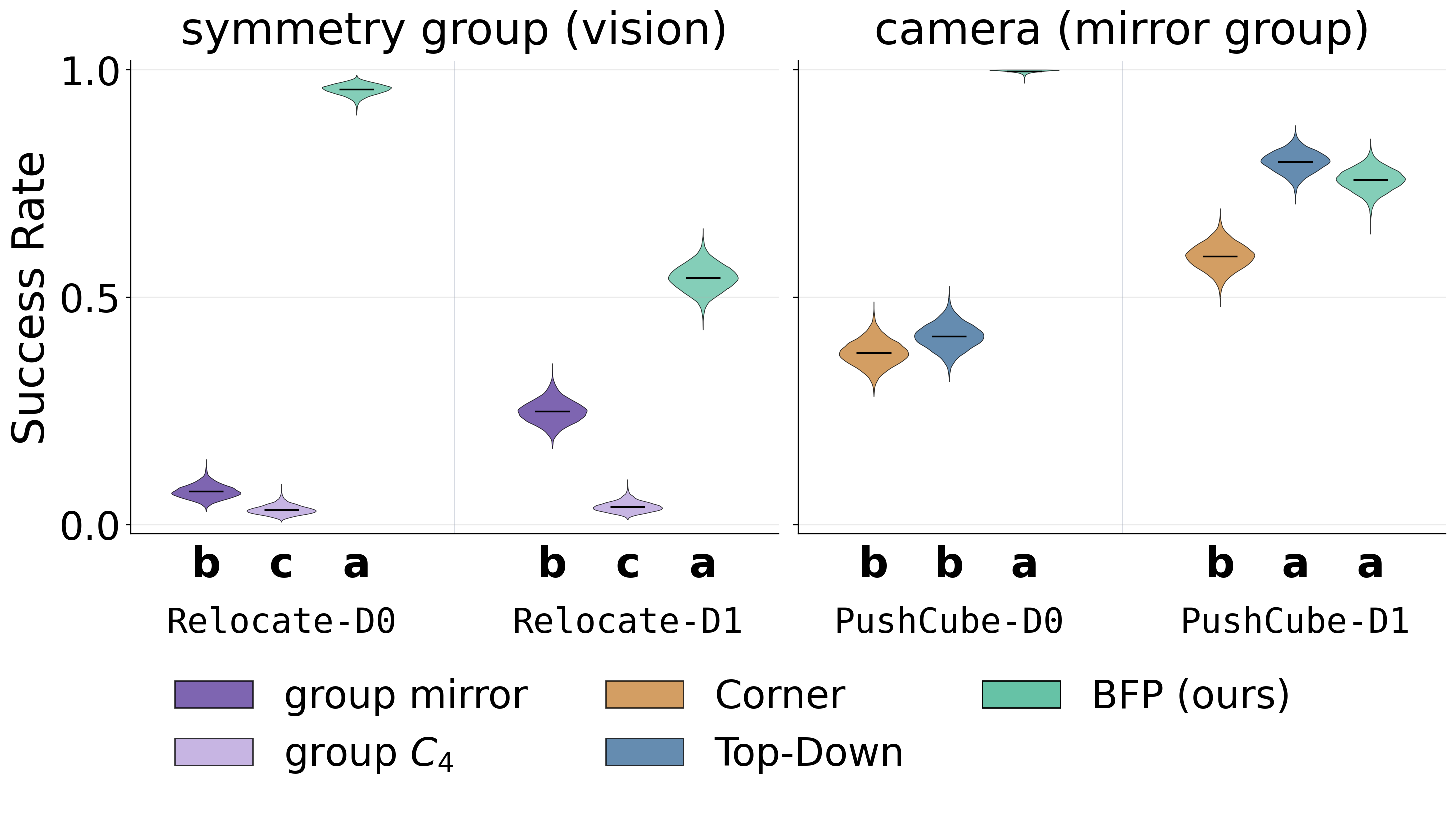}
        \caption{\textbf{\ours{} wins hand-specified Extrapolation Rule.}}
        \label{fig:equidp_sensitivity}
    \end{subfigure}

    \caption{
        \textbf{Sensitivity to representation and prescribed extrapolation rules.}
        (a) ID/OOD success rate under visuomotor, averaged over five tasks.
        Scratch uses a jointly trained ResNet-34 for \flow{} and \ours{} and IMPALA encoder for CTA.
        Bold denotes the best-performing method within each encoder setting and split.
        (b) OOD sensitivity of \edp{} to the prescribed symmetry and camera viewpoint.
    }
    \label{fig:baseline_sensitivity}
\end{figure}

\textbf{Overview.}
Our main result (\Cref{fig:visuomotor_ood_summary}) uses the baseline instantiations we found most appropriate for unseen-goal generalization.
Because several methods were not originally designed for this setting, we validate the implementation choices that affect OOD performance.
These studies also show that (i) BFP's gains are not explained by pretrained visual representations alone, and (ii) learning the extrapolation rule from demonstrations can be preferable to prescribing it a priori.

\textbf{Setup.}
We match the demonstration data, evaluation episodes, action representation, prediction and execution horizons, and policy architecture whenever compatible with the original method.
We retain method-specific components when they define the baseline's generalization mechanism, such as the equivariant structure of \edp{} or the
learned representation of CTA.
\Cref{tab:baseline_design} summarizes the resulting comparison and the main design axes separating each baseline from \ours{}.

\subsection{Implementation Details}
\label{app:impl_details}

\textbf{Flow Policy.}
\flow{} uses the same frozen R3M representation, conditional flow-matching objective, action
parameterization, and policy backbone as \ours{}, but directly conditions on the observation and
goal without an anchor--residual extrapolation rule.

\textbf{Bilinear Regression Policy.}
\brp{} uses the same frozen R3M representation and policy backbone as \ours{}, while retaining the
synchronized anchor policy of bilinear transduction.
It replaces conditional flow matching with deterministic squared-loss action regression.

\textbf{DARP.}
We retain DARP's local anchor retrieval and residual-based extrapolation rule, but use conditional flow matching for a matched generative comparison.
We use the same frozen R3M representation and policy backbone as \ours{} and tune its method-specific retrieval hyperparameters.

\textbf{CTA.}
We retain CTA's native hierarchical policy and learned IMPALA representation, which jointly define
its temporal-distance analogy mechanism.
Its low-level policy predicts primitive actions, so we evaluate CTA with $T_p=1$ in the main
comparison.
For Tp=16 task, we use Q-chunking.

\textbf{Equivariant Policy.}
\edp{} uses conditional flow matching while retaining its native equivariant architecture and task-specific geometric group action.
We use the common benchmark camera in the main comparison; the effect of symmetry and viewpoint
choice is studied below

\subsection{Additional Baseline Studies}
\label{app:baseline_sensitivity}

\textbf{Is pretrained visual representation sufficient?}
We test whether BFP's visuomotor gains can be explained by pretrained visual representations (PVRs) alone.
As shown in \Cref{fig:encoder_sensitivity}, frozen R3M does not improve \flow{} over a jointly trained ResNet-34 ($30.2\%$ vs.\ $36.9\%$ OOD), and degrades CTA relative to its from-scratch IMPALA encoder ($6.8\%$ vs.\ $9.2\%$).
In contrast, R3M substantially improves \ours{} from $45.0\%$ to $76.4\%$ OOD success.
Thus, PVR is not itself sufficient for extrapolation; rather, its benefit in \ours{} appears to come from providing a representation in which anchor--residual differences form a useful coordinate system for bilinear extrapolation.

\textbf{How sensitive is a prescribed extrapolation rule?}
\Cref{fig:baseline_sensitivity} evaluates two choices required by \edp{}: the prescribed symmetry group
and the camera viewpoint.
On \texttt{Relocate-D1}, changing the geometric group reduces OOD success from $26\%$ to $6\%$, while on \texttt{PushCube-D1}, using a symmetry-aligned top-down camera improves OOD success from $59\%$ to $80\%$.
These choices are not reliably identified by ID performance.
In the same settings, \ours{} remains competitive or stronger without requiring the deployment transformation to be specified in advance.
This sensitivity highlights the value of learning the extrapolation rule from demonstrations rather than prescribing it a priori.

\begin{table}[t]
    \centering
    \normalsize
    \setlength{\tabcolsep}{4.0pt}
    \renewcommand{\arraystretch}{1.08}
    \caption{
        \textbf{Design axes of the main baselines.}
        The table separates three components of each method:
        the observation representation, the extrapolation rule used beyond demonstrated goals,
        and the prediction model used to generate actions.
        This decomposition highlights which components are matched to \ours{} and which define each baseline's extrapolation mechanism.
    }
    \label{tab:baseline_design}
    \begin{tabular}{@{}llll@{}}
        \toprule
        Method
        & Representation
        & Extrapolation Rule
        & Prediction Model
        \\
        \midrule

        \ours{}
        & PVR
        & learned anchor--residual bilinear
        & conditional flow
        \\

        \flow{}
        & PVR
        & none
        & conditional flow
        \\

        \brp{}
        & PVR
        & synchronized anchor--residual bilinear
        & deterministic regression
        \\

        \darp{}
        & PVR
        & local retrieved residual
        & conditional flow
        \\

        \edp{}
        & Unfrozen
        & prescribed geometric equivariance
        & conditional flow
        \\

        CTA
        & Unfrozen
        & learned temporal-distance analogy
        & hierarchical policy
        \\

        \bottomrule
    \end{tabular}
\end{table}

\subsection{Pre-rollout Extrapolation Diagnosis}
\label{app:extrapolation_diagnostics}

We evaluate whether the two theory-motivated diagnostics in \Cref{fig:extrapolation_diagnostics} can anticipate extrapolation error without executing the learned policy in closed loop.
To do so, we first construct a privileged measure of OOD prediction error, which is used only as the response variable for evaluating the diagnostics.

\textbf{Privileged OOD error.}
For each task, we collect 8 expert rollouts under held-out goals.
These rollouts provide OOD observation--action pairs: the observations are those induced by unseen goals, while the corresponding actions are labeled by the expert policy.
This privileged OOD information is \emph{not} provided to either diagnostic; it is used only to measure whether the diagnostic agrees with the error that would be observed if OOD expert labels were available.
For each learned checkpoint, we replay the OOD observations open loop and evaluate its flow-matching (FM) loss against the expert action targets.
The velocity-matching MSE is averaged over the fixed flow-time grid $\tau\in\{0.2,0.4,0.6,0.8,1.0\}$ and frozen Gaussian noise samples.
Averaging this loss over the evaluated OOD observations gives the checkpoint-level OOD error used in \structurereq{}; for \coveragereq{}, we retain the corresponding FM error for each individual OOD query.

\textbf{\structurereq{}: validation error for checkpoint selection.}
The diagnostic available without OOD labels is the FM error on a held-out in-distribution validation split.
We ask whether this held-out ID error can identify, among policies trained for the same task, which checkpoint will have lower privileged OOD error.
To isolate this model-selection question, checkpoints are compared only within same pair of the task and observation space.
The analysis contains five tasks under state and visuomotor observations.
We report the result across $180$ trained checkpoints in total.

Within each task, ID and OOD FM errors are independently converted to percentile ranks.
For visualization in \Cref{fig:extrapolation_diagnostics}, the within-pool ID-error ranks are pooled and divided into quintiles, and we plot the mean OOD-error percentile rank in each quintile.
This evaluation therefore asks whether held-out ID error preserves the \emph{ordering} of checkpoints that would be obtained from privileged OOD error, rather than their absolute values.

\textbf{\coveragereq{}: residual coverage for goal selection.}
The second diagnostic operates within a fixed \ours{} checkpoint and asks whether the residual induced for a particular OOD query is supported by the residuals encountered during training.
For each checkpoint, we construct a reference bank of $20{,}000$ training residuals.
A training query $\state$ is paired with an anchor $\anc{\state}$ sampled from the learned train-time anchor policy, and we store
\begin{equation}
    \reldelta
    =
    \obsenc(\state)-\obsenc(\anc{\state}).
\end{equation}
The resulting bank therefore represents the residual distribution induced by the learned anchor policy on training data.

For an OOD query, the deployment-time anchor policy selects $\anc{\state}$ and forms its query residual $\reldelta_{\mathrm{OOD}}$ in the same representation space.
We score the coverage of this residual against the training residual bank using the realism metric \citep{kynkaanniemi2019improved}; larger scores
indicate that the query residual lies more deeply inside the local support of the training residual distribution.

Within each checkpoint, OOD queries are ranked by residual coverage and divided into quintiles, while their FM errors are independently converted to within-checkpoint percentile ranks.
\Cref{fig:extrapolation_diagnostics} aggregates the final rank-$32$ checkpoint from $42$ trained policies---seven simulation settings, two observation spaces, and three seeds.
For each coverage quintile, we report the median error rank across these $42$ checkpoints.

\end{document}